\documentclass{article}

\usepackage{microtype}
\usepackage{graphicx}
\usepackage{subcaption}
\usepackage{booktabs} 

\usepackage{hyperref}

\usepackage[preprint]{icml2026}

\usepackage{amsmath}
\usepackage{amssymb}
\usepackage{mathtools}
\usepackage{amsthm}

\usepackage[capitalize,noabbrev]{cleveref}

\theoremstyle{plain}
\newtheorem{theorem}{Theorem}[section]
\newtheorem{proposition}[theorem]{Proposition}

\newtheorem{corollary}[theorem]{Corollary}
\theoremstyle{definition}

\theoremstyle{remark}

\def \setA{\mathcal{A}}

\def \setC{\mathcal{C}}

\def \setH{\mathcal{H}}

\def \setG{\mathcal{G}}

\def \setS{\mathcal{S}}
\def \setV{\mathcal{V}}
\def \setX{\mathcal{X}}

\def \setE{\mathcal{E}}

\def \ve{{\mathbf e}}

\def \vp{{\mathbf p}}

\def \vu{{\mathbf u}}
\def \vv{{\mathbf v}}

\def \vx{{\mathbf x}}
\def \vy{{\mathbf y}}
\def \vz{{\mathbf z}}

\def \vA{{\mathbf A}}

\def \vD{{\mathbf D}}
\def \vI{{\mathbf I}}
\def \vL{{\mathbf L}}
\def \vP{{\mathbf P}}
\def \vQ{{\mathbf Q}}

\DeclareMathOperator*{\argmin}{argmin} 

\usepackage[textsize=tiny]{todonotes}

\icmltitlerunning{
FairRARI: A Plug and Play Framework for Fairness-Aware PageRank
}

\begin{document}

\twocolumn[
  \icmltitle{
  FairRARI: A Plug and Play Framework for Fairness-Aware PageRank
  }



  \icmlsetsymbol{equal}{*}

  \begin{icmlauthorlist}
    \icmlauthor{Emmanouil Kariotakis}{esat}
    \icmlauthor{Aritra Konar}{esat}
  \end{icmlauthorlist}

  \icmlaffiliation{esat}{Department of Electrical Engineering (ESAT-STADIUS), KU Leuven, Kasteelpark Arenberg 10, 3001 Leuven, Belgium}

  \icmlcorrespondingauthor{Emmanouil Kariotakis}{emmanouil.kariotakis@kuleuven.be}

  \icmlkeywords{Machine Learning, ICML}

  \vskip 0.3in
]



\printAffiliationsAndNotice{}  

\begin{abstract}
PageRank (PR) is a fundamental algorithm in graph machine learning tasks. Owing to the increasing importance of algorithmic fairness, we consider the problem of computing PR vectors subject to various group-fairness criteria based on sensitive attributes of the vertices. At present, principled algorithms for this problem are lacking -  some cannot guarantee that a target fairness level is achieved, while others do not feature optimality guarantees. In order to overcome these shortcomings, we put forth a unified in-processing convex optimization framework, termed FairRARI, for tackling different group-fairness criteria in a ``plug and play'' fashion. Leveraging a variational formulation of PR, the framework computes fair PR vectors by solving a strongly convex optimization problem with fairness constraints, thereby ensuring that a target fairness level is achieved. We further introduce three different fairness criteria which can be efficiently tackled using FairRARI to compute fair PR vectors with the same asymptotic time-complexity as the original PR algorithm. Extensive experiments on real-world datasets showcase that FairRARI outperforms existing methods in terms of utility, while achieving the desired fairness levels across multiple vertex groups; thereby highlighting its effectiveness.
\end{abstract}

\section{Introduction}
Data-driven algorithms are being increasingly deployed in high-stakes decision-making systems, raising concerns about their potential social impact against certain demographic groups or individuals \cite{datta2014automated,corbett2017algorithmic,chouldechova2017fair}. Ensuring that algorithmic outcomes remain unbiased with respect to sensitive attributes has become a key challenge. The field of \emph{algorithmic fairness} addresses this by designing fairness-aware mechanisms \cite{dwork2012fairness,feldman2015certifying,chierichetti2017fair,kleinberg2018algorithmic,holstein2019improving}, encompassing notions such as group-fairness, individual-fairness, and others \cite{mehrabi2021survey}.
While substantial progress has been made in supervised and unsupervised learning \cite{friedler2019comparative,chouldechova2020snapshot,mehrabi2021survey}, principled fairness-aware algorithms are still lacking for many fundamental graph machine learning (ML) primitives,
despite recent efforts to address this gap \cite{dong2023fairness,saxena2024fairsna}. 

In particular, we focus on developing fairness-aware variants of the celebrated PageRank (PR) algorithm \cite{brin1998anatomy,langville2006google,gleich2015pagerank}. Originally designed for ranking web pages, PR has since been applied to a wide range of tasks, including link prediction \cite{liben2003link}, semi-supervised node classification \cite{berberidis2018adaptive}, community detection \cite{whang2013overlapping}, and graph representation learning \cite{gasteiger2019diffusion,bojchevski2020scaling}. Our goal is to integrate different measures of group-fairness in PR, which seeks equitable treatment across demographic groups, and is a popular notion studied in a variety of graph ML tasks \cite{kleindessner2019guarantees,rahmattalabi2019exploring,du2020fairness,dong2022edits,kariotakis2024fairness,chowdhary2024fairness},

Despite its widespread use, only recently has fairness in PR received attention. Prior studies \cite{avin2015homophily,espin2022inequality,stoica2024fairness} have shown that certain structural properties of graphs result in disparate allocation of PR scores across vertex subgroups defined by protected attributes, even when subgroups are balanced in composition or the level of  homophily is moderate. These findings have motivated the development of fairness-aware PR variants \cite{tsioutsiouliklis2021fairness,tsioutsiouliklis2022link,rahman2019fairwalk,khajehnejad2022crosswalk,wang2025fairness}.
Notwithstanding such efforts, important limitations remain:\\
{\bf (i)} At present, there is no principled optimization framework for computing fair PR vectors which are guaranteed to satisfy a target group-fairness requirement across multiple (i.e., $>2$) vertex subgroups.\\
{\bf (ii)} Prior work on fair PR has largely focused on a single notion of group-fairness introduced in \cite{tsioutsiouliklis2021fairness}, whose adequacy has not been systematically examined, leaving alternative and potentially more appropriate group-fairness criteria largely underexplored.\\
{\em Our work addresses these gaps by putting forth a unified optimization framework for enforcing multiple notions of group-fairness in PageRank, with optimality guarantees.}

\noindent \textbf{\textit{Contributions:}}
We introduce FairRARI, a plug-and-play framework for computing group-fair PR vectors for various fairness criteria with provable guarantees. Our key insight is that in the absence of fairness, the PR vector admits a variational interpretation as the unique minimizer of an unconstrained strongly convex optimization problem. This enables group-fairness criteria to be incorporated into PR in a principled manner by enforcing appropriate constraints on the PR vector. We demonstrate that the resulting constrained optimization problem can be efficiently solved via a simple yet flexible fixed-point algorithm, where each iteration consists of a classical PR update followed by projection onto the fairness constraint set.
As a result, we obtain fair PR solutions that satisfy target group-fairness requirements while remaining optimal with respect to the underlying PR objective. 
Our main contributions are:

\noindent $\bullet$ \textit{Variational formulation of PageRank.} 
We reformulate original PR as the minimizer of a strongly convex quadratic function and show its equivalence to label spreading on graphs \cite{zhou2003learning}. This perspective enables the principled incorporation of fairness constraints, yielding a new strongly convex constrained optimization problem.

\noindent $\bullet$ \textit{A unified in-processing framework with guarantees.} 
We propose FairRARI, an in-processing algorithm that provably converges to the unique solution of the fair optimization problem for any closed and convex fairness constraint set, via simple fixed-point iterations, which decouple into two distinct updates. FairRARI subsumes the $\phi$-fairness criterion of \cite{tsioutsiouliklis2021fairness} as a special case and naturally extends to the multi-group setting.

\noindent $\bullet$ \textit{New and unified group-fairness criteria.} 
We study three group-fairness notions for PR. Beyond $\phi$-sum-fairness, which can lead to degenerate solutions with vertices attaining zero PR scores, we introduce a minimum-score fairness criterion that promotes more equitable score allocation within groups. We further show how these criteria can be combined into a unified formulation, offering fine-grained control over PR score allocation.

\noindent $\bullet$ \textit{Linear-time projections.} 
Although FairRARI requires computing a projection at each iteration, we prove that for each proposed fairness criterion, the projection admits a linear-time solution. As a result, FairRARI enjoys the same asymptotic complexity as that of the original PR algorithm \cite{gleich2015pagerank}.

\noindent $\bullet$ \textit{Comparison with post-processing baseline.} 
Similar to \cite{tsioutsiouliklis2021fairness}, we analyze a post-processing approach that minimally perturbs the original PR vector to satisfy a target fairness constraint, which we use as a bound on achievable utility. 
Although the solutions of FairRARI and post-processing are not directly comparable, through an analysis of the latter we show that it can produce qualitatively coarser results than FairRARI (see Fig.~\ref{fig:networks} where it produces outcomes where a large fraction of vertices receive zero PR scores).

\noindent $\bullet$ \textit{Extensive empirical validation.} 
Experiments on real-world graphs demonstrate that FairRARI, by directly minimizing the true PR objective under fairness constraints, outperforms existing in-processing methods in terms of utility based on both PR scores and induced ranking.
Moreover, it naturally supports multiple groups and various fairness criteria.

\begin{figure}[tbp!]
\centering
\begin{subfigure}[b]{.30\linewidth}
  \centering
  \includegraphics[width=1\linewidth]{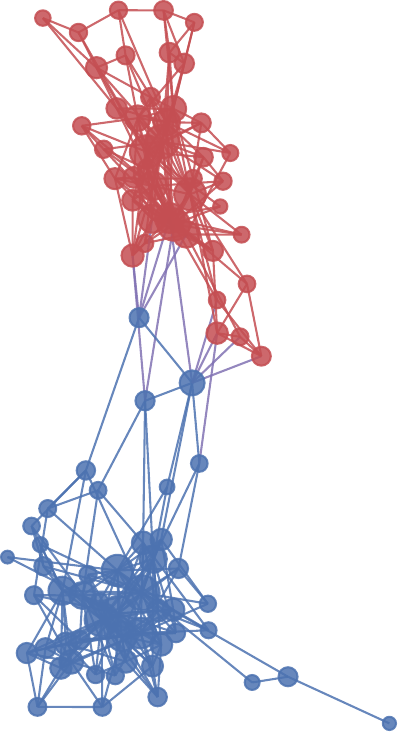}
  \caption{Original PR}
\end{subfigure}
\begin{subfigure}[b]{0.30\linewidth}
  \centering
  \includegraphics[width=1\linewidth]{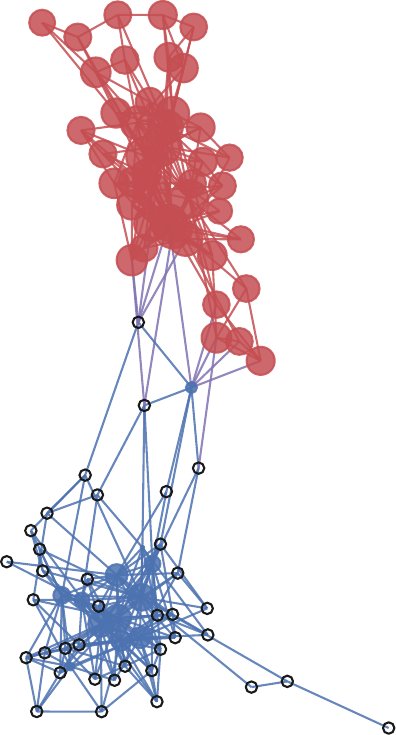}
  \caption{Post-Processing}
\end{subfigure}
\begin{subfigure}[b]{0.30\linewidth}
  \centering
  \includegraphics[width=1\linewidth]{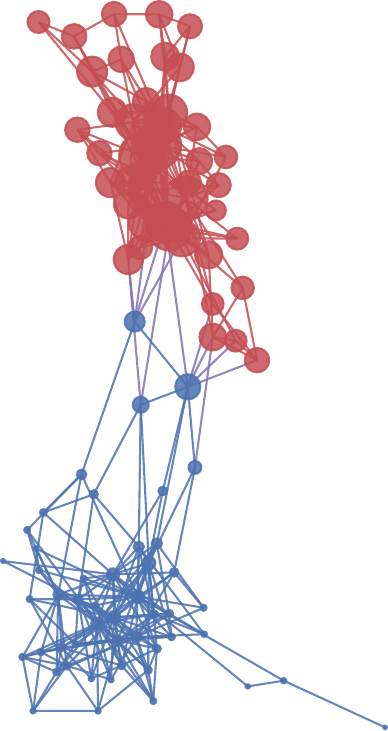}
  \caption{FairRARI}
\end{subfigure}
\caption{
Standard PageRank (PR) and fair PR vectors under \mbox{$\phi$-fairness} \cite{tsioutsiouliklis2021fairness} with two vertex groups (red/blue) on the \textsc{PolBooks} dataset. Vertex size is proportional to the PR score produced by each method. Black circles indicate vertices with zero PR score. The standard PR vector (a) assigns $\phi^\text{red}_\mathrm{o}\! =\! 0.47$ of the total PR mass to the red group. The target for the fair PR vectors is \mbox{$\phi^\text{red}\! =\! 0.9$} of total mass for the red group.  While both fair PR vectors in (b) and (c) attain this target, $37/49 \approx 75\%$ of blue vertices in (b) are set to zero, while $0\%$ of the blue vertices suffer such an undesirable outcome for FairRARI (c).
}
\label{fig:networks}
\end{figure}

\section{Related Work} \label{sec:related_work}

The PageRank (PR) algorithm \cite{brin1998anatomy} performs random walks on a graph to assign each vertex a score that reflects to its importance.
However, it is not guaranteed to generate fair outcomes, as certain structural properties of graphs can result in unequal/biased allocation of PR scores across various vertex subgroups \cite{avin2015homophily, espin2022inequality, stoica2024fairness}.
As shown in \cite{avin2015homophily}, imbalances in subgroup sizes among graph vertices, combined with varying levels of homophily (i.e., the tendency of vertices to connect with others from the same group), can lead to degree unfairness. As \citet{tsioutsiouliklis2021fairness} demonstrated, this in turn can cause PR to produce biased results.

Since PR is characterized by a teleportation vector, a probability transition matrix and the link structure, prior work in fairness-aware PR \cite{tsioutsiouliklis2021fairness,tsioutsiouliklis2022link,rahman2019fairwalk,khajehnejad2022crosswalk, wang2025fairness} can be categorized according to the following design strategies: modifying the (i) teleportation vector,  (ii) transition matrix, and (iii) graph structure. However, these approaches come with drawbacks, which we discuss next. 

Option (i) is considered in \cite{tsioutsiouliklis2021fairness}. The proposed $\mathrm{FSPR}$ (Fairness-Sensitive PR) algorithm computes a teleportation vector that provides $\phi$-fairness (i.e., the sum of PR scores of one group equals to a fraction $\phi\!\in\!(0,1)$) by minimizing its distance from the original PR vector. However, it cannot handle the entire range $(0,1)$ of $\phi$ values, but only a sub-interval determined by the graph structure. 
Option (ii) is the focus of \citet{rahman2019fairwalk,khajehnejad2022crosswalk}.
To mitigate potential group bias, the proposed FairWalk and CrossWalk modify the transition probabilities in a heuristic manner. 
Moreover, the approaches proposed by \citet{wang2025fairness}   incorporate group-fairness into the PR algorithm by reweighting the transition probabilities in the underlying transition matrix. However, for all of those methods there is no guarantee of attaining a target fairness level.
\citet{tsioutsiouliklis2021fairness} considered a combination of options (i) and (ii) - the $\mathrm{LFPR}$ (Locally-Fair PR) algorithms ($\mathrm{LFPR_N}$, $\mathrm{LFPR_U}$, and $\mathrm{LFPR_P}$). These ensure that $\phi$-fairness is attained by adjusting the transition matrix as well as the teleportation vector.
The modifications involve altering the edge weights, and, in some cases, adding new edges. However, they lack a clear optimization formulation and offer no optimality guarantees.
A fourth variant, $\mathrm{LFPR_O}$, modifies the transition matrix via a non-convex optimization problem, but without any optimality guarantees regarding the quality of the solution. Finally,  \citet{tsioutsiouliklis2022link} studied option (iii) by quantifying the impact of adding and deleting edges on PR fairness, and proposed a link recommendation method to maximize fairness gains for a target vertex subgroup. 
However, this method does not guarantee that the desired level of fairness will be fulfilled.

Lastly, it is worth mentioning that our work differs significantly from prior work on fairness in ranking \cite{zehlike2017fa,singh2018fairness,geyik2019fairness}, since we optimize the PR scores of the vertices (and not the rankings explicitly) for attaining a target group-fairness level, which is based on equitable allocation of PR scores across groups.

\section{Proposed Approach} \label{sec:proposed_approach}
\noindent {\bf Preliminaries:}
Consider a (un)directed graph $\setG = (\setV, \setE)$, with $n:=|\setV|$ vertices and $m:=|\setE|$ edges that is (strongly) connected.
The PageRank (PR) algorithm models the action of a random walker visiting vertices in $\setG$ in accordance with a $n \times n$ probability transition matrix $\vP$, where each entry $P_{ij}$ denotes the probability of transitioning from vertex $j$ to vertex $i$. 
Let $\mathbf{A}$ and $\mathbf{D}\in\mathbb{R}^{n\times n}$ denote the adjacency and diagonal degree matrix of $\setG$, respectively - for directed graphs, $\vD$ denotes the out-degree matrix. The transition matrix $\vP$ is formed by normalizing each column of $\mathbf{A}$ to sum to one, $\mathbf{P}:=\mathbf{A}\mathbf{D}^{-1}$.
Let $\boldsymbol{\Pi} := \mathrm{diag}(\boldsymbol{\pi})$, where $\boldsymbol{\pi}$ is the stationary distribution of $\vP$. For an undirected $\setG$ that is connected and non-bipartite, this reduces to $\boldsymbol{\Pi} = \frac{1}{2m}\vD$.
PR modifies the standard random walk on $\setG$ by utilizing restarts, specified by a teleportation probability $\gamma$ 
and a teleportation vector $\mathbf{v}$. The output of the algorithm is the PR vector $\vp_{\mathrm{o}}$.
Since we use $\vx$ as a network centrality measure, we set $\vv$ to be the uniform vector. 
Then, each entry $x_i$ reflects the importance of vertex~$i$ w.r.t. the global structure of $\setG$.

Computing the original PR vector $\vp_{\mathrm{o}}$ amounts to solving the following 
system of linear equations
\begin{equation}\label{eq:OPR:LS}
    \mathbf{x} = (1-\gamma)\mathbf{P}\mathbf{x} + \gamma \mathbf{v},
\end{equation}
which can be solved by applying the following fixed-point iterations \cite{gleich2015pagerank}
\begin{equation}\label{eq:OPR:iters}
    \mathbf{x}^{(t+1)} = (1-\gamma)\mathbf{P}\mathbf{x}^{(t)} + \gamma \mathbf{v},
\end{equation}
where $\vx^{(0)}=\mathbf{v}$ or $\vx^{(0)}=0$. The sequence of generated iterates converges at a geometric rate to a unique fixed point; i.e., the PR vector $\vp_{\mathrm{o}}$ \citep[Theorem 2.2]{gleich2015pagerank}. The method is also highly scalable, as the matrix-vector multiplication at each iteration can efficiently exploit the sparsity of the transition matrix $\vP$. However, as noted earlier, the resulting PR vector may not satisfy a desired notion of group-fairness.

Interestingly, in the undirected case, PR iterations are intimately linked with label spreading \cite{zhou2003learning}. If we perform a change of co-ordinates to obtain $\bar{\vx}^{(t)}\! :=\! \boldsymbol{\Pi}^{-\frac{1}{2}}\vx^{(t)}$ and $\bar{\vv}\! :=\! \boldsymbol{\Pi}^{-\frac{1}{2}}\vv$, then the PR iterations \eqref{eq:OPR:iters} become 
\begin{equation}\label{eq:OPR:alternative_iters}
    \bar{\vx}^{(t+1)} = (1-\gamma)\tilde{\vA}\bar{\vx}^{(t)} + \gamma \bar{\vv},
\end{equation}
which is equivalent to performing label spreading in the transformed domain, with $\tilde{\vA} := \mathbf{D}^{-\frac{1}{2}} \mathbf{A} \mathbf{D}^{-\frac{1}{2}}$ denoting the normalized adjacency matrix.  
This stems from the fact that $\tilde{\vA}$ can be obtained by a similarity transformation applied to $\mathbf{P}$. Namely, if one defines $\mathbf{T} = \boldsymbol{\Pi}^{\frac{1}{2}}$, then
$\mathbf{T}^{-1} \mathbf{P} \mathbf{T} = \boldsymbol{\Pi}^{-\frac{1}{2}} \vP \boldsymbol{\Pi}^{\frac{1}{2}} = \mathbf{D}^{-\frac{1}{2}} \mathbf{A} \mathbf{D}^{-\frac{1}{2}} = \tilde{\vA}$.

\noindent{\bf A Variational Interpretation of PR:}
Our starting point is the following variational characterization of the PR vector $\vp_{\mathrm{o}}$ (i.e., the solution of \eqref{eq:OPR:LS}).

\begin{proposition}
The PR vector is the unique minimizer of the strongly convex quadratic optimization problem
\begin{equation}\label{eq:optPR}
     %
     \min_{\vx \in \mathbb{R}^n} \!\biggl\{\! f(\vx)\!\! :=\!\! \frac{1}{2}(1-\gamma)\vx^\top\boldsymbol{\Pi}^{-1}\mathbf{L}_{\mathrm{rw}}\vx + \frac{1}{2} \gamma \lVert \boldsymbol{\Pi}^{-\frac{1}{2}} (\vx-\vv) \rVert^2 \!\biggr\},
\end{equation}
where $\mathbf{L}_{\mathrm{rw}}:=\vI-\vP$ is the random walk Laplacian of $\setG$.
\end{proposition}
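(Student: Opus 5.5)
The plan is to compute the gradient of $f$, show that the first-order optimality condition is exactly the PR linear system \eqref{eq:OPR:LS}, and then argue strong convexity so that this stationary point is the unique global minimizer. I will work in the undirected, connected, non-bipartite setting, where $\boldsymbol{\Pi}=\frac{1}{2m}\vD$. This is the setting in which the label-spreading equivalence above was derived, and it is where the quadratic form in \eqref{eq:optPR} has a symmetric matrix.

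\textbf{Step 1: symmetry and positive semidefiniteness of the first term.} Using $\vP=\vA\vD^{-1}$ and $\boldsymbol{\Pi}^{-1}=2m\vD^{-1}$, I will rewrite
\[
\boldsymbol{\Pi}^{-1}\mathbf{L}_{\mathrm{rw}} = 2m\,\vD^{-1}(\vI-\vA\vD^{-1}) = 2m\,\vD^{-1}(\vD-\vA)\vD^{-1} = 2m\,\vD^{-1}\vL\vD^{-1},
\]
where $\vL=\vD-\vA$ is the combinatorial Laplacian. This matrix is symmetric because $\vA$ is symmetric, and it is positive semidefinite because $\vL\succeq 0$ and it is congruent to $\vL$. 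Equivalently, in the transformed coordinates $\bar{\vx}=\boldsymbol{\Pi}^{-1/2}\vx$ the first term equals $\frac12(1-\gamma)\bar{\vx}^\top(\vI-\tilde{\vA})\bar{\vx}$, which matches label spreading and makes the link to \eqref{eq:OPR:alternative_iters} transparent.

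\textbf{Step 2: strong convexity.} The Hessian of $f$ is
\[
\nabla^2 f=(1-\gamma)\,\boldsymbol{\Pi}^{-1}\mathbf{L}_{\mathrm{rw}}+\gamma\boldsymbol{\Pi}^{-1}.
\]
The first summand is $\succeq 0$ by Step 1, and the second is $\succeq \gamma\min_i \pi_i^{-1}\vI\succ 0$ since $\gamma>0$. Hence $f$ is strongly convex and has exactly one minimizer, characterized by $\nabla f(\vx)=\mathbf{0}$.

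\textbf{Step 3: the stationarity condition.} Differentiating gives
\[
\nabla f(\vx)=\boldsymbol{\Pi}^{-1}\bigl[(1-\gamma)(\vI-\vP)\vx+\gamma(\vx-\vv)\bigr]=\boldsymbol{\Pi}^{-1}\bigl[\vx-(1-\gamma)\vP\vx-\gamma\vv\bigr].
\]
Since $\boldsymbol{\Pi}^{-1}$ is invertible, $\nabla f(\vx)=\mathbf{0}$ is equivalent to \eqref{eq:OPR:LS}. That system has the unique solution $\vp_{\mathrm{o}}$ by the cited convergence result for \eqref{eq:OPR:iters}; uniqueness also follows from the invertibility of $\vI-(1-\gamma)\vP$, as $\|(1-\gamma)\vP\|_1<1$. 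Therefore the unique minimizer of \eqref{eq:optPR} is $\vp_{\mathrm{o}}$.

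The only delicate point is Step 1: the gradient formula $\nabla(\frac12\vx^\top\vM\vx)=\vM\vx$ requires $\vM=\boldsymbol{\Pi}^{-1}\mathbf{L}_{\mathrm{rw}}$ to be symmetric. This holds via reversibility, $\boldsymbol{\Pi}^{-1}\vP=\vP^\top\boldsymbol{\Pi}^{-1}$, which is true for undirected graphs. For a general directed graph the quadratic form only sees the symmetric part of $\vM$, and the stationarity condition would no longer reduce to \eqref{eq:OPR:LS}. So I will state the result under the undirected assumption, or more generally under reversibility of $\vP$.
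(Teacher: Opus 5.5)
Your proof is correct and follows the paper's route: show that $\boldsymbol{\Pi}^{-1}\mathbf{L}_{\mathrm{rw}}$ is symmetric and positive semidefinite, conclude that the Hessian $(1-\gamma)\boldsymbol{\Pi}^{-1}\mathbf{L}_{\mathrm{rw}}+\gamma\boldsymbol{\Pi}^{-1}$ is positive definite, and observe that $\nabla f(\vx)=\mathbf{0}$ is exactly the PR system \eqref{eq:OPR:LS}. The only difference is how you get positive semidefiniteness: you use congruence with $\vD-\vA$, whereas the paper uses that the symmetric matrix $\boldsymbol{\Pi}^{-1/2}\vP\boldsymbol{\Pi}^{1/2}$ is similar to $\vP$ and so has eigenvalues at most $1$, which covers the reversible directed case directly; your argument also extends to that case if you replace $\vD-\vA$ by $(\vI-\vP)\boldsymbol{\Pi}$, the Laplacian of the symmetric flow weights $\pi_j P_{ij}$.
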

\noindent The proof can be found in App.~\ref{app:obj_fun}, and holds for both undirected and directed graphs. For the latter case, we make an extra assumption that the Markov chain associated with $\vP$ is time-reversible~\cite{levin2017markov} \footnote{For undirected graphs, this condition is automatically satisfied.}. The focus on the random walk Laplacian provides a unified framework for both undirected and directed graphs, improving upon the variational formulation of PR proposed by \citet{fountoulakis2019variational}, which is limited to the undirected setting. For more details on the relation between the two formulations, please refer to App.~\ref{app:prior_variational}.

The first component of the cost function $f(\cdot)$ is a smoothness-promoting term that encourages neighboring vertices to receive similar PR scores, while the second measures proximity to the teleportation vector $\vv$. The teleportation parameter $\gamma$ controls the trade-off between these effects.

\noindent {\bf Problem Formulation:} The variational interpretation of PR \eqref{eq:optPR} allows the incorporation of fairness considerations in a principled manner. As mentioned previously, the PR vector; i.e., the solution of \eqref{eq:optPR}, is not guaranteed to satisfy a target fairness requirement.
Let $\setX \subset \mathbb{R}^n$ denote any target fairness requirement which can be represented as a closed, convex set. Then, a natural means of enforcing these fairness constraints in PR is the following constrained convex optimization problem
%
\begin{equation}\label{eq:constraint_opt}
    \min \left\{ f(\vx) \; \textrm{s.t.} \; \vx \in \setX
    \right\}.
\end{equation}
This is the problem formulation that we consider in this paper. 
Since $f(\cdot)$ is strongly convex, it follows that the above problem has a unique minimizer 
\citep[Theorem 5.25]{beck2017first}. 
Moreover, by design, the solution is guaranteed to satisfy a target fairness criterion - by varying the choice of $\setX$, different notions of group-fairness can be handled.

\noindent {\bf The FairRARI algorithm:} We now describe our approach to solving problem \eqref{eq:constraint_opt}. An intuitive idea is to modify the classic PR fixed-point iterations \eqref{eq:OPR:iters} by incorporating an extra step that projects iterates onto the set $\setX$ in order to ensure feasibility. Formally, our algorithm is
\begin{equation}\label{eq:FairRARI}
\vy^{(t+1)} \!=\!(1-\gamma)\vP\vx^{(t)} + \gamma\vv, \;
 \vx^{(t+1)} \!=\! \mathrm{Proj}_{\setX}(\vy^{(t+1)}),
\end{equation}
where we make use of the \textit{projection operator} onto $\setX$, $\mathrm{Proj}_{\setX}(\vy):= \argmin_{\vx\in\setX} \lVert \vx-\vy \rVert^2_2$.
Hence, fairness is guaranteed by modifying the PR vector directly, while keeping the transition matrix $\vP$ and the teleportation vector $\vv$ fixed. We denote \eqref{eq:FairRARI} as the \textbf{FairRARI} iterations. At this point, it is natural to ask whether the iterations converge to a unique fixed point, and whether this actually corresponds to the solution of \eqref{eq:constraint_opt}. Our following pair of results shows that this is indeed the case. 
\noindent Please see App.~\ref{app:proof:convergence} and~\ref{app:proof_equiv} for the proofs.
\begin{theorem} \label{th:convergence}
    For any convex set $\setX \subset \mathbb{R}^n$, the FairRARI iterations \eqref{eq:FairRARI} converge  
    at a geometric rate to a unique fixed point.
\end{theorem}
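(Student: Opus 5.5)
The plan is to apply the Banach fixed-point theorem to the composite map
$$T(\vx) := \mathrm{Proj}_{\setX}\bigl((1-\gamma)\vP\vx + \gamma\vv\bigr),$$
so that the FairRARI iterations \eqref{eq:FairRARI} are exactly $\vx^{(t+1)} = T(\vx^{(t)})$. Here $\setX$ is closed, convex and nonempty, so $\mathrm{Proj}_{\setX}$ is well defined and single-valued. If $T$ is a contraction with factor $1-\gamma<1$ in some norm $\|\cdot\|_\ast$ on $\mathbb{R}^n$, the theorem follows. Indeed, $(\mathbb{R}^n,\|\cdot\|_\ast)$ is complete, so $T$ has a unique fixed point $\vx^\star$, and
$$\|\vx^{(t)}-\vx^\star\|_\ast \le (1-\gamma)^t\|\vx^{(0)}-\vx^\star\|_\ast,$$
which is geometric convergence. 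Since all norms on $\mathbb{R}^n$ are equivalent, the rate transfers to the Euclidean norm up to a constant.

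The contraction estimate splits into two nonexpansiveness facts.
\begin{itemize}
\item[(a)] The affine step $\vx \mapsto (1-\gamma)\vP\vx + \gamma\vv$ is Lipschitz with constant $(1-\gamma)\|\vP\|_\ast$, because the $\gamma\vv$ term cancels in differences.
\item[(b)] $\mathrm{Proj}_{\setX}$ is $1$-Lipschitz in $\|\cdot\|_\ast$.
\end{itemize}
Composing these gives $\|T\vx - T\vz\|_\ast \le (1-\gamma)\|\vP\|_\ast\|\vx-\vz\|_\ast$, so it suffices to have $\|\vP\|_\ast\le 1$ together with (b).

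The natural norm to use is the weighted norm $\|\vx\|_{\boldsymbol{\Pi}^{-1}} := \sqrt{\vx^\top\boldsymbol{\Pi}^{-1}\vx}$. This is the geometry in which $f$ in \eqref{eq:optPR} is written. Under time-reversibility (detailed balance $\vP\boldsymbol{\Pi} = \boldsymbol{\Pi}\vP^\top$, automatic for undirected graphs), the similarity transform from the preliminaries shows that $\boldsymbol{\Pi}^{-1/2}\vP\boldsymbol{\Pi}^{1/2}$ is symmetric. For undirected graphs this matrix is $\tilde{\vA}$. It is similar to the stochastic matrix $\vP$, so its eigenvalues lie in $[-1,1]$, and hence $\|\vP\|_{\boldsymbol{\Pi}^{-1}} = \|\boldsymbol{\Pi}^{-1/2}\vP\boldsymbol{\Pi}^{1/2}\|_2 \le 1$. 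This settles (a). For (b), the metric projection onto a closed convex set in any inner-product norm is firmly nonexpansive in that norm. The proof is the standard one: the variational inequality $\langle \vy - \mathrm{Proj}(\vy), \vx - \mathrm{Proj}(\vy)\rangle \le 0$ for $\vx\in\setX$ is applied twice, the two inequalities are added, and Cauchy--Schwarz finishes the argument.

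The main obstacle is that the two facts must hold in the \emph{same} norm. The projection in \eqref{eq:FairRARI} is Euclidean, while $\vP$ is nonexpansive in $\|\cdot\|_{\boldsymbol{\Pi}^{-1}}$ (and in $\ell_1$), not in general in $\ell_2$. For instance, $\|\vA\vD^{-1}\|_2$ can exceed $1$ on irregular graphs. I would handle this in one of two ways.
\begin{enumerate}
\item Read $\mathrm{Proj}_{\setX}$ as the $\boldsymbol{\Pi}^{-1}$-weighted projection. This matches the geometry of \eqref{eq:optPR} and is what the equivalence with \eqref{eq:constraint_opt} will need anyway, since the fixed point should satisfy the optimality condition of $f$ in that metric. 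For the fairness sets of interest it is still computable, because the weights are diagonal.
\item Keep the Euclidean projection and work in the transformed coordinates $\bar{\vx} = \boldsymbol{\Pi}^{-1/2}\vx$ of \eqref{eq:OPR:alternative_iters}. There the linear map is $(1-\gamma)\tilde{\vA}$ with $\|\tilde{\vA}\|_2\le 1$, and the transformed set $\boldsymbol{\Pi}^{-1/2}\setX$ is still closed and convex.
\end{enumerate}
In either route, the contraction factor $1-\gamma$ and the Banach argument then go through verbatim. Uniqueness of the fixed point follows directly from the contraction property, independently of the starting point $\vx^{(0)}$.
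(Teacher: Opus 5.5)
You have identified exactly the step where the paper's own proof breaks. The paper runs the same Banach argument directly in $\ell_2$. It justifies your step (a) with $\|\vP\|_2=\sigma_{\max}(\vP)=1$ ``since $\vP$ is column-stochastic,'' and that is false: on the star $K_{1,n-1}$, $\vP$ stretches the indicator vector of the leaves by a factor $\sqrt{n-1}$. Your route~1 is correct, but it proves the theorem for a different iteration. With the $\boldsymbol{\Pi}^{-1}$-metric projection, reversibility gives $\|\vP\|_{\boldsymbol{\Pi}^{-1}}\le 1$, and the projection is nonexpansive in that same norm. The fixed point then satisfies $\nabla f(\hat{\vx})^\top(\vy-\hat{\vx})\ge 0$ on $\setX$, which is what Theorem~\ref{theorem:equiv} needs. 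That iteration is not \eqref{eq:FairRARI}, which projects in $\ell_2$. It also requires knowing $\boldsymbol{\pi}$, which matters for directed graphs.

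Route~2 fails, because a linear change of variables cannot make two different norms coincide. With $\vx=\boldsymbol{\Pi}^{1/2}\bar{\vx}$, the Euclidean projection in \eqref{eq:FairRARI} becomes $\bar{\vy}\mapsto\argmin_{\bar{\vx}\in\boldsymbol{\Pi}^{-1/2}\setX}(\bar{\vx}-\bar{\vy})^\top\boldsymbol{\Pi}(\bar{\vx}-\bar{\vy})$. This is a $\boldsymbol{\Pi}$-weighted projection, not the Euclidean projection onto $\boldsymbol{\Pi}^{-1/2}\setX$. It is nonexpansive in $\|\boldsymbol{\Pi}^{1/2}\bar{\vx}\|_2=\|\vx\|_2$, while $\tilde{\vA}$ is nonexpansive in $\|\bar{\vx}\|_2=\|\vx\|_{\boldsymbol{\Pi}^{-1}}$, so the mismatch is unchanged. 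If you instead project Euclideanly in $\bar{\vx}$, you are back to route~1. No other argument can close the gap either, because with Euclidean projections the statement is false. Take the star $K_{1,4}$ with center $c$ and $\gamma=0.15$. Let $\mathbf{u}$ be the unit vector with $u_c=1/\sqrt{2}$ and $u_\ell=1/\sqrt{8}$ on each leaf, so that $\mathbf{u}^\top\vP\mathbf{u}=5/4$. Let $\setX=\{\vp_{\mathrm{o}}+s\mathbf{u}:|s|\le 1\}$. Since $(1-\gamma)\vP\vp_{\mathrm{o}}+\gamma\vv=\vp_{\mathrm{o}}$, the iteration on $\setX$ reduces to $s\mapsto\min\{1,\max\{-1,1.0625\,s\}\}$. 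This map has three fixed points, $s\in\{-1,0,1\}$, and only $s=0$ minimizes $f$ on $\setX$. On $K_{1,n-1}$ the analogous $\mathbf{u}$ gives $\mathbf{u}^\top\vP\mathbf{u}=n/(2\sqrt{n-1})$, so the same construction works for any $\gamma$. The statement you can actually prove is route~1 with the weighted projection. The Euclidean version needs extra structure, such as a doubly stochastic $\vP$ (e.g., regular graphs).
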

\begin{theorem}\label{theorem:equiv}
$\vx^*$ is the minimizer of \eqref{eq:constraint_opt}, if and only if it is the unique fixed-point of the FairRARI iterations \eqref{eq:FairRARI}.
\end{theorem}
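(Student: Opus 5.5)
The plan is to reduce both sides of the equivalence to the same variational inequality. Let $T(\vx) := (1-\gamma)\vP\vx + \gamma\vv$ denote the plain PR map, so that the FairRARI iteration \eqref{eq:FairRARI} reads $\vx^{(t+1)} = \mathrm{Proj}_{\setX}(T(\vx^{(t)}))$.

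\textbf{Step 1: the gradient of $f$.} I first compute $\nabla f$. Time-reversibility (detailed balance, $P_{ij}\pi_j = P_{ji}\pi_i$) says that $\vP\boldsymbol{\Pi}$ is symmetric. Hence
\begin{equation*}
\boldsymbol{\Pi}^{-1}\vP = \boldsymbol{\Pi}^{-1}(\vP\boldsymbol{\Pi})\boldsymbol{\Pi}^{-1}
\end{equation*}
is symmetric, and therefore so is $\boldsymbol{\Pi}^{-1}\mathbf{L}_{\mathrm{rw}}$. The quadratic form in $f$ then differentiates without symmetrization, and
\begin{equation*}
\nabla f(\vx) = (1-\gamma)\boldsymbol{\Pi}^{-1}(\vI-\vP)\vx + \gamma\boldsymbol{\Pi}^{-1}(\vx-\vv) = \boldsymbol{\Pi}^{-1}\bigl(\vx - T(\vx)\bigr).
\end{equation*}
This is the same computation that underlies the preceding Proposition, where setting $\nabla f = 0$ recovers \eqref{eq:OPR:LS}.

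\textbf{Step 2: the two characterizations.} Since $f$ is convex and differentiable and $\setX$ is closed and convex, $\vx^*$ minimizes \eqref{eq:constraint_opt} iff $\vx^*\in\setX$ and
\begin{equation*}
\langle \nabla f(\vx^*), \vz - \vx^*\rangle \ge 0 \quad \text{for all } \vz\in\setX.
\end{equation*}
By the obtuse-angle characterization of projections, $\vx^* = \mathrm{Proj}_{\setX}(\vy)$ iff $\vx^*\in\setX$ and $\langle \vx^* - \vy, \vz - \vx^*\rangle \ge 0$ for all $\vz\in\setX$. Taking $\vy = T(\vx^*)$, the fixed-point condition becomes
\begin{equation*}
\langle \vx^* - T(\vx^*), \vz - \vx^*\rangle \ge 0 \quad \text{for all } \vz\in\setX.
\end{equation*}
By Step 1, this is the optimality condition except that the vector $\vx^*-T(\vx^*)$ is not premultiplied by $\boldsymbol{\Pi}^{-1}$.

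\textbf{Step 3: reconcile the metrics, and conclude.} Matching the two inequalities is the main obstacle. The optimality condition is naturally a projection statement in the $\boldsymbol{\Pi}^{-1}$-weighted inner product $\langle \va,\vb\rangle_{\boldsymbol{\Pi}^{-1}} := \va^\top\boldsymbol{\Pi}^{-1}\vb$. Indeed, $\vx^*$ is optimal iff $\vx^* = \argmin_{\vz\in\setX}\lVert \vz - T(\vx^*)\rVert^2_{\boldsymbol{\Pi}^{-1}}$, which is a projected-gradient fixed point with preconditioner $\boldsymbol{\Pi}$ and unit step. The two conditions coincide exactly when $\boldsymbol{\Pi}\propto\vI$, for instance when $\setG$ is regular.

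What I would try first is to check whether the fairness sets considered later interact with the weighting in a way that makes the Euclidean and weighted projections agree. Failing that, I would state the equivalence with $\mathrm{Proj}_{\setX}$ understood in the $\boldsymbol{\Pi}^{-1}$-norm. In that norm the equivalence above is exact, since a positive diagonal rescaling preserves convexity and closedness of $\setX$.

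Either way, uniqueness of the minimizer of \eqref{eq:constraint_opt} (strong convexity) together with uniqueness of the fixed point (\cref{th:convergence}) turns the one-to-one correspondence of the two conditions into the stated ``if and only if'':
\begin{itemize}
\item every fixed point satisfies the variational inequality, so it is the minimizer;
\item conversely, the minimizer satisfies the fixed-point equation, so it is the unique fixed point.
\end{itemize}
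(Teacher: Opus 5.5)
Your Steps 1 and 2 are correct. The obstacle you isolate in Step 3 is real and cannot be removed. With $\mathrm{Proj}_{\setX}$ Euclidean, as the paper defines it, the statement is false in general. So your proposal rightly does not prove it as written, and your hope that the fairness sets reconcile the two metrics will not work out.

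The paper's proof reaches the same two variational inequalities, with $r(\vx):=[\vI-(1-\gamma)\vP]\vx-\gamma\vv$. It bridges them by
\[
\sum_{i}\tfrac{1}{\pi_i}\,r(\hat{\vx})_i(\vy-\hat{\vx})_i\;\ge\;\sum_i r(\hat{\vx})_i(\vy-\hat{\vx})_i ,
\]
citing $\pi_i\le 1$. That inequality holds termwise only when every product $r(\hat{\vx})_i(\vy-\hat{\vx})_i$ is nonnegative, and nothing guarantees this.

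\textbf{Counterexample with the paper's own set.} Take $\mathrm{\Delta}_{\boldsymbol{\phi}}(\setV)$ on the path $1$--$2$--$3$ with $\gamma=\tfrac12$ and $\vv=\ve/3$. Then $\boldsymbol{\pi}=(1,2,1)/4$ and $\vp_{\mathrm{o}}=(5,8,5)/18$. Let $\setS_1=\{1,2\}$, $\setS_2=\{3\}$ and $\phi_1=\tfrac{13}{18}+\delta$ with $0<|\delta|<\tfrac{5}{18}$.
\begin{itemize}
\item The point $\hat{\vx}=\vp_{\mathrm{o}}+\delta(\tfrac{7}{11},\tfrac{4}{11},-1)$ is feasible and strictly positive.
\item Its residual is $r(\hat{\vx})=\tfrac{6\delta}{11}(1,1,-2)$, which is orthogonal to the feasible direction $(1,-1,0)$. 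So $\hat{\vx}$ is the FairRARI fixed point. It is unique here, since $\tfrac12\|\vP\|_2=\tfrac{\sqrt{2}}{2}<1$.
\item However, $\nabla f(\hat{\vx})=\boldsymbol{\Pi}^{-1}r(\hat{\vx})=\tfrac{6\delta}{11}(4,2,-8)$ is not orthogonal to $(1,-1,0)$, so $\hat{\vx}$ is not optimal.
\item The true minimizer is $\vp_{\mathrm{o}}+\delta(\tfrac12,\tfrac12,-1)$, where $\nabla f=\delta(\tfrac32,\tfrac32,-\tfrac92)$.
\end{itemize}
Bipartiteness plays no role, since $\boldsymbol{\pi}\propto\vD\ve$ is stationary for any connected undirected graph.

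The mechanism is the one you named: $\boldsymbol{\Pi}$ rescales the equality-constraint normals $\ve_{\setS_k}$. Hence Euclidean and $\boldsymbol{\Pi}^{-1}$-weighted projections onto $\mathrm{\Delta}_{\boldsymbol{\phi}}(\setV)$ agree when $\boldsymbol{\pi}$ is constant on each group, but not otherwise in general.

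Your $\boldsymbol{\Pi}^{-1}$-norm reformulation is the right repair, and your argument for it is complete. Two adjustments are needed.

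\textbf{Uniqueness.} Do not invoke \cref{th:convergence}. Once the two conditions are equivalent, the fixed-point set equals the singleton minimizer set. Moreover, that theorem's proof uses $\|\vP\|_2=1$, which is false for column-stochastic $\vP$ on non-regular graphs; the path above has $\|\vP\|_2=\sqrt{2}$. In the $\boldsymbol{\Pi}^{-1}$-norm, reversibility gives $\|\boldsymbol{\Pi}^{-1/2}\vP\boldsymbol{\Pi}^{1/2}\|_2=1$. So the weighted iteration is a genuine $(1-\gamma)$-contraction.

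\textbf{Cost of the repair.} It contradicts the paper's Remark~1. The weighted projection onto $\mathrm{\Delta}_{\boldsymbol{\phi}}(\setV)$ becomes $x_i=\max\{0,\,y_i-\pi_i\lambda_k\}$ for $i\in\setS_k$. This is still linear-time by bisection, but it requires knowing $\boldsymbol{\pi}$.
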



\noindent 
%
%

\noindent{\bf Remark 1:} For directed graphs, it may appear on first glance that the stationary distribution $\boldsymbol{\pi}$ needs to be pre-computed, since the cost function of \eqref{eq:constraint_opt} depends on $\boldsymbol{\pi}$. However, this is not the case. Note that the FairRARI iterations do not require foreknowledge of $\boldsymbol{\pi}$, and Theorems \ref{th:convergence} and \ref{theorem:equiv} assert that the fixed point of \eqref{eq:FairRARI} yields the solution of \eqref{eq:constraint_opt}.  


\noindent{\bf Remark 2:} The FairRARI iterations \eqref{eq:FairRARI} can be conceptually viewed as the following form of projected gradient descent applied on the optimization problem \eqref{eq:constraint_opt} 
\begin{equation}\label{eq:FairRARI_gradient_descent}
\!\!\vy^{(t+1)} \!=\! \vx^{(t)}\! -\! \boldsymbol{\Pi}\nabla f(\vx^{(t)}),\ \vx^{(t+1)} \!=\! \mathrm{Proj}_{\setX}(\vy^{(t+1)}).\!
\end{equation}
The PR step performs preconditioned gradient descent on the smooth, strongly convex objective $f(\cdot)$, and the projection step ensures feasibility w.r.t the fairness set $\setX$.
Hence, in sharp contrast with the prevailing methods of \cite{rahman2019fairwalk,khajehnejad2022crosswalk,wang2025fairness}, and the $\mathrm{LFPR}$ family of methods \cite{tsioutsiouliklis2021fairness}, not only does FairRARI guarantee that a target fairness criterion is fulfilled, but the solution (i.e., the fixed point) is also optimal in a certain sense.

\noindent \textbf{\textit{Complexity Analysis:}} Given a desired relative tolerance value $\epsilon$, from Theorem~\ref{th:convergence} it follows that the number of iterations $t$ required to ensure that $\|\vx^{(t)} - \vx^*\|^2 / \|\vx^{(0)} - \vx^*\|^2\leq \epsilon$, is $O(\log(1/\epsilon)/\log(1/(1-\gamma)))$.  
The per-iteration complexity of the iterations \eqref{eq:FairRARI} can be determined by analyzing each step - the first step requires computing the matrix-vector multiplication $\vP\vx^{(t)}$, which incurs $O(m)$ complexity, followed by adding $\gamma\vv$ to the result. The overall complexity is then $O(m+n)$. The second step involves computing the Euclidean projection onto the fairness constraint $\setX$, which requires solving a convex optimization problem of the form 
\begin{equation} \label{eq:projection}
    \min_{\vx\in\setX} \lVert\vx -\vy\rVert_2^2.
\end{equation}
The main computational bottleneck lies in solving this problem efficiently. Although it is a convex QP solvable by off-the-shelf solvers, these ignore the specific structure of the constraint set $\setX$, leading to worst-case complexity of $O(n^3)$ \cite{goldfarb1990n}. In the following sections, we introduce specific fairness criteria and show that, by exploiting the structure of $\setX$, each projection can be solved far more efficiently, in {\em linear-time}.

\section{Fairness Criteria} 

\noindent {\bf Proposed Criteria:} 
Consider a partition of the vertices of $\setG$ into $K$ vertex groups $\{\setS_k\}_{k=1}^K$, each of which represents a different type of a sensitive attribute (e.g., political leaning, race, or other demographic information). Let $n_k:=|\setS_k|$ denote the number of vertices in group $\setS_k$. We consider various convex group-fairness constraints $\setX \subset \mathbb{R}^n$ which regulate how the PR scores are assigned across different vertex groups $\{\setS_k\}_{k=1}^K$. 

\noindent $\circ$ \textbf{\textit{$\boldsymbol{\phi}$-sum-fairness:}} Given a set of target fairness levels $\{\phi_k\}_{k=1}^K$ satisfying $\phi_k \geq 0, \sum_{k=1}^K \phi_k = 1$, the sum of the PR scores for each group $k$ equals $\phi_k$. These constraints can be 
compactly denoted by the set
\begin{equation} \label{eq:sum_fair_set}
\mathrm{\Delta}_{\boldsymbol{\phi}}(\setV) := \{\vx \geq \mathbf{0} : \ve_{\setS_k}^\top\vx = \phi_k, \forall \; k \in [K] \},
\end{equation}
where $[K]:=\{1,\ldots,K\}$ and $\mathbf{e}_{\setS_k}\in\{0,1\}^n$ being the binary indicator vector of the group $\setS_k$, i.e., $\ve_{\setS_k}(i) = 1$ if and only if vertex $i \in \setS_k$. 
Each linear constraint ensures that the PR mass $\sum_{i \in \setS_k} x_i$ of a vertex group $\setS_k$ equals a fraction $\phi_k$ of the total sum of the PR scores of $\vx$ (across all groups). A vector $\vx$ which satisfies the constraints \eqref{eq:sum_fair_set} is said to be $\boldsymbol{\phi}$-sum-fair.
By varying the parameters $\{\phi_k\}_{k=1}^K$, various parity based fairness policies can be implemented to prevent the PR mass being allocated unevenly (i.e., unfairly) across different groups.
For example, choosing $\phi_k = \frac{n_k}{n}, \forall \; k \in [K]$ implies that the share of PR scores allocated to group $\setS_k$ is proportional to the share of vertices $n_k$ in the population, a property known as {\em demographic parity} \cite{dwork2012fairness}. 
The case of $K=2$ was first studied in \cite{tsioutsiouliklis2021fairness}, where  it was referred to as $\phi$-fairness. Here, we adopt a slightly different designation in order to better distinguish $\mathrm{\Delta}_{\boldsymbol{\phi}}(\setV)$ from the subsequent fairness criterion.
To the best of our knowledge, 
at present, 
there is no algorithm which can guarantee $\boldsymbol{\phi}$-sum-fairness \mbox{for $K > 2$ groups.}

\noindent $\circ$ \textbf{\textit{$\boldsymbol{\alpha}$-min-fairness:}}
The group-fairness constraints \eqref{eq:sum_fair_set}  ensure that the total PR mass of a group $\setS_k$ equals the target level $\phi_k$, but they allow this mass to be distributed arbitrarily within the group. As a result, some vertices in $\setS_k$ may receive very low, or even zero, PR scores (see Fig.~\ref{fig:networks}), 
an outcome undesirable in practice, as it removes vertices from consideration in downstream tasks.
This observation motivates the introduction of fairness constraints that regulate not only aggregate group mass, but also the minimum score assigned to individual vertices.
Hence, we can consider an alternative notion of group-fairness motivated by the \emph{Rawlsian principle of social justice} \cite{rawls2020theory}. Given a fixed subset of vertices $\setA_k \subseteq \setS_k$ of group $k \in [K]$, the goal is to ensure that {\em every} vertex in  $\setA_k$ is allocated a minimum PR score $\alpha_k \geq 0$. These constraints can be expressed by the set
\begin{equation} \label{eq:min_fair_set}
    \!\!\!\mathrm{\Delta}^{\boldsymbol{\alpha}}(\mathcal{A})\! :=\! \{ \vx \geq \mathbf{0}\! :\! \ve^\top\vx\! =\! 1,\;\!\!
    \min_{i \in \setA_k}\! x_i\! \geq\! \alpha_k,\! \forall \, k\! \in\! [K] \},
\end{equation}
where $\mathbf{e}$ is the $n$-dimensional all-ones vector, and the target fairness parameters $\{\alpha_k\}_{k=1}^K$ dictate the minimum PR score assigned to each group. If $\setA_k \equiv \setS_k$ then we ensure that every vertex in each group $\setS_k$ is allocated a minimum PR score $\alpha_k$. Let $\boldsymbol{\alpha} := (\alpha_1,\ldots,\alpha_K)^\top$ denote the vector of target fairness levels. A PR vector satisfying the above constraints (which are linear in $\vx$) is said to be $\boldsymbol{\alpha}$-min-fair. In order to guarantee that the set $\mathrm{\Delta}^{\boldsymbol{\alpha}}(\mathcal{A})$ is feasible, it suffices to select $\{\alpha_k\}_{k=1}^K$ such that $\sum_{k=1}^K \alpha_k|\setA_k| \leq 1$.

\noindent $\circ$ \textbf{\textit{$\boldsymbol{\phi}$-sum + $\boldsymbol{\alpha}$-min fairness:}} The most general form of group-fairness considered combines the two aforementioned notions into a single criterion. Here, each group $\setS_k$ has PR scores that sum to a target value $\phi_k$, while vertices belonging to a target subgroup $\setA_k \subseteq \setS_k$ have a PR score at least $\alpha_k$. These constraints can be expressed by the set 
\begin{equation} \label{eq:sum_min_fair_set}
    \!\!\!\! \mathrm{\Delta}_{\boldsymbol{\phi}}^{\boldsymbol{\alpha}}(\setV\!,\!\mathcal{A})\!\! :=\!\! \{\mathbf{x}\!\geq\!\mathbf{0}\! :\! \mathbf{e}_{\setS_k}^\top \!\mathbf{x}\! =\! \phi_k,\! \min_{i \in \setA_k}\! x_i\! \geq\! \alpha_k,\! \forall  k \!\in\! [K] \} .\!\!\!\!
\end{equation}
The above constraints offer finer-grained control of the allocation of PR scores across the groups $\{\setS_k\}_{k=1}^K$ compared to \eqref{eq:sum_fair_set} or \eqref{eq:min_fair_set} fairness alone. For example, by setting $\boldsymbol{\alpha} = \bf{0}$, \eqref{eq:sum_min_fair_set} reduces to \eqref{eq:sum_fair_set}. Similarly, on replacing the $K$ sum-to-$\phi_k$ constraints on $\vx$ by the looser sum-to-$1$ constraint, we obtain \eqref{eq:min_fair_set}. The parameters $\boldsymbol{\phi}$ and $\boldsymbol{\alpha}$ are chosen to satisfy $\sum_{k=1}^K \phi_k = 1$ and $ \alpha_k|\setA_k| \leq \phi_k, \forall \; k \in [K]$. \\

\noindent{\bf Efficient Projections:}
For the proposed fairness criteria, we show that by leveraging the structure of each constraint set, we can significantly reduce the complexity of the projection step. Hence, when we refer to $\setX$, we will refer to one of the sets that we defined for each fairness criterion. For $\boldsymbol{\phi}$-sum-fairness, we have $\mathrm{\Delta}_{\boldsymbol{\phi}}(\setV)$, for $\boldsymbol{\alpha}$-min-fairness, $\mathrm{\Delta}^{\boldsymbol{\alpha}}(\mathcal{A})$, and for $\boldsymbol{\phi}$-sum + $\boldsymbol{\alpha}$-min-fairness, $\mathrm{\Delta}_{\boldsymbol{\phi}}^{\boldsymbol{\alpha}}(\setV,\mathcal{A})$.
Given a vector $\vy$ that we wish to project onto any of the aforementioned $\setX$, the Euclidean projection problem corresponds to finding the (unique) minimizer of problem \eqref{eq:projection}.
Our subsequent results characterize the optimal solution of the problem, for each constraint set $\setX$, via its KKT conditions. Please refer to App.~\ref{app:proofs:efficient_projections} for proofs and details.

\begin{theorem}\label{th:eucl_proj_centr_fair_simplex}
Let $\vx^*$ denote an optimal solution of problem \eqref{eq:projection}, with $\setX = \mathrm{\Delta}_{\boldsymbol{\phi}}(\setV)$. Then, it is given by
\begin{equation}\label{eq:primal}
    \vx^*_{\mathcal{S}_k} = \max\{0, \vy_{\mathcal{S}_k}-\lambda_k^* \mathbf{e}_{\mathcal{S}_k}\}, \forall \; k \in [K],
\end{equation}
where the variables $\lambda^*_k$, $\forall \; k \in [K]$, satisfy 
\begin{equation}\label{eq:dual}
    \sum_{i\in\mathcal{S}_k}\max\{0, y_i-\lambda^*_k \} = \phi_k.
\end{equation}
\end{theorem}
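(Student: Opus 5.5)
The plan is to derive the KKT conditions of the projection problem \eqref{eq:projection} with $\setX = \mathrm{\Delta}_{\boldsymbol{\phi}}(\setV)$ and solve them in closed form group by group.

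\textbf{Separability.} The objective $\lVert \vx-\vy\rVert_2^2=\sum_k \lVert \vx_{\setS_k}-\vy_{\setS_k}\rVert^2$ splits across the partition $\{\setS_k\}_{k=1}^K$. The constraint set is also a Cartesian product: the nonnegativity constraints act coordinatewise, and each equality $\ve_{\setS_k}^\top\vx=\phi_k$ involves only the coordinates in $\setS_k$. Hence the problem decouples into $K$ independent projections of $\vy_{\setS_k}$ onto the scaled simplex $\{\vz\in\mathbb{R}^{n_k}:\vz\geq\mathbf{0},\ \ve^\top\vz=\phi_k\}$. Each such set is nonempty (since $\phi_k\ge 0$), closed and convex, and the objective is strongly convex. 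So each subproblem has a unique minimizer, and their concatenation is the unique $\vx^*$.

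\textbf{KKT conditions.} For a fixed group $k$, form the Lagrangian
\[
\tfrac12\lVert \vz-\vy_{\setS_k}\rVert^2+\lambda_k(\ve^\top\vz-\phi_k)-\boldsymbol{\mu}^\top\vz, \qquad \boldsymbol{\mu}\ge\mathbf{0}.
\]
The constraints are affine, so Slater's condition holds in its refined form and KKT is necessary and sufficient. The conditions are:
\begin{itemize}
\item stationarity, $z_i-y_i+\lambda_k-\mu_i=0$;
\item complementary slackness, $\mu_i z_i=0$;
\item primal feasibility and $\mu_i\ge0$.
\end{itemize}
The factor $\tfrac12$ only rescales the multiplier.

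\textbf{Case split on each coordinate.} If $z_i>0$, then $\mu_i=0$ and $z_i=y_i-\lambda_k$, which is therefore positive. If $z_i=0$, then $\mu_i=\lambda_k-y_i\ge0$, so $y_i-\lambda_k\le 0$. Both cases combine to $z_i=\max\{0,y_i-\lambda_k\}$, which gives \eqref{eq:primal}. Substituting this into the equality constraint $\ve^\top\vz=\phi_k$ yields \eqref{eq:dual}.

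Conversely, suppose $\lambda_k^*$ solves \eqref{eq:dual}. Set $\vz$ by \eqref{eq:primal} and $\mu_i=\max\{0,\lambda_k^*-y_i\}$. This pair satisfies all KKT conditions, so it is the optimum.

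\textbf{Existence of $\lambda_k^*$ (the main point needing care).} Define $g_k(\lambda)=\sum_{i\in\setS_k}\max\{0,y_i-\lambda\}$. It is continuous, piecewise linear and nonincreasing. It tends to $+\infty$ as $\lambda\to-\infty$, and it equals $0$ for $\lambda\ge\max_{i\in\setS_k}y_i$. By the intermediate value theorem, a root of $g_k(\lambda)=\phi_k$ exists for every $\phi_k\ge 0$.

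For $\phi_k>0$ the function is strictly decreasing on the region where $g_k>0$, so $\lambda_k^*$ is unique. For $\phi_k=0$ any $\lambda\ge\max_i y_i$ works, but all such choices give the same $\vx^*_{\setS_k}=\mathbf{0}$. So $\vx^*$ is well defined in either case.

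The breakpoints of $g_k$ are the values $y_i$. This is what later permits a sorting-based or linear-time median-based search for $\lambda_k^*$, although that search is not needed for the statement itself.
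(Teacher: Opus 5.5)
Your proof is correct and follows the paper's route. Both arguments decouple the projection across the partition $\{\setS_k\}_{k=1}^K$ and show that each block is the thresholded shift $\max\{0,\vy_{\setS_k}-\lambda_k^*\ve_{\setS_k}\}$, with $\lambda_k^*$ fixed by the sum constraint. The difference is that the paper gets the per-group formula by citing Beck's hyperplane--box projection result (Theorem~\ref{th:proj_hyper_box} with $\mathrm{Box}[0,\infty]$), while you derive it directly from the KKT conditions and also prove existence and essential uniqueness of $\lambda_k^*$, which the paper only touches on in its bisection discussion.
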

\noindent In \eqref{eq:primal}, the vectors $\vx^*_{\mathcal{S}_k}$ and $\vy_{\mathcal{S}_k}$ denote the $n_k \times 1$ sub-vectors of $\vx^*$ and $\vy$ indexed by $\setS_k$ respectively, and the max-operator is applied element-wise. Meanwhile, the variables $\{\lambda^*_k\}_{k=1}^K$ can be viewed as the optimal dual variables corresponding to the linear constraints of $\mathrm{\Delta}_{\boldsymbol{\phi}}(\setV)$.
It is evident that once the dual variables are computed from \eqref{eq:dual}, they completely determine the optimal solution $\vx^*$ via \eqref{eq:primal}. Further examination of \eqref{eq:dual} also reveals that each $\lambda^*_k$ can be computed independently of the others to determine $\vx^*_{\mathcal{S}_k}$. Hence, the computation of $\vx^*$ can be trivially parallelized. 
Each dual variable $\lambda^*_k$ can be computed using binary search. 
Please refer to App~\ref{app:proofs:efficient_projections} for details. 

\noindent \textit{\textbf{Complexity:}} 
For sequential execution, with fixed tolerance values $\{ \epsilon_k \}_{k=1}^K$, one for each $\lambda_k^*$, the total complexity is $\sum_{k=1}^K O(n_k) = O(n)$. In contrast, for parallel execution, the complexity is determined by the largest group, i.e., $O(n_k)$, where $n_k = \max_{i \in [K]} n_i$. 

\begin{theorem}\label{th:proj_ind_fair_simplex}
    Let $\vx^*$ denote an optimal solution of problem \eqref{eq:projection}, with $\setX = \mathrm{\Delta}^{\boldsymbol{\alpha}}(\setA)$. Then, it is given by
    \begin{equation}
        x_i^* = \left\{
        \begin{array}{ll}
              \max\{\alpha_k, y_i-\lambda^* \}, &\!\!\! i \in \mathcal{A}_k \\
              \max\{0, y_i-\lambda^* \}, &\!\!\! i \in \bar{\mathcal{A}_k} \\
        \end{array} 
        \right. , \ \forall \; k \in [K],
    \end{equation}
    where the variable $\lambda$ satisfies
    \begin{equation}
        \!\!\!\sum_{k=1}^K\!\! \left\{\! \sum_{i\in\mathcal{A}_k}\!\!\max\{\alpha_k, y_i\!-\!\lambda^*\! \}\! +\!\!\! \sum_{i\in\bar{\mathcal{A}_k}}\!\!\max\{0, y_i\!-\!\lambda^*\! \}\!\! \right\}\!\! =\! 1.\!\!
    \end{equation}
\end{theorem}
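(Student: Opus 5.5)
The plan is to characterize the projection through its KKT conditions, as was done for Theorem~\ref{th:eucl_proj_centr_fair_simplex}. The projection problem \eqref{eq:projection} with $\setX=\mathrm{\Delta}^{\boldsymbol{\alpha}}(\setA)$ minimizes the strongly convex objective $\frac12\|\vx-\vy\|_2^2$ over a polyhedron. The polyhedron is nonempty whenever $\sum_k \alpha_k|\setA_k|\le 1$. So a unique minimizer exists, and since all constraints are linear, the KKT conditions are necessary and sufficient; no constraint qualification beyond linearity is needed.

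The key simplification is to note that $\alpha_k\ge 0$, so the constraints can be merged coordinate-wise. For $i\in\setA_k$, the pair $x_i\ge 0$, $x_i\ge\alpha_k$ is equivalent to the single constraint $x_i\ge \alpha_k$. For $i\in\bar{\setA_k}$ (the complement of $\setA_k$ in $\setS_k$), only $x_i\ge 0$ remains. Since the groups partition $\setV$, every vertex then carries one lower bound $\ell_i\in\{\alpha_k,0\}$. The feasible set becomes $\{\vx:\ve^\top\vx=1,\ x_i\ge \ell_i\}$.

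Next, form the Lagrangian
\[
\mathcal{L}(\vx,\lambda,\boldsymbol{\mu})=\tfrac12\|\vx-\vy\|^2+\lambda(\ve^\top\vx-1)-\sum_i\mu_i(x_i-\ell_i).
\]
Its KKT conditions are:
\begin{itemize}
\item stationarity, $x_i-y_i+\lambda-\mu_i=0$;
\item dual feasibility, $\mu_i\ge 0$;
\item complementary slackness, $\mu_i(x_i-\ell_i)=0$;
\item primal feasibility.
\end{itemize}
Handle these coordinate-wise in two cases.
\begin{itemize}
\item If $x_i>\ell_i$, then $\mu_i=0$ and $x_i=y_i-\lambda$, which forces $y_i-\lambda>\ell_i$.
\item If $x_i=\ell_i$, then $\mu_i=\ell_i-y_i+\lambda\ge 0$, i.e.\ $y_i-\lambda\le\ell_i$.
\end{itemize}
Both cases combine into $x_i^*=\max\{\ell_i,y_i-\lambda^*\}$, which is exactly the stated formula once $\ell_i$ is specialized to $\alpha_k$ or $0$.

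Substituting this expression into the equality constraint $\ve^\top\vx^*=1$ gives the scalar equation for $\lambda^*$ claimed in the theorem. Conversely, any $\lambda$ solving that equation, together with the resulting $\vx$ and $\mu_i:=\max\{0,\ell_i-y_i+\lambda\}$, satisfies every KKT condition. It is therefore the optimum by sufficiency.

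The only mildly delicate step is showing that the scalar equation actually has a solution, which justifies the existence of $\lambda^*$. Define $g(\lambda)=\sum_i\max\{\ell_i,y_i-\lambda\}$. It is continuous and nonincreasing, and it is strictly decreasing wherever some $y_i-\lambda>\ell_i$. As $\lambda\to+\infty$, $g(\lambda)\to\sum_k\alpha_k|\setA_k|\le 1$, and as $\lambda\to-\infty$, $g(\lambda)\to+\infty$. By the intermediate value theorem a root exists. Even if the root is not unique (for example when $\sum_k\alpha_k|\setA_k|=1$), every root yields the same $\vx^*$, consistent with uniqueness of the projection.

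This monotonicity also justifies computing $\lambda^*$ by binary search, as in Theorem~\ref{th:eucl_proj_centr_fair_simplex}. Each evaluation of $g$ costs $O(n)$, so for a fixed tolerance the projection takes linear time.
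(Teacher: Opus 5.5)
Your proof is correct and follows essentially the same route as the paper. Both arguments fold $x_i\ge 0$ and $x_i\ge\alpha_k$ into a single lower-bound vector $\boldsymbol{\ell}$, with $\ell_i=\alpha_k$ on $\setA_k$ and $0$ elsewhere, and then read off $x_i^*=\max\{\ell_i,y_i-\lambda^*\}$ from the hyperplane-plus-box structure. The only difference is that the paper cites Beck's Theorem 6.27 for that step, whereas you derive it from the KKT conditions and also check, via the intermediate value theorem, that the scalar equation for $\lambda^*$ has a root.
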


\noindent
\textit{\textbf{Complexity:}} 
For a fixed tolerance $\epsilon$, the complexity of the Fair Projections is linear to the number of vertices of the graph, i.e., $O(n)$, since for the $\boldsymbol{\alpha}$-min-fairness case we need to perform only one bisection per projection.

\begin{theorem}\label{th:proj_cen_ind_fair_simplex}
    Let $\mathbf{x}^*$ denote an optimal solution of problem \eqref{eq:projection}, with $\setX = \mathrm{\Delta}_{\boldsymbol{\phi}}^{\boldsymbol{\alpha}}(\setV,\setA)$. Then, it is given by
    \begin{equation}
        x_i^* = \left\{
        \begin{array}{ll}
              \max\{\alpha_k, y_i-\lambda_k^* \}, &\!\!\! i \in \mathcal{A}_k \\
              \max\{0, y_i-\lambda_k^* \}, &\!\!\! i \in \bar{\mathcal{A}}_k \\
        \end{array} 
        \right. , \ \forall \; k \in [K],
    \end{equation}
    where the variables $\lambda^*_k$, $\forall \; k \in [K]$, satisfy
    \begin{equation}
        \sum_{i\in\mathcal{A}_k}\!\max\{\alpha_k, y_i-\lambda^*_k \} + \!\sum_{i\in\bar{\mathcal{A}_k}}\max\{0, y_i-\lambda^*_k \} = \phi_k.
    \end{equation}
\end{theorem}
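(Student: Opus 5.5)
The plan is to exploit the fact that $\mathrm{\Delta}_{\boldsymbol{\phi}}^{\boldsymbol{\alpha}}(\setV,\setA)$ is a Cartesian product of $K$ blocks, one per group $\setS_k$, and that the objective $\lVert \vx-\vy\rVert_2^2$ is separable over groups. The problem \eqref{eq:projection} therefore splits into $K$ independent problems. Each block problem reads
\[
\min_{\vx_{\setS_k}} \tfrac12\lVert \vx_{\setS_k}-\vy_{\setS_k}\rVert^2 \quad\text{s.t.}\quad \ve^\top \vx_{\setS_k}=\phi_k,\; x_i\ge \alpha_k \ (i\in\setA_k),\; x_i\ge 0 \ (i\in\bar{\setA}_k).
\]
In this rewrite I have used that, for $i\in\setA_k$, the constraints $x_i\ge 0$ and $x_i\ge\alpha_k$ collapse to $x_i\ge\alpha_k$ because $\alpha_k\ge 0$.

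Each block problem is a strongly convex QP with affine constraints. Its feasible set is nonempty by the standing assumption $\alpha_k|\setA_k|\le\phi_k$: take $x_i=\alpha_k$ on $\setA_k$ and spread the remainder $\phi_k-\alpha_k|\setA_k|\ge 0$ over $\setS_k$. Hence the minimizer exists and is unique. Since the constraints are affine, Slater-type qualification is unnecessary, and the KKT conditions are necessary and sufficient.

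I would write the Lagrangian with a multiplier $\lambda_k$ for the equality constraint and multipliers $\mu_i\ge 0$ for the lower bounds. Stationarity gives $x_i - y_i + \lambda_k - \mu_i = 0$. Complementary slackness gives $\mu_i(x_i-\ell_i)=0$, where $\ell_i=\alpha_k$ on $\setA_k$ and $\ell_i=0$ on $\bar{\setA}_k$.

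The key step is the coordinatewise case split.
\begin{itemize}
\item If $\mu_i=0$, then $x_i=y_i-\lambda_k$, and primal feasibility requires $y_i-\lambda_k\ge \ell_i$.
\item If $\mu_i>0$, then $x_i=\ell_i$ and $\mu_i=\ell_i-(y_i-\lambda_k)>0$.
\end{itemize}
In both cases $x_i=\max\{\ell_i, y_i-\lambda_k\}$, which is exactly the claimed formula. Substituting this into the equality constraint $\ve^\top\vx_{\setS_k}=\phi_k$ yields the stated equation for $\lambda_k^*$.

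It remains to show that such a $\lambda_k^*$ exists. Define $g_k(\lambda)=\sum_{i\in\setA_k}\max\{\alpha_k,y_i-\lambda\}+\sum_{i\in\bar{\setA}_k}\max\{0,y_i-\lambda\}$. This function is continuous and nonincreasing. As $\lambda\to+\infty$ it tends to $\alpha_k|\setA_k|\le\phi_k$, and as $\lambda\to-\infty$ it tends to $+\infty$. By the intermediate value theorem a solution $\lambda_k^*$ exists.

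Conversely, any $\lambda_k^*$ solving the equation, together with $\mu_i=\max\{0,\ell_i-y_i+\lambda_k^*\}$, satisfies all KKT conditions. By sufficiency and uniqueness of the minimizer, the resulting $\vx^*$ is therefore the projection.

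The main obstacle is the edge case $\alpha_k|\setA_k|=\phi_k$ (with $\bar{\setA}_k$ possibly contributing zero). Here $g_k(\lambda)=\phi_k$ holds on an entire half-line, so $\lambda_k^*$ is not unique. I would handle this by noting that every such $\lambda_k^*$ gives the same $\vx^*$, namely $x_i=\alpha_k$ on $\setA_k$ and $0$ elsewhere. The statement only claims that $\vx^*$ has this form for some $\lambda_k^*$, so this non-uniqueness does no harm.

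Finally, the block decomposition makes the result a direct generalization of the per-group argument behind Theorem~\ref{th:eucl_proj_centr_fair_simplex}, which is recovered when $\boldsymbol{\alpha}=\mathbf 0$.
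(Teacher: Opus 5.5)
Your proposal is correct and follows essentially the paper's route: split the projection over the partition $\{\mathcal{S}_k\}_{k=1}^K$ and, within each group, characterize the projection onto the intersection of the hyperplane $\mathbf{e}_{\mathcal{S}_k}^\top\mathbf{x}=\phi_k$ with the box $\mathbf{x}\geq\boldsymbol{\ell}$, where $\ell_i=\alpha_k$ on $\mathcal{A}_k$ and $\ell_i=0$ on $\bar{\mathcal{A}}_k$. The only difference is that the paper cites Beck's hyperplane--box projection result (Theorem~\ref{th:proj_hyper_box}), whereas you re-derive it from the KKT conditions and also verify that $\lambda_k^*$ exists and that its non-uniqueness when $\alpha_k|\mathcal{A}_k|=\phi_k$ is harmless.
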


\noindent
\textit{\textbf{Complexity:}} The complexity of Fair Projections in this case is identical to that of $\boldsymbol{\phi}$-sum-fairness; $O(n)$ for sequential execution, and $O(n_k)$, with $n_k = \max_{i\in[K]}n_i$, for parallel.

\begin{algorithm}[t!]
    \caption{The \textsc{FairRARI} framework}
    \label{alg:FairRARI}
\begin{algorithmic}
    \STATE {\bfseries Input:} $\setX$, $\setG = (\setV,\setE)$, $\gamma$, $\mathbf{v}$, $\epsilon$ \\
    \STATE {\bfseries Output:} Fair PageRank vector
    \STATE {\bfseries Initialize:} transition matrix $\mathbf{P}$, teleportation vector $\vv$
    \WHILE{$\lVert\vx^{(t)} - \vx^{(t-1)}\rVert \geq \epsilon$}
    \STATE $\vy^{(t+1)} \leftarrow (1-\gamma)\mathbf{P}\vx^{(t)} +\gamma\vv$ \\
    \STATE $\vx^{(t+1)} \leftarrow \textsc{Fair-Projection}(\vy^{(t+1)}, \setX, \setG)$
    \ENDWHILE \\
    \STATE {\bfseries return} $\vx^* \leftarrow \vx^{(t+1)}$
\end{algorithmic}
\end{algorithm}

\noindent{\bf The FairRARI Framework:} 
Having described the fairness criteria, we present the pseudocode of the FairRARI framework in Algorithm~\ref{alg:FairRARI}. Based on the choice of the constraint set $\setX$, the \textsc{Fair-Projection} subroutine changes and handles either $\boldsymbol{\phi}$-sum-fairness~\eqref{eq:sum_fair_set}, $\boldsymbol{\alpha}$-min-fairneess~\eqref{eq:min_fair_set} or their combination~\eqref{eq:sum_min_fair_set} (for pseudocode of the \textsc{Fair-Projection} subroutine, please refer to App.~\ref{app:algos}). We now finalize the time complexity of the framework. 
As we have shown in 
Section~\ref{sec:proposed_approach}, 
the complexity of the first step of FairRARI iterations \eqref{eq:FairRARI} is $O(m+n)$. Additionally, for the second part, we have shown that the worst-case complexity of 
the projections
is $O(n)$, resulting in an overall per-iteration complexity of $O(m+n)$.
Hence, the final complexity figure after $T$ iterations is $O((m+n)\cdot T)$, where $T = O\left(
\frac{\log (1/\epsilon)}{\log (1/(1-\gamma))}\right)$. Hence, \emph{FairRARI exhibits the same asymptotic time complexity as the standard (non-fair) PR algorithm.}


\section{Analysis of Post-Processing} 
We consider a post-processing approach that modifies $\vp_\mathrm{o}$ (the original PR vector) to satisfy a target fairness constraint $\setX$. The goal is to find the smallest $\ell_2$-perturbation of $\vp_\mathrm{o}$ that makes it feasible for $\setX$. This can be framed as projecting $\vp_\mathrm{o}$ onto $\setX$, which entails solving the following problem
\begin{equation} \label{eq:eucl_distance}
    \min_{\vx\in\setX} \lVert\vx -\vp_\mathrm{o}\rVert_2^2.
\end{equation}
When $\setX$ is closed and convex, a minimizer of the problem \eqref{eq:eucl_distance} always exists, and is unique for any given vector $\vp_\mathrm{o}$ \citep[Theorem 6.25]{beck2017first}. The solution provides a lower bound on the magnitude of the modification required by any in-processing approach to satisfy a target fairness level, since it is agnostic to the structure of $\setG$. 
Hence, it may not be attainable by an in-processing algorithm. 
Such an approach was originally proposed in \citet{tsioutsiouliklis2021fairness}, who presented an algorithm for problem \eqref{eq:eucl_distance} for the case where $\setX$ corresponds to $\phi$-sum fairness with two groups. However, their proposed algorithm is not guaranteed to solve \eqref{eq:eucl_distance}.
%
Here, we provide a rigorous analysis of the solution of \eqref{eq:eucl_distance} for general $\setX$ via the KKT conditions, and contrast it with the in-processing solution produced by FairRARI. The main difference is that FairRARI jointly addresses utility and fairness via \eqref{eq:constraint_opt}, while the other one-shot projects the \emph{final} PR vector $\vp_{\mathrm{o}}$, onto the fairness set $\setX$ without regard to the graph structure. 
Our analysis reveals that this difference causes post-processing to induce a markedly coarser modification of $\vp_{\mathrm{o}}$, 
whereas FairRARI preserves smoother score distributions.
This is formalized by the following result (see App.~\ref{app:proof:th:FairRARI_vs_post} for the proof).

\begin{theorem}\label{th:FairRARI_vs_post}
Let $\vp_{\mathrm{o}}$ denote the original PageRank vector on $\setG$, and let $\setX$ the closed convex set defined by inequality and equality constraints $\setX = \{\vx\in\mathbb{R}^n \mid w_i(\vx)\leq 0,\ i=1,\ldots,p;\ u_j(\vx)=0,\ j=1,\ldots,q\}$.
Then, the optimal solution of the post-processing problem \eqref{eq:eucl_distance} is characterized by the KKT conditions as
\begin{equation}\label{eq:PPS}
\vx^* = \vp_{\mathrm{o}} - \frac{1}{2} z(\vx^*),
\end{equation}
whereas the solution of the FairRARI problem \eqref{eq:constraint_opt} satisfies
\begin{equation}\label{eq:FFS}
\vx^* = \vp_{\mathrm{o}} - [\vI-(1-\gamma)\vP ]^{-1}\boldsymbol{\Pi}\, z(\vx^*),
\end{equation}
where $z(\vx) = \sum_{i=1}^p \lambda_i^* \nabla w_i(\vx) + \sum_{j=1}^q \mu_j^* \nabla u_j(\vx)$, with $\lambda_i^*$ and $\mu_j^*$ denoting the optimal dual variables associated with the inequality and equality constraints, respectively.
\end{theorem}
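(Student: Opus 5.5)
The plan is to write down the KKT stationarity conditions for both convex programs. Since both problems are convex and the constraints in the fairness sets of interest are affine, the linearity constraint qualification holds and the KKT conditions are necessary and sufficient; for general convex $w_i$, $u_j$ we assume Slater or a similar qualification, so that optimal multipliers $\lambda_i^*\ge 0$ and $\mu_j^*$ exist. Stationarity of the Lagrangian $\mathcal{L}(\vx,\boldsymbol{\lambda},\boldsymbol{\mu}) = g(\vx) + \sum_i \lambda_i w_i(\vx) + \sum_j \mu_j u_j(\vx)$ at $\vx^*$ reads $\nabla g(\vx^*) + z(\vx^*) = \mathbf{0}$, with $z$ as in the statement. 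The work reduces to computing $\nabla g$ for each objective and solving the stationarity equation for $\vx^*$.

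\textbf{Post-processing.} Here $g(\vx)=\lVert \vx-\vp_{\mathrm{o}}\rVert_2^2$, so $\nabla g(\vx)=2(\vx-\vp_{\mathrm{o}})$. Stationarity then gives $2(\vx^*-\vp_{\mathrm{o}}) + z(\vx^*)=\mathbf{0}$, which is exactly \eqref{eq:PPS}.

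\textbf{FairRARI.} Here $g=f$ from \eqref{eq:optPR}. I will write $\boldsymbol{\Pi}^{-1}\mathbf{L}_{\mathrm{rw}} = \boldsymbol{\Pi}^{-1}(\vI-\vP)$. Under time-reversibility, $\vP\boldsymbol{\Pi}$ is symmetric, and hence $\boldsymbol{\Pi}^{-1}\vP$ is symmetric as well, since $\boldsymbol{\Pi}^{-1}\vP = \boldsymbol{\Pi}^{-1}(\vP\boldsymbol{\Pi})\boldsymbol{\Pi}^{-1}$. This is the same fact used in App.~\ref{app:obj_fun} to establish the Proposition. The gradient is therefore
\[
\nabla f(\vx) = (1-\gamma)\boldsymbol{\Pi}^{-1}(\vI-\vP)\vx + \gamma\boldsymbol{\Pi}^{-1}(\vx-\vv) = \boldsymbol{\Pi}^{-1}\bigl([\vI-(1-\gamma)\vP]\vx - \gamma\vv\bigr).
\]
Stationarity gives $[\vI-(1-\gamma)\vP]\vx^* - \gamma\vv = -\boldsymbol{\Pi}\,z(\vx^*)$. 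The matrix $\vI-(1-\gamma)\vP$ is invertible because $\vP$ is column stochastic, so its spectral radius is $1$ and $(1-\gamma)\vP$ has spectral radius below $1$. Multiplying by the inverse and using \eqref{eq:OPR:LS}, namely $\vp_{\mathrm{o}} = \gamma[\vI-(1-\gamma)\vP]^{-1}\vv$, yields \eqref{eq:FFS}.

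\textbf{Main obstacle.} The only delicate step is the gradient of the quadratic form. One must justify symmetry of $\boldsymbol{\Pi}^{-1}\vL_{\mathrm{rw}}$ via reversibility, so that the gradient is $2\times\frac12\boldsymbol{\Pi}^{-1}\vL_{\mathrm{rw}}\vx$ rather than involving its symmetrization. For undirected graphs this is immediate from $\boldsymbol{\Pi}^{-1}\vP = 2m\,\vD^{-1}\vA\vD^{-1}$. The rest, including stating the constraint qualification that guarantees existence of the multipliers, is routine.
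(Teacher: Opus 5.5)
Your proposal is correct and follows the same route as the paper. You write the KKT stationarity condition $\nabla g(\vx^*) + z(\vx^*) = \mathbf{0}$ for each problem, use $\nabla g(\vx) = 2(\vx-\vp_{\mathrm{o}})$ for post-processing and $\nabla f(\vx)=\boldsymbol{\Pi}^{-1}\bigl([\vI-(1-\gamma)\vP]\vx-\gamma\vv\bigr)$ for FairRARI, then multiply by $\boldsymbol{\Pi}$ and invert $\vI-(1-\gamma)\vP$. Your extra justifications are sound and fill in steps the paper leaves implicit: symmetry of $\boldsymbol{\Pi}^{-1}\vL_{\mathrm{rw}}$ under reversibility, invertibility via the spectral radius of $(1-\gamma)\vP$, and a constraint qualification guaranteeing that the multipliers exist.
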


\noindent{\bf Remark 3:} \mbox{Eq.~\eqref{eq:PPS} shows that post-processing modifies $\vp_{\mathrm{o}}$} through a direct, graph-agnostic correction determined solely by the active fairness constraints.
In contrast, Eq.~\eqref{eq:FFS} reveals the inherently in-processing nature of FairRARI. Re-expressing the matrix $[\vI-(1-\gamma)\vP ]^{-1}\! =\! \sum_{j=0}^{\infty}(1-\gamma)^j\vP^j$, shows that FairRARI corrects $\vp_{\mathrm{o}}$ via a graph diffusion term to arrive at the solution of \eqref{eq:constraint_opt}. 
Further insight is obtained by considering the special case of $\phi$-fairness with two groups \cite{tsioutsiouliklis2021fairness}, for which we derive a semi-analytical form of \eqref{eq:PPS}. The proof is provided in App.~\ref{app:proof:col:sum_fair_2}.

\begin{corollary}\label{col:sum_fair_2}
Consider $\phi$-fairness for two groups, $\setS_1$ and $\setS_2$, with the constraint that the solution $\vx^*$ satisfies $\sum_{i\in\setS_1}\vx^*(i)=\phi$. 
Define $\phi_{\mathrm{o}} := \sum_{i \in \setS_1} \vp_{\mathrm{o}}(i)$ and 
    w.l.o.g. assume that
    $\phi\! -\! \phi_{\mathrm{o}}\! =\! \epsilon > 0$. Then, \eqref{eq:PPS} can be expressed as 
    \begin{gather}
        \!\!\vx^*(i)\! =\! \left\{
        \begin{array}{ll}
              \!\!\!\vp_{\mathrm{o}}(i) + \frac{\epsilon}{|\setS_1|}, &\!\!\!\!\!  i\in\setS_1 \\
              \!\!\!\vp_{\mathrm{o}}(i) - \frac{\epsilon-c}{|\setS_2^+|}, &\!\!\!\!\!  i\in\setS_2^+\! :=\! \left\{\! i\in \setS_2\! :\! \vp_{\mathrm{o}}(i)\! \geq\! \frac{\epsilon-c}{|\setS_2^+|}\right\} \\
              \!\!\!0, &\!\!\!\!\! i\in\setS_2^-\! :=\! \left\{\! i\in \setS_2\! :\! \vp_{\mathrm{o}}(i)\! <\! \frac{\epsilon-c}{|\setS_2^+|}\right\} \\
        \end{array} 
        \right.\!\!\!\!\! 
        \raisetag{3.5\baselineskip}
    \end{gather}
    where $c=\sum_{i\in\setS_2^-}\vp_{\mathrm{o}}(i)$.
\end{corollary}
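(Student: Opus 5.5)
The plan is to specialize the post-processing KKT characterization \eqref{eq:PPS} to the two-group constraint set
\[
\setX=\{\vx\ge\mathbf 0:\ \ve_{\setS_1}^\top\vx=\phi,\ \ve_{\setS_2}^\top\vx=1-\phi\}.
\]
Equivalently, we use Theorem~\ref{th:eucl_proj_centr_fair_simplex} with $K=2$, $\vy=\vp_{\mathrm o}$, $\phi_1=\phi$ and $\phi_2=1-\phi$. That theorem already gives
\[
\vx^*_{\setS_k}=\max\{0,\vp_{\mathrm o,\setS_k}-\lambda_k^*\ve_{\setS_k}\},
\]
with each $\lambda_k^*$ fixed by its own group equation \eqref{eq:dual}. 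So the problem decouples into two independent one-dimensional threshold problems, one per group. I will solve each in closed form using the sign information in $\epsilon=\phi-\phi_{\mathrm o}>0$. Throughout I use that $\vp_{\mathrm o}\ge\mathbf 0$ and $\ve^\top\vp_{\mathrm o}=1$ (uniform $\vv$, column-stochastic $\vP$). It follows that $\sum_{i\in\setS_2}\vp_{\mathrm o}(i)=1-\phi_{\mathrm o}$.

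\textbf{Group $\setS_1$.} The target mass $\phi$ exceeds the current mass $\phi_{\mathrm o}$. I guess $\lambda_1^*=-\epsilon/|\setS_1|<0$. Then $\vp_{\mathrm o}(i)-\lambda_1^*>0$ for every $i\in\setS_1$, so no clipping occurs. The sum is $\phi_{\mathrm o}+\epsilon=\phi$, so \eqref{eq:dual} holds. The map $\lambda\mapsto\sum_{i\in\setS_1}\max\{0,y_i-\lambda\}$ is strictly decreasing wherever it is positive, which makes this root unique. This gives $\vx^*(i)=\vp_{\mathrm o}(i)+\epsilon/|\setS_1|$ on $\setS_1$.

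\textbf{Group $\setS_2$.} Here mass must decrease from $1-\phi_{\mathrm o}$ to $1-\phi$, so $\lambda_2^*>0$ and some entries may be clipped to zero. Define $\setS_2^+=\{i\in\setS_2:\vp_{\mathrm o}(i)>\lambda_2^*\}$ and $\setS_2^-$ as its complement within $\setS_2$. Equation \eqref{eq:dual} becomes
\[
\sum_{i\in\setS_2^+}\vp_{\mathrm o}(i)-|\setS_2^+|\lambda_2^*=1-\phi.
\]
Writing $\sum_{i\in\setS_2^+}\vp_{\mathrm o}(i)=1-\phi_{\mathrm o}-c$ with $c=\sum_{i\in\setS_2^-}\vp_{\mathrm o}(i)$, this gives $|\setS_2^+|\lambda_2^*=\epsilon-c$, i.e. $\lambda_2^*=(\epsilon-c)/|\setS_2^+|$. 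Substituting back gives the stated formulas: $\vp_{\mathrm o}(i)-(\epsilon-c)/|\setS_2^+|$ on $\setS_2^+$, and $0$ on $\setS_2^-$. The thresholds defining $\setS_2^\pm$ are exactly the comparisons $\vp_{\mathrm o}(i)\gtrless\lambda_2^*$.

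The main obstacle is that the description of $\setS_2^\pm$ is implicit: $\setS_2^+$ appears on both sides through $c$ and $|\setS_2^+|$. I must argue it is a consistent fixed point rather than a circular definition. For this I will use that $g(\lambda)=\sum_{i\in\setS_2}\max\{0,\vp_{\mathrm o}(i)-\lambda\}$ is continuous, piecewise linear, and strictly decreasing on $(-\infty,\max_i\vp_{\mathrm o}(i)]$. It runs from $1-\phi_{\mathrm o}$ at $\lambda=0$ to $0$ at the maximum. Since $0\le 1-\phi<1-\phi_{\mathrm o}$, there is a unique root $\lambda_2^*>0$. The set $\setS_2^+$ induced by that root satisfies the self-consistent description, which is the standard sorting/threshold characterization of simplex projection. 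Ties at $\vp_{\mathrm o}(i)=\lambda_2^*$ contribute zero to either side, so the $\ge$/$<$ convention in the statement is harmless.

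The case $\epsilon<0$ follows by swapping the roles of the two groups, which justifies the w.l.o.g. assumption.
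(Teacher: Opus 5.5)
Your proposal is correct and follows the paper's proof: specialize Theorem~\ref{th:eucl_proj_centr_fair_simplex} to $\vy=\vp_{\mathrm o}$ with $K=2$, drop the max on $\setS_1$ to get $\lambda_1^*=-\epsilon/|\setS_1|$, and split $\setS_2$ at the threshold $\lambda_2^*$ to get $\lambda_2^*=(\epsilon-c)/|\setS_2^+|$. Your additions go beyond what the paper writes: uniqueness of each root by strict monotonicity, and the check that the implicitly defined $\setS_2^{\pm}$ is self-consistent. One caveat: your strict-inequality convention is harmless only when $\phi<1$. In the degenerate case $\phi=1$ it makes $\setS_2^+=\emptyset$ and gives $0/0$, whereas the statement's $\geq$ convention with $\lambda_2^*=\max_{i\in\setS_2}\vp_{\mathrm o}(i)$ avoids this.
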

\noindent
This corollary illustrates that the post-processing method effects a coarse modification of $\vp_{\mathrm{o}}$, as it modifies PR scores uniformly in both groups, without accounting for the graph’s structure (as opposed to FairRARI). As a result, it may enforce fairness at the cost of assigning zero PR scores to certain vertices. This behavior is evident in Fig.~\ref{fig:networks}, where the post-processing method zeros out $75\%$ of the blue vertices, whereas FairRARI assigns a positive score to each vertex.
In the experiments (Section~\ref{sec:exps}), utility is evaluated in terms of the distance from the original PR vector $\vp_{\mathrm{o}}$. Under this utility measure, post-processing achieves the smallest possible deviation from $\vp_{\mathrm{o}}$, since it directly adjusts $\vp_{\mathrm{o}}$ without considering the graph structure. For this reason, we use it only as a benchmark, providing a bound on the distance to $\vp_{\mathrm{o}}$ that an in-processing alternative can achieve.
%

\section{Experiments} \label{sec:exps}
\subsection{Experimental Setup}
\noindent \textit{\textbf{Datasets - Code:}}
We evaluate the performance of our methods on $22$ real-world datasets (see App.~\ref{app:datasets} for more information).
Implementation details are provided in App.~\ref{app:setup}, and the code is publicly available on Github\footnote{\url{https://github.com/ekariotakis/FairRARI}}.

\noindent \textit{\textbf{Baselines:}} We use the FSPR and LFPR algorithms of \citet{tsioutsiouliklis2021fairness} as baselines.
FairWalk \cite{rahman2019fairwalk}, CrossWalk \cite{khajehnejad2022crosswalk}, FairGD and AdaptGD proposed by \citet{wang2025fairness} are excluded, as they substantially violate the target fairness level
(see App.~\ref{app:fairness_violation}). For detailed description of these algorithms, see App.~\ref{app:baselines}. Finally, we use the post-processing approach as a bound \mbox{to benchmark performance of in-processing methods. }

\noindent \textit{\textbf{Evaluation Criteria:}} We consider two criteria. {\bf (i):} Let $\vp_\mathrm{o}$ denote the original PR vector and $\vp_\mathrm{F}$ denote a candidate fair PR vector. Since both are probability distributions, we use the total variation (TV) distance, \mbox{$\mathrm{TV}(\mathbf{p}_{\mathrm{o}},\mathbf{p}_{\mathrm{F}}) := \frac{1}{2}\lVert \mathbf{p}_{\mathrm{o}}-\mathbf{p}_{\mathrm{F}}\rVert_1$}, to measure utility. 
Results based on the $\ell_2$-distance utility of \cite{tsioutsiouliklis2021fairness} are reported in App.~\ref{app:sec:utility_loss}.
{\bf (ii):} To evaluate the rankings induced by the fair PR vectors $\mathbf{p}_{\mathrm{F}}$, we use the Kendall Tau rank correlation coefficient \cite{kendall1938new}, computed between the rankings induced by the pair $(\mathbf{p}_{\mathrm{o}}, \mathbf{p}_{\mathrm{F}})$, with higher values indicating stronger agreement between the rankings induced by $\mathbf{p}_{\mathrm{o}}$ and $\mathbf{p}_{\mathrm{F}}$.
For a comparison based on the top-$k$ rankings instead of the entire ranking list, please refer to App.~\ref{app:precision}.


\subsection{Experimental Evaluation}
\noindent \textbf{\textit{Timing Results:}} Briefly, our experiments confirm that FairRARI asymptotically shares the same time complexity as PR, as they show that FairRARI incurs at most $10\%$ increase in time complexity compared to the original PR algorithm in the worst case, in contrast to prior methods that can exceed $5000\%$. For the timing results, refer to App.~\ref{app:add_exps:time}.

\noindent \textbf{\textit{$\boldsymbol{\phi}$-sum-fairness:}}
For datasets with $2$ groups, we set the target fairness vector to be $\boldsymbol{\phi}\! =\! (\phi, 1\!-\!\phi)$. For multiple groups, we set $\phi_1\!=\!\phi$ for the first group, and we split the balance $1\!-\!\phi$ equally among the rest.
Fig.~\ref{fig:PoF:2} illustrates a comparative analysis with prior methods across multiple real-world datasets. 
For various values of $\phi$, FairRARI consistently yields a significantly lower TV, very close to the lower bound, indicating that the resulting fair PR distribution is much closer to that of the original PR. 
Notably, when $\phi$ is equal to $\phi_\mathrm{o}$ (vertical dotted lines), FairRARI attains zero TV, in contrast to the prior methods, which incur a much higher cost, typically around $0.5$. This underscores the effectiveness of our approach, since when the original PR solution already satisfies the desired fairness level, it is naturally the optimal solution. 


Fig.~\ref{fig:kendall} complements the above analysis by measuring the agreement between the rankings induced by the fair PR vectors $\mathbf{p}_{\mathrm{F}}$ and the original PR vector $\mathbf{p}_{\mathrm{o}}$, using the Kendall Tau rank correlation coefficient. Higher values indicate closer agreement. Across most fairness levels and datasets, FairRARI achieves the highest Kendall Tau scores, significantly outperforming prior methods. Overall, these results highlight the advantage of directly optimizing the true PR objective under fairness constraints, enabling FairRARI to achieve low deviation from the original PR vector while closely preserving the original ranking.

\begin{figure}[tbp!]
\centering
\begin{subfigure}[b]{1\linewidth}
  \centering
  \includegraphics[width=.9\linewidth]{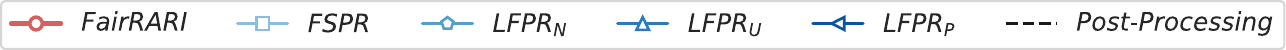}
\end{subfigure}
\begin{subfigure}[b]{.24\linewidth}
  \centering
  \includegraphics[width=.95\linewidth]{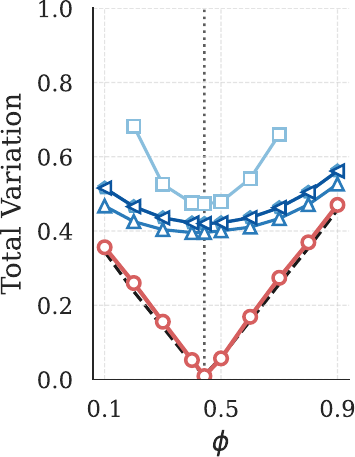}
  \caption{Deezer}
\end{subfigure}
\begin{subfigure}[b]{.24\linewidth}
  \centering
  \includegraphics[width=.95\linewidth]{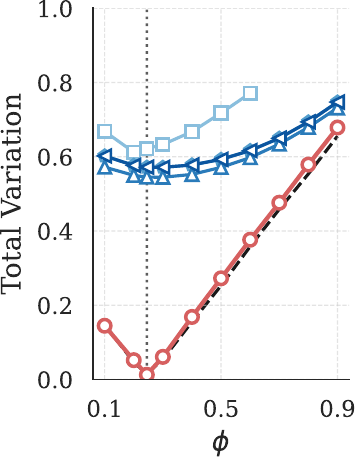}
  \caption{Github}
\end{subfigure}
\begin{subfigure}[b]{.24\linewidth}
  \centering
  \includegraphics[width=.95\linewidth]{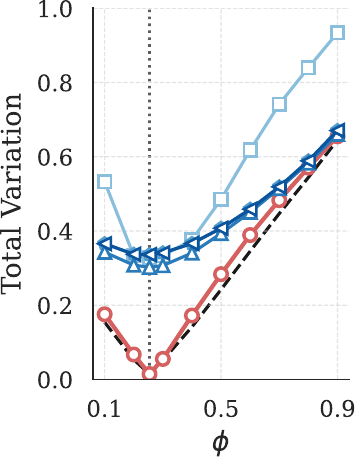}
  \caption{DBLP-G}
\end{subfigure}
\begin{subfigure}[b]{.24\linewidth}
  \centering
  \includegraphics[width=.95\linewidth]{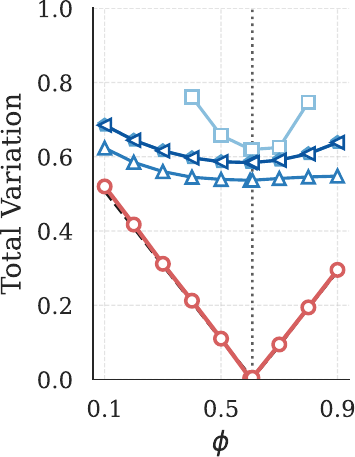}
  \caption{TwitchDE-$2$}
\end{subfigure}
%
\caption{Comparing Fair PR solutions of prior methods and FairRARI (Ours), on different datasets for $\phi$-sum-fairness with $2$ groups. The figures showcase the TV utility of each solution for different fairness levels $\phi$ (lower the better). \mbox{Vertical dotted lines: $\phi_\mathrm{o}$.}}
\label{fig:PoF:2}
\end{figure}

\begin{figure}[tbp!]
\centering
\begin{subfigure}[b]{1\linewidth}
  \centering
  \includegraphics[width=.9\linewidth]{figures/ICML26/OURSvsPRIOR/legend_.pdf}
\end{subfigure}
\begin{subfigure}[b]{.24\linewidth}
  \centering
  \includegraphics[width=.95\linewidth]{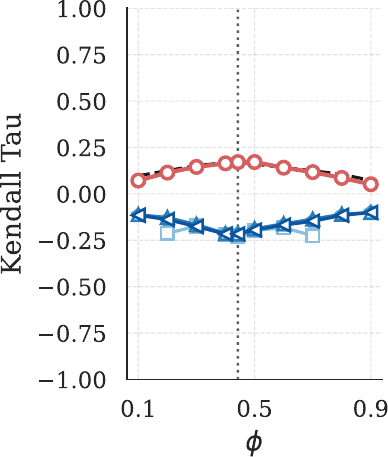}
  \caption{Deezer}
\end{subfigure}
\begin{subfigure}[b]{.24\linewidth}
  \centering
  \includegraphics[width=.95\linewidth]{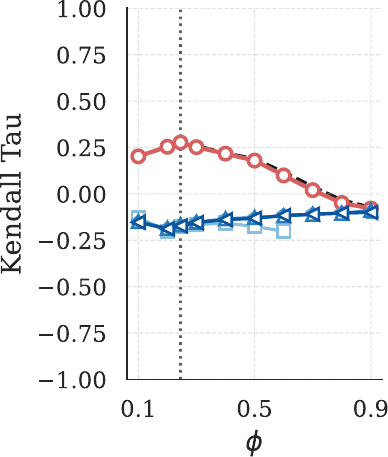}
  \caption{Github}
\end{subfigure}
\begin{subfigure}[b]{.24\linewidth}
  \centering
  \includegraphics[width=.95\linewidth]{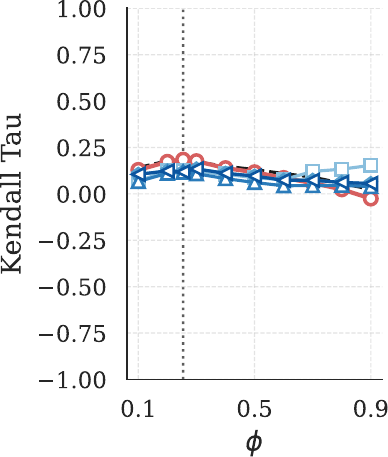}
  \caption{DBLP-G}
\end{subfigure}
\begin{subfigure}[b]{.24\linewidth}
  \centering
  \includegraphics[width=.95\linewidth]{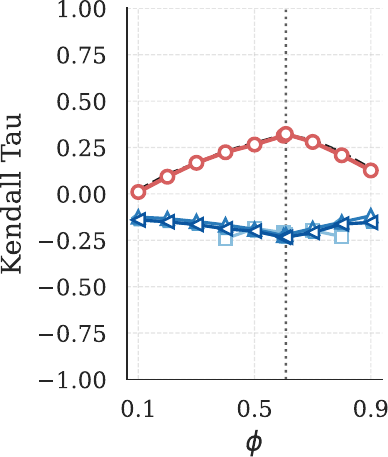}
  \caption{TwitchDE-$2$}
\end{subfigure}
%
\caption{Comparing Fair PR solutions of prior methods and FairRARI (Ours), on different datasets with $2$ groups. The figures showcase the Kendall Tau coefficient of each solution for different fairness levels $\phi$ (higher the better). Vertical dotted lines: $\phi_\mathrm{o}$.}
\label{fig:kendall}
\end{figure}


In the multi-group setting,  FairWalk~\cite{rahman2019fairwalk}, CrossWalk~\cite{khajehnejad2022crosswalk} and the methods of \citet{wang2025fairness} cannot attain the target fairness level, as explained before.
In contrast, FairRARI effectively addresses this setting by satisfying the fairness constraints while achieving TV values that are close to the post-processing solution.
Notably, FairRARI achieves lower TV on the Twitch datasets even in the more challenging setting with $4$ groups, compared to prior methods on the easier $2$-group setting. 
As shown in Fig.~\ref{fig:more_than_2}, the TV for the TwitchDE dataset with $4$ groups is \emph{consistently lower} than that of the prior methods on TwitchDE-$2$ with $2$ groups (Fig.~\ref{fig:PoF:2}).

\noindent \textbf{\textit{$\boldsymbol{\phi}$-sum + $\boldsymbol{\alpha}$-min-fairness:}} 
As illustrated in Fig.~\ref{fig:networks}, enforcing $\boldsymbol{\phi}$-sum-fairness can result in some vertices receiving zero scores. This observation motivates the need of this more stringent criterion, which additionally enforces a minimum score threshold for certain vertices.
Specifically, we consider 2 groups, where we select $\setA$ to be a subset of $\setS_1$,
containing the top $|\setS_1|/100$ vertices of $\setS_1$, in terms of vertex degree. In these experiments, we fix the value of $\alpha$ and vary $\phi$. More precisely, we set $\alpha\!=\!0.25/|\setA|$, and since it should always hold that $\alpha|\setA|\! <\! \phi \Rightarrow \phi\!>\!0.25$, we vary $\phi$ from $0.3$ to $0.9$. 

\begin{figure}[tbp!]
\centering
\includegraphics[width=0.98\linewidth]{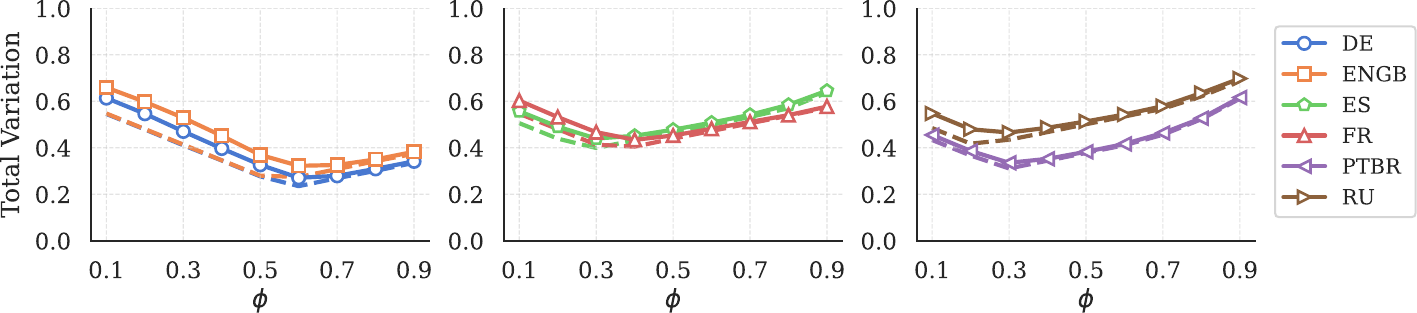}
\caption{Showcasing the TV of the FairRARI solutions for different fairness levels $\phi$, on Twitch datasets with $4$ groups. The dashed lines represent the post-processing solution for each dataset.}
\label{fig:more_than_2}
\end{figure}

\begin{figure}[tbp!]
\centering
\begin{subfigure}[b]{.35\linewidth}
  \centering
  \includegraphics[width=1\linewidth]{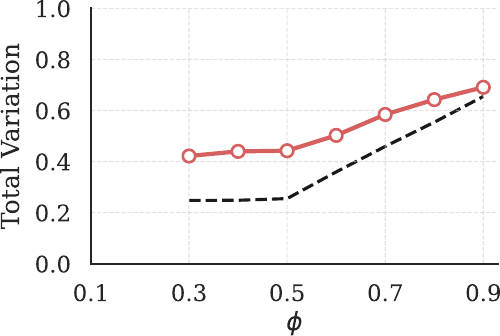}
  \caption{GitHub}
\end{subfigure}
\begin{subfigure}[b]{.35\linewidth}
  \centering
  \includegraphics[width=1\linewidth]{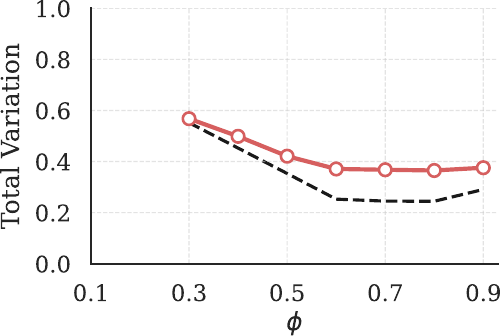}
  \caption{TwitchDE-$2$}
\end{subfigure}
\raisebox{4.4\height}{%
\begin{subfigure}[c]{.21\linewidth}
  \centering
  \includegraphics[width=1\linewidth]{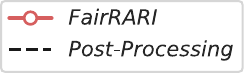}
\end{subfigure}
}
\caption{Showcasing the TV of FairRARI and Post-Processing solutions on the $\boldsymbol{\phi}$-sum + $\boldsymbol{\alpha}$-min-fair problem.
}
\label{fig:fairness_min_vs_phi}
\end{figure}


In Fig.~\ref{fig:fairness_min_vs_phi}, we present the TV achieved by FairRARI and compare it against the lower bound obtained via the post-processing method. Since this is a more stringent fairness constraint, the bound remains far from zero, but FairRARI achieves TV values that are close to it. Notably, there are cases where the TV achieved by FairRARI, under the combined $\boldsymbol{\phi}$-sum + $\boldsymbol{\alpha}$-min-fairness constraints, is lower than that of prior methods that enforce the simpler $\boldsymbol{\phi}$-sum-fairness constraint, such as on the TwitchDE-$2$ dataset. 

For experiments on additional datasets, refer to App.~\ref{app:add_exps}, from which the same conclusions can be made as in this section. 


\section{Conclusion}
We introduced FairRARI, a principled and flexible plug and play framework for computing fair PageRank (PR) vectors under various group-fairness constraints. FairRARI guarantees convergence to fair solutions via fixed-point iterations based on projections onto convex sets. From a variational perspective, we showed that FairRARI is a form of projected gradient descent for computing the minimizer of a strongly convex optimization problem with fairness constraints.
We instantiated FairRARI with three group-fairness criteria, a generalization of $\phi$-fairness to multiple groups, and two new criteria enabling finer-grained score control within groups. All these constraints can be efficiently tackled by FairRARI via linear-time projections, thereby retaining the original PR algorithm’s asymptotic computational complexity.
Experiments on real-world datasets demonstrate FairRARI’s strong empirical performance, 
outperforming existing methods in terms of PR utility and ranking,
while ensuring fairness across diverse scenarios. 


\section*{Impact Statement}
The PageRank algorithm allocates centrality scores to vertices in a graph via random walks and can perpetuate or amplify structural biases that exist across vertex subgroups in graphs. 
This work advances the field of algorithmic fairness on graphs by developing methods for computation of PageRank (PR) vectors under group-fairness constraints defined by sensitive vertex attributes. Via such an approach,  we aim to promote fairness in centrality scores while preserving the utility of standard PR.

While this work focuses on group-fairness criteria and captures three distinct group-fairness notions, we acknowledge that it does not explicitly address other relevant notions of fairness, such as individual and popularity fairness. Accordingly, our results should be interpreted within this scope: the objective of this work is to improve the trade-off between group-fairness constraints and PR utility, and our findings should not be directly extrapolated to other notions of fairness. We view our work as a first step and anticipate future research that extends our framework to incorporate additional notions of fairness.

We hope this contribution supports the development of more equitable graph-learning systems and motivates further research in algorithmic fairness for graph machine learning.


\bibliography{references}
\bibliographystyle{icml2026}

\newpage
\appendix
\onecolumn

\section{Supporting Results}

In this section, we compile a collection of known results which will prove useful in the proofs of the theorems in the subsequent sections.

\subsection{Basics of fixed-point operators} \label{app:fixed_pt}

This overview is primarily based on the content of \citep[Chapter 2]{ryu2022large}.
Let $T:\mathbb{R}^n \rightarrow \mathbb{R}^n$ denote a single valued operator (i.e., a function) with domain $\textrm{dom}(T) = \{\vx: T(\vx) \neq \emptyset
\}$. We say that a point $\vx$ is a fixed point of $T(\cdot)$ if it satisfies $\vx = T(\vx)$. The set of fixed points of $T(\cdot)$ are denoted as $\mathrm{Fix}(T) = \{\vx: \vx = T(\vx)\}$. For an $L >0$, the operator $T(\cdot)$ is said to be $L$-Lipschitz if it satisfies
\begin{equation}
    \|T(\vx) - T(\vy)\|_2 \leq L\|\vx - \vy\|_2, \forall \; \vx, \vy \in \textrm{dom}(T).
\end{equation}

\noindent $\bullet$ {\bf Non-expansive operators:} An operator $T(\cdot)$ is said to be {\em non-expansive} if it satisfies
\begin{equation}
    \|T(\vx) - T(\vy)\|_2 \leq \|\vx - \vy\|_2, \forall \; \vx, \vy \in \textrm{dom}(T).
\end{equation}
This implies that mapping a pair of points under a non-expansive operator does not increase the distance between them. Note that such an operator is $1$-Lipschitz.\\

\noindent {\bf Fact 1: \citep[Theorem 5.4]{beck2017first}} Let $\setX \subset \mathbb{R}^n$ denote a non-empty, closed convex set and $\mathrm{Proj}_{\setX}(\vy)$ denote the Euclidean projection operator that projects a point $\vy$ onto $\setX$. Then, $\mathrm{Proj}_{\setX}(\cdot)$ is non-expansive.\\

\noindent $\bullet$ {\bf Contractive operators:} An operator $T(\cdot)$ is said to be {\em contractive} if it satisfies
\begin{equation}
    \|T(\vx) - T(\vy)\|_2 < \|\vx - \vy\|_2, \forall \; \vx, \vy \in \textrm{dom}(T).
\end{equation}
In other words, $T(\cdot)$ is $L$-Lipschitz with $L < 1$. In contrast with non-expansive operators, the distance between a pair of points strictly decreases under a contraction mapping.\\

\noindent {\bf Fact 2: \citep[p. 33]{ryu2022large}} Let $T_1(\cdot)$ and $T_2(\cdot)$ denote a non-expansive and contraction operator, respectively. Then, their composition $T_1 \circ T_2$ is a contraction. \\

\noindent {\bf Fact 3: [The Banach fixed-point theorem]\cite{banach1922operations}} Let $T(\cdot)$ be a contractive mapping. Then, a fixed point of $T(\cdot)$ exists and it is unique.\\

\subsection{Euclidean projections onto the intersection of a hyperplane and a box constraint}

Given a hyperplane $\setH_{\mathbf{a},b}:= \{\vx: \mathbf{a}^\top\vx = b\}$ where  $\mathbf{a}\in\mathbb{R}^n\setminus \{\mathbf{0}\}$, $b\in\mathbb{R}$, and a box constraint of the form $\mathrm{Box}[\boldsymbol{\ell},\mathbf{u}] := \{\vx : \boldsymbol{\ell} \leq \vx \leq \mathbf{u} \}$, consider the general problem of computing the Euclidean projection of a point $\vy$ onto the intersection $\setC:= \setH_{\mathbf{a},b} \cap \mathrm{Box}[\boldsymbol{\ell},\vu]$. This entails finding the minimizer of the following convex optimization problem
\begin{equation}\label{eq:proj_intersection}
 \text{Proj}_{\setC}(\vy) = \arg\min \biggl\{\|\vx-\vy\|_2^2: \mathbf{a}^\top\vx = b, \boldsymbol{\ell} \leq \vx \leq \mathbf{u} \biggr\}
\end{equation}
The following result provides a characterization of the solution.

\begin{theorem}{\citep[Theorem 6.27]{beck2017first}} \label{th:proj_hyper_box}
The solution of the projection problem \eqref{eq:proj_intersection} is
\begin{equation}
    \mathrm{Proj}_{\setC}(\vy) = \mathrm{Proj}_{\mathrm{Box}[\boldsymbol{\ell}, \mathbf{u}]}(\vy-\lambda^* \mathbf{a}),
\end{equation}
where $\mathrm{Proj}_{\mathrm{Box}[\boldsymbol{\ell}, \mathbf{u}]}(\vz):= \min(\max(\vz,\boldsymbol{\ell}),\vu)$ and $\lambda^*$ is the solution of the equation
\begin{equation}
    \mathbf{a}^\top\mathrm{Proj}_{\mathrm{Box}[\boldsymbol{\ell}, \mathbf{u}]}(\vy-\lambda \mathbf{a}) = b.
\end{equation}
\end{theorem}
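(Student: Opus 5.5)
The plan is a direct verification argument. I would show that the proposed point $\vx^* := \mathrm{Proj}_{\mathrm{Box}[\boldsymbol{\ell},\vu]}(\vy-\lambda^*\mathbf{a})$ belongs to $\setC$ and satisfies the variational characterization of the Euclidean projection onto a closed convex set: $\vx^*=\mathrm{Proj}_{\setC}(\vy)$ if and only if $\vx^*\in\setC$ and $(\vy-\vx^*)^\top(\vx-\vx^*)\le 0$ for all $\vx\in\setC$. Uniqueness of the projection then identifies $\vx^*$ as the solution of \eqref{eq:proj_intersection}. Throughout I assume $\setC\neq\emptyset$, as is implicit in the statement.

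\textbf{Existence of $\lambda^*$.} Define $g(\lambda):=\mathbf{a}^\top\mathrm{Proj}_{\mathrm{Box}[\boldsymbol{\ell},\vu]}(\vy-\lambda\mathbf{a})=\sum_i a_i\min(\max(y_i-\lambda a_i,\ell_i),u_i)$. Each summand is continuous and nonincreasing in $\lambda$. For $a_i>0$ the clipped quantity decreases in $\lambda$, and it is multiplied by $a_i>0$. For $a_i<0$ the clipped quantity increases in $\lambda$, and it is multiplied by $a_i<0$. As $\lambda\to-\infty$, $g$ tends to $\sum_{a_i>0}a_iu_i+\sum_{a_i<0}a_i\ell_i=\sup_{\vx\in\mathrm{Box}}\mathbf{a}^\top\vx$. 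As $\lambda\to+\infty$, $g$ tends to $\inf_{\vx\in\mathrm{Box}}\mathbf{a}^\top\vx$. For finite bounds these values are attained at finite $\lambda$, because the clipping saturates.

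Since $\setC\neq\emptyset$, the value $b$ lies between this infimum and supremum. I then apply the intermediate value theorem, treating the endpoint cases separately. If $b$ equals a finite extreme value, it is attained once the clipping saturates. If an extreme value is infinite, $b$ lies strictly inside the range and the intermediate value theorem applies directly. This endpoint bookkeeping is the only mildly delicate step.

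\textbf{Optimality.} Set $\vz:=\vy-\lambda^*\mathbf{a}$. By the choice of $\lambda^*$, $\mathbf{a}^\top\vx^*=b$, and $\vx^*$ lies in the box by construction, so $\vx^*\in\setC$. Because $\vx^*$ is the projection of $\vz$ onto the box, which is closed and convex with a coordinatewise projection formula, its variational inequality gives $(\vz-\vx^*)^\top(\vx-\vx^*)\le0$ for every $\vx$ in the box, and in particular for every $\vx\in\setC$.

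Now decompose $\vy-\vx^*=(\vz-\vx^*)+\lambda^*\mathbf{a}$. For $\vx\in\setC$ we have $\lambda^*\mathbf{a}^\top(\vx-\vx^*)=\lambda^*(b-b)=0$. Hence $(\vy-\vx^*)^\top(\vx-\vx^*)\le0$ for all $\vx\in\setC$, which is exactly the projection characterization, and the theorem follows.

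An alternative route would dualize only the hyperplane constraint. The inner minimization over the box is separable and yields the box projection of $\vy-\lambda\mathbf{a}$. The dual function is then concave and differentiable with derivative $b-g(\lambda)$, and zero duality gap holds because the constraints are polyhedral. The direct verification above avoids invoking strong duality, so it is the version I would write. The main obstacle is only the existence of $\lambda^*$ when $b$ sits at the boundary of the attainable range or when some bounds are infinite.
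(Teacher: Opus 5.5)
Your argument is correct, but it takes a different route from the one the paper relies on. The paper does not prove this statement itself. It imports it from \citep[Theorem 6.27]{beck2017first}, whose proof is essentially the ``alternative route'' you sketch at the end. There, only the hyperplane constraint is dualized, and the Lagrangian is rewritten as $\frac{1}{2}\|\vx-(\vy-\mu\mathbf{a})\|_2^2$ plus terms not depending on $\vx$. Strong duality (available because the constraints are polyhedral and the problem is feasible) then supplies a multiplier $\mu^*$ such that the projection minimizes the Lagrangian over the box, hence equals $\mathrm{Proj}_{\mathrm{Box}[\boldsymbol{\ell},\mathbf{u}]}(\vy-\mu^*\mathbf{a})$, and is feasible, which is exactly the root equation.

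The duality route has two advantages. Existence of a root comes for free from dual attainment, with none of your endpoint or infinite-bound bookkeeping. It also identifies $\lambda^*$ as an optimal dual variable, which is how the paper reads the $\lambda_k^*$ of Theorems~\ref{th:eucl_proj_centr_fair_simplex}--\ref{th:proj_cen_ind_fair_simplex}.

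Your primal verification avoids duality theory altogether. The single observation that $\lambda^*\mathbf{a}^\top(\vx-\vx^*)=0$ for all $\vx\in\setC$ transfers the box variational inequality to $\setC$. It shows directly that \emph{every} root yields the projection, and roots need not be unique (e.g.\ when $\phi_k=0$ in the $\boldsymbol{\phi}$-sum case, every $\lambda\geq\max_{i\in\setS_k}y_i$ is a root). Your existence step also proves the continuity and monotonicity of $\lambda\mapsto\mathbf{a}^\top\mathrm{Proj}_{\mathrm{Box}[\boldsymbol{\ell},\mathbf{u}]}(\vy-\lambda\mathbf{a})$. The appendix's bisection routines depend on these properties, but the paper only asserts them with a citation.

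You are also right to add the hypothesis $\setC\neq\emptyset$, which the statement as written leaves implicit.
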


\section{Proofs}

\subsection{Objective Function $f(\cdot)$} \label{app:obj_fun}
Recall the objective function
\begin{equation}\label{app:eq:optPR}
     f(\vx) := \frac{1}{2}(1-\gamma)\vx^\top\boldsymbol{\Pi}^{-1}\mathbf{L}_{\mathrm{rw}}\vx + \frac{1}{2} \gamma \lVert \boldsymbol{\Pi}^{-\frac{1}{2}} (\vx-\vv) \rVert^2 .
\end{equation}

\begin{theorem}\label{app:th:strongly_convex}
    If the graph is undirected or directed and time-reversible, then
    the objective function $f(\cdot)$ is strongly convex.
\end{theorem}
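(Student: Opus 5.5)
The plan is to show that the Hessian of $f$ is symmetric positive definite. Since $f$ is quadratic, this is equivalent to strong convexity, with modulus equal to the smallest eigenvalue of the Hessian.

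Expanding $f$, the quadratic part is $\frac12 \vx^\top \vQ \vx$ with
\begin{equation*}
\vQ := (1-\gamma)\,\mathrm{sym}\bigl(\boldsymbol{\Pi}^{-1}\vL_{\mathrm{rw}}\bigr) + \gamma \boldsymbol{\Pi}^{-1},
\end{equation*}
and $\nabla^2 f = \vQ$. The first step is to verify that $\boldsymbol{\Pi}^{-1}\vL_{\mathrm{rw}} = \boldsymbol{\Pi}^{-1} - \boldsymbol{\Pi}^{-1}\vP$ is already symmetric. Time-reversibility (detailed balance) means $P_{ij}\pi_j = P_{ji}\pi_i$, i.e. $\vP\boldsymbol{\Pi} = \boldsymbol{\Pi}\vP^\top$ in the column-stochastic convention $\vP = \vA\vD^{-1}$. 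Multiplying on both sides by $\boldsymbol{\Pi}^{-1}$ gives $\boldsymbol{\Pi}^{-1}\vP = \vP^\top\boldsymbol{\Pi}^{-1} = (\boldsymbol{\Pi}^{-1}\vP)^\top$. In the undirected case this holds automatically: with $\boldsymbol{\Pi} = \frac{1}{2m}\vD$ we get $\boldsymbol{\Pi}^{-1}\vP = 2m\,\vD^{-1}\vA\vD^{-1}$, which is symmetric. Hence $\nabla^2 f = (1-\gamma)\boldsymbol{\Pi}^{-1}(\vI-\vP) + \gamma\boldsymbol{\Pi}^{-1}$.

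Next we show $\boldsymbol{\Pi}^{-1}\vL_{\mathrm{rw}} \succeq 0$. Write $\boldsymbol{\Pi}^{-1}\vL_{\mathrm{rw}} = \boldsymbol{\Pi}^{-1/2}\bigl(\vI - \boldsymbol{\Pi}^{-1/2}\vP\boldsymbol{\Pi}^{1/2}\bigr)\boldsymbol{\Pi}^{-1/2}$. The matrix $\vI - \boldsymbol{\Pi}^{-1/2}\vP\boldsymbol{\Pi}^{1/2}$ is symmetric by reversibility and is similar to $\vI-\vP$. Its eigenvalues are therefore real and equal to $1-\mu$ for eigenvalues $\mu$ of $\vP$. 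Since $\vP$ is stochastic, $|\mu|\le 1$, so $1-\mu \ge 0$. Congruence by the positive diagonal matrix $\boldsymbol{\Pi}^{-1/2}$ preserves positive semidefiniteness. An alternative route is the Dirichlet form
\begin{equation*}
\vx^\top\boldsymbol{\Pi}^{-1}\vL_{\mathrm{rw}}\vx = \tfrac12\sum_{i,j} P_{ij}\pi_j\Bigl(\tfrac{x_i}{\pi_i}-\tfrac{x_j}{\pi_j}\Bigr)^2 \ge 0,
\end{equation*}
which follows from detailed balance together with $\sum_i P_{ij} = 1$. I would present one of these two arguments.

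Finally, $\gamma\boldsymbol{\Pi}^{-1} \succeq \gamma \min_i \pi_i^{-1}\,\vI \succeq \gamma\vI$, using $\pi_i \le 1$ and $\pi_i > 0$, which holds by (strong) connectivity. Therefore $\nabla^2 f \succeq \gamma\vI$ with $\gamma > 0$, and $f$ is $\gamma$-strongly convex.

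The main obstacle is the symmetry and PSD step for the directed case: the reversibility identity has to be written in the column-stochastic convention used here. Without it, $\boldsymbol{\Pi}^{-1}\vP$ is not symmetric and only its symmetric part enters the Hessian, so the argument relies on reversibility at exactly that point.
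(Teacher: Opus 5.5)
Your proof is correct and is essentially the paper's argument. Reversibility makes $\boldsymbol{\Pi}^{-1}\vL_{\mathrm{rw}}$ symmetric; the symmetric similarity transform $\boldsymbol{\Pi}^{-1/2}\vP\boldsymbol{\Pi}^{1/2}$ then gives $\boldsymbol{\Pi}^{-1}\vL_{\mathrm{rw}}\succeq 0$; and $\gamma\boldsymbol{\Pi}^{-1}\succ 0$ finishes, with your explicit bound $\nabla^2 f\succeq\gamma\vI$ a small sharpening of the paper's bare $\nabla^2 f\succ 0$.

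One caveat on your closing remark. Your Dirichlet-form identity uses only stationarity ($\sum_j P_{ij}\pi_j=\pi_i$) and column-stochasticity, so it shows the symmetric part of $\boldsymbol{\Pi}^{-1}\vL_{\mathrm{rw}}$ is positive semidefinite even without reversibility. Reversibility is really needed so that $\nabla f(\vx)=\boldsymbol{\Pi}^{-1}\bigl([\vI-(1-\gamma)\vP]\vx-\gamma\vv\bigr)$, i.e., so that the minimizer is the PageRank vector, not for strong convexity itself.
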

\begin{proof}
Let $\vP$ denote the probability transition matrix of a Markov chain with stationary distribution $\boldsymbol{\pi}$ obeying $\vP\boldsymbol{\pi} = \boldsymbol{\pi}$. The Markov chain is said to be time-reversible \citep{levin2017markov} if the following balance equations are satisfied 
\begin{equation}
    \pi_j P_{ij} = \pi_i P_{ji}, \forall \;(i,j) \in [n] \times [n],
\end{equation}
which can be represented in matrix form as 
\begin{equation}
   \vP^\top = \boldsymbol{\Pi}^{-1} \vP \boldsymbol{\Pi}.
\end{equation}

For undirected graphs, the transition matrix $\vP =\vA\vD^{-1}$ is time-reversible by construction. In the directed case, this need not apply, and hence the assumption. The implication of time-reversibility is that the matrix $\mathbf{S}:=\boldsymbol{\Pi}^{-1}\vL_{\mathrm{rw}}$ is symmetric, since
\begin{align}
    \mathbf{S}^\top &= (\boldsymbol{\Pi}^{-1}\vL_{\mathrm{rw}})^\top = \vL_{\mathrm{rw}}^\top \boldsymbol{\Pi}^{-1} = (\vI-\vP^\top) \boldsymbol{\Pi}^{-1} = (\vI-\boldsymbol{\Pi}^{-1}\vP\boldsymbol{\Pi}) \boldsymbol{\Pi}^{-1} = \boldsymbol{\Pi}^{-1}  - \boldsymbol{\Pi}^{-1} \vP = \boldsymbol{\Pi}^{-1} (\vI - \vP) \\ 
    &= \boldsymbol{\Pi}^{-1} \vL_{\mathrm{rw}} = \mathbf{S}.
\end{align}
Hence, $\mathbf{S}$ has real eigen-values. Moreover, $\mathbf{S}$ can be shown to be positive semi-definite (denoted as $\mathbf{S} \succeq \mathbf{0}$). First, we express $\mathbf{S}$ as
\begin{equation}
    \mathbf{S} = \boldsymbol{\Pi}^{-1} (\vI - \vP) = \boldsymbol{\Pi}^{-1/2} (\vI - \boldsymbol{\Pi}^{-1/2}\vP\boldsymbol{\Pi}^{1/2})\boldsymbol{\Pi}^{-1/2}
\end{equation}
Note that $\bar{\vP} := \boldsymbol{\Pi}^{-1/2}\vP\boldsymbol{\Pi}^{1/2}$ is obtained by applying a similarity transformation on $\vP$, and hence they possess the same eigen-values. Additionally, time-reversibility of $\vP$ also implies that $\bar{\vP}$ is symmetric. Since the largest eigen-value of $\vP$ is $\lambda_{\max}(\vP) = 1$, it follows that $\lambda_{\max}(\bar{\vP}) = 1$. Hence, $1-\lambda_{\max}(\bar{\vP}) = 0$ is the smallest eigen-value of $\vI-\bar{\vP}$, from which it follows that $\vI-\bar{\vP}$ is positive semi-definite. Finally, $\boldsymbol{\Pi}$ is a diagonal matrix with strictly positive entries. Hence $\mathbf{S} = \boldsymbol{\Pi}^{-1/2}(\vI-\bar{\vP})\boldsymbol{\Pi}^{-1/2}$ is also positive semi-definite.

To conclude the proof, 
the first and second derivatives of $f(\cdot)$ are the following:
\begin{align}\label{app:eq:derivative_f}
    \nabla f(\vx) &= (1-\gamma) \boldsymbol{\Pi}^{-1} \mathbf{L}_{\mathrm{rw}}\vx + \gamma \boldsymbol{\Pi}^{-1} (\vx-\vv), \\
    \nabla^2 f(\vx) &= (1-\gamma) \boldsymbol{\Pi}^{-1} \mathbf{L}_{\mathrm{rw}} + \gamma \boldsymbol{\Pi}^{-1}.
\end{align}
Since $\gamma \in (0,1)$, the matrices $\boldsymbol{\Pi}^{-1}\vL_{\mathrm{rw}} \succeq \mathbf{0}$ and $\boldsymbol{\Pi}^{-1} \succ 0$, it follows that $\nabla^2 f(\vx) \succ 0$. We conclude that $f(\cdot)$ is strongly convex.
\end{proof}

\begin{theorem}
    The solution of $\min\{ f(\vx)\}$ is equivalent to the original PageRank solution, $\vp_\text{o}$.
\end{theorem}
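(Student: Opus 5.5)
The proof goes through the first-order optimality condition. By Theorem~\ref{app:th:strongly_convex}, $f(\cdot)$ is strongly convex under the stated assumptions (undirected, or directed and time-reversible). It is also differentiable, so it has a unique minimizer over $\mathbb{R}^n$, and that minimizer is exactly the unique solution of $\nabla f(\vx)=\mathbf{0}$. It therefore suffices to show that the stationarity equation coincides with the PageRank linear system \eqref{eq:OPR:LS}.

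Using the gradient already computed in \eqref{app:eq:derivative_f}, I will write
\begin{equation*}
\nabla f(\vx) = (1-\gamma)\boldsymbol{\Pi}^{-1}(\vI-\vP)\vx + \gamma\boldsymbol{\Pi}^{-1}(\vx-\vv) = \mathbf{0}.
\end{equation*}
Multiplying on the left by the invertible diagonal matrix $\boldsymbol{\Pi}$ (its entries are strictly positive) gives $(1-\gamma)(\vx-\vP\vx)+\gamma(\vx-\vv)=\mathbf{0}$. This simplifies to $\vx=(1-\gamma)\vP\vx+\gamma\vv$, which is exactly \eqref{eq:OPR:LS}.

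It remains to confirm that this system has the single solution $\vp_{\mathrm{o}}$. The matrix $\vI-(1-\gamma)\vP$ is invertible because $\vP$ is column-stochastic: its spectral radius is $1$, so every eigenvalue of $(1-\gamma)\vP$ has modulus at most $1-\gamma<1$, and $1$ cannot be an eigenvalue. Equivalently, one can appeal to \citep[Theorem 2.2]{gleich2015pagerank}. Hence the stationary point of $f$ is $[\vI-(1-\gamma)\vP]^{-1}\gamma\vv=\vp_{\mathrm{o}}$.

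No step here is a real obstacle. The one point to handle carefully is symmetry of the Hessian, i.e., symmetry of $\boldsymbol{\Pi}^{-1}\vL_{\mathrm{rw}}$. Without it, the quadratic form $\vx^\top\boldsymbol{\Pi}^{-1}\vL_{\mathrm{rw}}\vx$ would have gradient $\tfrac12(\mathbf{S}+\mathbf{S}^\top)\vx$ rather than $\mathbf{S}\vx$. The time-reversibility assumption, established in the proof of Theorem~\ref{app:th:strongly_convex}, is exactly what guarantees $\mathbf{S}=\mathbf{S}^\top$, and with it the stated gradient formula.
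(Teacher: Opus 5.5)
Your proposal is correct and follows essentially the same route as the paper: strong convexity reduces the problem to the stationarity condition $\nabla f(\vx)=\mathbf{0}$, and multiplying by $\boldsymbol{\Pi}$ recovers the PageRank system $\vx=(1-\gamma)\vP\vx+\gamma\vv$. Your extra justifications fill in steps the paper leaves implicit: the invertibility of $\vI-(1-\gamma)\vP$ via the spectral-radius bound, and the fact that the gradient formula relies on the symmetry of $\boldsymbol{\Pi}^{-1}\vL_{\mathrm{rw}}$ (guaranteed by time-reversibility).
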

\begin{proof}
Since, $f(\cdot)$ is strongly convex, the problem $\min\{ f(\vx)\}$ has a unique solution given by:
\begin{align}
    \nabla f(\vx^*) = 0 &\Leftrightarrow (1-\gamma) \boldsymbol{\Pi}^{-1} \mathbf{L}_{\mathrm{rw}}\vx^* + \gamma \boldsymbol{\Pi}^{-1} (\vx^*-\vv) = 0 \\
    &\Leftrightarrow (1-\gamma) \mathbf{L}_{\mathrm{rw}}\vx^* + \gamma(\vx^*-\vv) = 0 \\
    &\Leftrightarrow (1-\gamma) [\vI - \vP]\vx^* + \gamma (\vx^*-\vv) = 0 \\
    &\Leftrightarrow \boxed{\vx^* = \gamma [\vI - (1-\gamma)\vP]^{-1} \vv = \vp_\text{o}}.
\end{align}
\end{proof}




\subsection{Relation to Prior Variational Formulation of PageRank} \label{app:prior_variational}

In this appendix, we clarify the relation between our variational formulation of PageRank \eqref{eq:optPR} and that proposed by \citet{fountoulakis2019variational}. 

The work of \citet{fountoulakis2019variational} introduced the following strongly convex quadratic cost function
\begin{equation}\label{eq:kimon_cost}
    F(\bar{\vx}) = \frac{1}{2}\bar{\vx}^\top\vQ\bar{\vx} - \alpha \vv^\top\vD^{-\frac{1}{2}}\bar{\vx},
\end{equation}
where $\vQ :=  \vD^{-\frac{1}{2}} \left\{\vD - \frac{1-\alpha}{2}(\vD+\vA) \right\}\vD^{-\frac{1}{2}}$. Setting the derivative $\nabla F(\bar{\vx})=0$ and applying the change of co-ordinates $\bar{\vx} = \vD^{-\frac{1}{2}}\vx$, we obtain the following linear system
\begin{equation}\label{app:eq:kimon_rw}
    \vx = (1-\alpha)\mathbf{W}\vx + \alpha \vv,
\end{equation}
where $\mathbf{W} = (\vI+\vA\vD^{-1})/2 = (\vI+\vP)/2$ is the lazy random walk matrix, and $\alpha \in (0,1)$ is the teleportation probability. The solution of this linear system also yields a PageRank vector w.r.t. the matrix $\mathbf{W}$.

For given teleportation vector $\vv$ and probability $\alpha$, it turns out that this linear system is equivalent to solving the original PageRank system with the standard random walk matrix $\vP$
\begin{equation}
    \vx = (1-\gamma)\mathbf{P}\vx + \gamma \vv,
\end{equation}
if we just set the teleportation probability $\gamma=\frac{2\alpha}{1+\alpha}$ (see Proposition 3 in \citet{andersen2006local}). Hence, the solutions of these two PageRank systems are equivalent up to a scaling of the teleportation probabilities. 

That being said, the variational formulation of \citet{fountoulakis2019variational} is restricted to the setting of undirected graphs and is derived through analysis of the lazy random walk. In contrast, our variational formulation is based on the standard (non-lazy) random walk, and applies to both directed and undirected graphs.  

An additional distinction concerns the coordinate system in which the optimization problem is expressed. In \citet{fountoulakis2019variational}, deriving the PageRank linear system \eqref{app:eq:kimon_rw} from \eqref{eq:kimon_cost} requires applying a change of coordinates to the PageRank vector $\vx$, while the teleportation vector $\vv$ remains unchanged.
However, in order to recover a label spreading interpretation, an additional change of coordinates must be applied to the teleportation vector as well, namely $\bar{\vv} = \vD^{-\frac{1}{2}}\vv$. Without this change, the solution does not admit a direct interpretation as label spreading. Under this change of co-ordinates, the linear system in \eqref{app:eq:kimon_rw} becomes
\begin{equation}
\begin{aligned}
    \bar{\vx} &= \frac{1-\alpha}{1+\alpha} \vD^{-\frac{1}{2}}\vA\vD^{-\frac{1}{2}}\bar{\vx} + \frac{2\alpha}{1+\alpha} \bar{\mathbf{v}} \\
    \bar{\vx} &= \frac{1-\alpha}{1+\alpha} \tilde{\vA}\bar{\vx} + \frac{2\alpha}{1+\alpha} \bar{\mathbf{v}}
\end{aligned}
\end{equation}
with $\tilde{\vA} := \mathbf{D}^{-\frac{1}{2}} \mathbf{A} \mathbf{D}^{-\frac{1}{2}}$ denoting the normalized adjacency matrix. This system can be solved via the following fixed point iterations
\begin{equation}
\begin{aligned}
    \bar{\vx}^{(t+1)} &= \frac{1-\alpha}{1+\alpha} \tilde{\vA}\bar{\vx}^{(t)} + \frac{2\alpha}{1+\alpha} \bar{\mathbf{v}},
\end{aligned}
\end{equation}
which are equivalent to our label spreading formulation \eqref{eq:OPR:alternative_iters} with parameter choice again $\gamma = \frac{2\alpha}{1+\alpha}$.


\subsection{Proof of Theorem~\ref{th:convergence}} \label{app:proof:convergence}

The proof comprises two parts. First, we assume that a fixed point $\vx^*$ of the iterations exists and establish a geometric rate of convergence of the iterate sequence $\{\vx^{(t)}\}_{t \geq 0}$ to $\vx^*$. In the second part, we establish the existence and uniqueness of $\vx^*$.

\noindent $\bullet$ {\bf Step 1:} We equivalently express the iterations \eqref{eq:FairRARI} as the following fixed point mapping
\begin{equation}\label{eq:fixed_point}
    \vx^{(t+1)} = \text{Proj}_{\setX}((1-\gamma)\vP\vx^{(t)} + \gamma\vv).
\end{equation}
Let $\vx^*$ be a fixed of the above mapping. By construction, $\vx^*$ satisfies
\begin{equation}
    \vx^* = \text{Proj}_{\setX}((1-\gamma)\vP\vx^* + \gamma\vv).
\end{equation}
Then, the $\ell_2$-distance between the iterate $\vx^{(t+1)}$ and $\vx^*$ (i.e., the distance to optimality) can be bounded as  
\begin{equation}\label{eq:error_bound}
\begin{split}
\|\vx^{(t+1)} - \vx^*\|_2 &= 
\|\text{Proj}_{\setX}((1-\gamma)\vP\vx^{(t)} + \gamma\vv)
-\text{Proj}_{\setX}((1-\gamma)\vP\vx^* + \gamma\vv)
\|_2\\
&\leq \|(1-\gamma)\vP(\vx^{(t)}-\vx^*)\|_2\\
&\leq (1-\gamma)\|\vP\|_2\|\vx^{(t)}-\vx^*\|_2\\
&\leq (1-\gamma)\|\vx^{(t)}-\vx^*\|_2. 
\end{split}  
\end{equation}
The second inequality follows from the non-expansive property of the projection operator, and the last inequality holds since $\vP$ is a column-stochastic matrix; i.e., $\|\vP\|_2 = \sigma_{\max}(\vP)=1$.  

Applying the error bound \eqref{eq:error_bound} recursively, we obtain
\begin{equation}
    \|\vx^{(t+1)} - \vx^*\|_2 \leq (1-\gamma)^{t+1} \|\vx^{(0)} - \vx^*\|_2, 
\end{equation}
which shows that the distance between $\vx^{(t)}$ and $\vx^*$ vanishes to zero at a geometric rate of $(1-\gamma)^t$. We conclude that the sequence $\{\vx^{(t)}\}_{t \geq 0}$ converges to $\vx^*$.\\

\noindent $\bullet$ {\bf Step 2:} To show existence and uniqueness of $\vx^*$, it suffices to establish that the operator $T(\vx) = \mathrm{Proj}_{\setX}((1-\gamma)\vP\vx^{(t)} + \gamma\vv)$ is contractive. Let $T_1(\vx):= \mathrm{Proj}_{\setX}(\vx)$ and $T_2(\vx):= (1-\gamma)\vP\vx + \gamma\vv$. 
Hence, we can express $T$ as the composition of $T_1 \circ T_2$.
Note that $T_1(\vx)$ is non-expansive, while $T_2(\vx)$ is $(1-\gamma)$-Lipschitz, and hence, contractive. This follows since $\|T_2(\vx) - T_2(\vy)\|_2 \leq (1-\gamma)\|\vP\|_2\|\vx-\vy\|_2 = (1-\gamma)\|\vx-\vy\|_2$. We conclude that $T(\cdot)$ is contractive, being the composition of a non-expansive and a contractive operator. Hence, by the Banach fixed-point theorem \cite{banach1922operations}, $T(\cdot)$ has a unique fixed point $\vx^*$.

\subsection{Proof of Theorem \ref{theorem:equiv}} \label{app:proof_equiv}

Consider the constrained optimization problem \eqref{eq:constraint_opt}:
\begin{equation} \label{eq:opt_bar}
 \min_{\vx \in \setX} f(\vx).
\end{equation}
Since $f(\cdot)$ is strongly convex (Theorem~\ref{app:th:strongly_convex}) and $\setX$ is a non-empty, closed convex set, it possesses a unique solution $\vx^*$. By definition,
$\vx^*$ satisfies the following first-order optimality condition \citep[Section 4.2.3]{boyd2004convex}
\begin{equation}\label{app:eq:opt1}
    \nabla f(\vx^*)^\top(\vy - \vx^*) \geq 0, \forall \; 
    \vy \in \setX. 
\end{equation}
The gradient of $f(\cdot)$ is given by Eq.~\eqref{app:eq:derivative_f}:
\begin{equation} \label{app:eq:grad_g}
   \begin{split}
        \nabla f(\vx) &= (1-\gamma)\boldsymbol{\Pi}^{-1}\vL_{\mathrm{rw}}\vx + \gamma \boldsymbol{\Pi}^{-1}(\vx-\vv)\\
        &= \boldsymbol{\Pi}^{-1}\left( [\vI - (1-\gamma)\vP]\vx- \gamma \mathbf{v} \right)\\
        \Rightarrow \boldsymbol{\Pi} \cdot \nabla f(\vx) &= [\vI - (1-\gamma)\vP]\vx- \gamma \mathbf{v}.
   \end{split}
\end{equation}
Define $r(\vx):= [\vI - (1-\gamma)\vP]\vx- \gamma \mathbf{v}$. This allows us to compactly express \eqref{app:eq:grad_g} as $\nabla f(\vx) = \boldsymbol{\Pi}^{-1}r(\vx)$. Then, the optimality condition \eqref{app:eq:opt1} can be equivalently expressed w.r.t. $\vx^*$ as  
\begin{equation}\label{eq:opt_cond1}
    r(\vx^*)^\top\boldsymbol{\Pi}^{-1}(\vy - \vx^*) \geq 0, \forall \; \vy \in \setX.
\end{equation}
Meanwhile, let $\hat{\vx}$ denote the fixed point of the iterations \eqref{eq:FairRARI}. Hence, it satisfies the condition 
\begin{equation}
    \begin{split}
        \hat{\vx} = \text{Proj}_{\setX}((1-\gamma)\vP\hat{\vx} +\gamma \vv) 
        = \text{Proj}_{\setX}(\hat{\vx} - r(\hat{\vx})).
    \end{split}
\end{equation}
We conclude that
\begin{equation}
    \hat{\vx} = \arg \min_{\vx \in \setX} \|\vx - (\hat{\vx} - r(\hat{\vx}))\|_2^2
\end{equation}
Since the above problem is convex, applying the first-order optimality condition implies that $\hat{\vx}$ must satisfy
\begin{equation}\label{eq:opt_cond2}
\begin{split}
   & (\hat{\vx} - (\hat{\vx} - r(\hat{\vx}))^\top(\vy - \hat{\vx}) \geq 0, \forall \; \vy \in \setX \\
  \Leftrightarrow \; 
  & r(\hat{\vx})^\top(\vy - \hat{\vx}) \geq 0, \forall \; \vy \in \setX.
\end{split}
\end{equation}
It remains to show that $\hat{\vx}$ satisfying \eqref{eq:opt_cond2} also satisfies \eqref{eq:opt_cond1}. 
This follows from the fact that
\begin{equation}
\begin{split}
    r(\hat{\vx})^\top \boldsymbol{\Pi}^{-1} (\vy - \hat{\vx}) &= \sum_{i=1}^n \frac{1}{\pi_i} r(\hat{\vx})_i (\vy - \hat{\vx})_i \\
    &\geq  \sum_{i=1}^n r(\hat{\vx})_i (\vy - \hat{\vx})_i\\
    &= r(\hat{\vx})^\top (\vy - \hat{\vx}), \forall \; \vy \in \setX
\end{split}   
\end{equation}
The inequality holds from the fact that $\boldsymbol{\Pi} = \mathrm{diag}(\boldsymbol{\pi})$, where $\boldsymbol{\pi}$ is a probability vector, hence $\pi_i \in (0,1]$, $\forall\, i\in\{1,\ldots,n\}$. Since we assume that the graph is connected, we have that $\pi_i\neq 0$.\\ 
Consequently, we obtain the desired implication
\begin{equation}
\begin{split}
 & r(\hat{\vx})^\top(\vy - \hat{\vx}) \geq 0, \forall \; \vy \in \setX \\
 \implies \;
 &  r(\hat{\vx})^\top \boldsymbol{\Pi}^{-1} (\vy - \hat{\vx}), \forall \; \vy \in \setX. 
\end{split}
\end{equation}
Since the solution of \eqref{eq:constraint_opt} is unique and $\hat{\vx}$ is a solution \eqref{eq:constraint_opt}, then it corresponds to its unique solution. Hence, the unique minimizer $\vx^*$ of \eqref{eq:constraint_opt} is the fixed-point of the FairRARI iterations \eqref{eq:FairRARI}.


\subsection{Proof of Theorem~\ref{th:FairRARI_vs_post}}
\label{app:proof:th:FairRARI_vs_post}

\begin{proof}
    The proof comprises two parts. First, we are going to show that Post-Processing produced results that are agnostic to the graph structure, and the show that for FairRARI they are not. We will make use of the fact that the constraint set $\setX$ is convex.
    
    \noindent $\bullet$ \textbf{Post-Processing:} 
    Since $\setX$ is convex, the optimization problem that the Post-Processing method is solving, comprises the following inequality and equality constraints:
    \begin{equation}
    \begin{aligned}
    \min_{\vx} \quad & \lVert \vx-\vp_\mathrm{o} \rVert^2_2 \\
    \textrm{s.t.} \quad & w_i(\vx)\leq0, \ i=1,\ldots,p \\
      & u_j(\vx)=0, \ j=1,\ldots,q \\
    \end{aligned}
    \end{equation}

    \noindent
    From the stationarity KKT condition we get that
    \begin{align}
        \nabla \mathcal{L}(\vx^*,\boldsymbol{\lambda},\boldsymbol{\mu}) = 0 \Leftrightarrow &\ \nabla \{\lVert \vx^*-\vp_\mathrm{o} \rVert^2_2\} + \sum_{i=1}^p \lambda_i \nabla w_i(\vx^*) + \sum_{j=1}^q \mu_j \nabla u_j(\vx^*) = 0 \\
        &\ 2(\vx^*-\vp_\mathrm{o}) + \sum_{i=1}^p \lambda_i \nabla w_i(\vx^*) + \sum_{j=1}^q \mu_j \nabla u_j(\vx^*) = 0 \\
        \Leftrightarrow &\ \vx^* = \vp_\mathrm{o} - \frac{1}{2}\left( \sum_{i=1}^p \lambda_i \nabla w_i(\vx^*) + \sum_{j=1}^q \mu_j \nabla u_j(\vx^*)\right),
    \end{align}
    where $\vp_\mathrm{o}=\gamma [\vI - (1-\gamma)\vP]^{-1} \mathbf{v}$ is the original PR vector.\\
    From the result above we can see that the Post-Processing method is agnostic to the graph structure.

    \noindent $\bullet$ \textbf{FairRARI:} 
    We have shown that FairRARI is solving the following constrained, strongly convex QP
    \begin{equation}
        \min \left\{ f(\vx) \; \textrm{s.t.} \; \vx \in \setX
        \right\},
    \end{equation}
    where
    \begin{equation}
        f(\vx) := \frac{1}{2}(1-\gamma)\vx^\top\boldsymbol{\Pi}^{-1}\mathbf{L}_{\mathrm{rw}}\vx + \frac{1}{2} \gamma \lVert \boldsymbol{\Pi}^{-\frac{1}{2}} (\vx-\vv) \rVert^2,
    \end{equation}
    with
    \begin{equation}
        \nabla f(\vx) = \boldsymbol{\Pi}^{-1}\left( [\vI - (1-\gamma)\vP]\vx- \gamma \mathbf{v} \right).
    \end{equation}
    Since $\setX$ is convex, the optimization problem comprises of the following inequality and equality constraints:
    \begin{equation}
    \begin{aligned}
    \min_{\vx} \quad & f(\vx) \\
    \textrm{s.t.} \quad & w_i(\vx)\leq0, \ i=1,\ldots,p \\
      & u_j(\vx)=0, \ j=1,\ldots,q \\
    \end{aligned}
    \end{equation}

    \noindent
    From the stationarity KKT condition we get that
    \begin{align}
        \nabla \mathcal{L}(\vx^*,\boldsymbol{\lambda},\boldsymbol{\mu}) = 0 \Leftrightarrow &\ \nabla f(\vx^*) + \sum_{i=1}^p \lambda_i \nabla w_i(\vx^*) + \sum_{j=1}^q \mu_j \nabla u_j(\vx^*) = 0 \\
        &\ \boldsymbol{\Pi}^{-1}\left( [\vI - (1-\gamma)\vP]\vx^*- \gamma \mathbf{v} \right) + \sum_{i=1}^p \lambda_i \nabla w_i(\vx^*) + \sum_{j=1}^q \mu_j \nabla u_j(\vx^*) = 0 \\
        &\ \left( [\vI - (1-\gamma)\vP]\vx^*- \gamma \mathbf{v} \right) + \boldsymbol{\Pi}\sum_{i=1}^p \lambda_i \nabla w_i(\vx^*) + \boldsymbol{\Pi}\sum_{j=1}^q \mu_j \nabla u_j(\vx^*) = 0 \\
        &\ \vx^* = \gamma [\vI - (1-\gamma)\vP]^{-1} \mathbf{v} - [\vI - (1-\gamma)\vP]^{-1} \boldsymbol{\Pi} \left( \sum_{i=1}^p \lambda_i \nabla w_i(\vx^*) + \sum_{j=1}^q \mu_j \nabla u_j(\vx^*)\right) \\
        \Leftrightarrow &\ \vx^* = \vp_\text{o} - [\vI - (1-\gamma)\vP]^{-1} \boldsymbol{\Pi} \left( \sum_{i=1}^p \lambda_i \nabla w_i(\vx^*) + \sum_{j=1}^q \mu_j \nabla u_j(\vx^*)\right),
    \end{align}
    where $\vp_\text{o}=\gamma [\vI - (1-\gamma)\vP]^{-1} \mathbf{v}$ is the original PR vector.\\
    From the result above we can see that FaiRARI's solution makes use of the graph structure since it is dependent on its stationary distribution $\boldsymbol{\Pi}$.
\end{proof}

\subsection{Proof of Corollary~\ref{col:sum_fair_2}}
\label{app:proof:col:sum_fair_2}

\begin{proof}
We consider the case of $\phi$-fairness for two groups, $\setS_1$ and $\setS_2$.
%
Since $\setX = \{\vx\geq0\ : \ \ve_{\setS_k}^\top\vx=\phi_k, k=1,2\}$ is convex, with $\phi_1=\phi$ and $\phi_2=1-\phi$, the optimization problem that the Post-Processing method is solving is the following
\begin{equation}
\min_{\vx\in\setX} \quad \lVert \vx-\vp_\mathrm{o} \rVert^2_2 
\end{equation}
with $\vp_\mathrm{o}$ the solution of the original PageRank, and $\ve_{\setS_1}^\top\vx=\phi_{\mathrm{o}}$ and $\ve_{\setS_2}^\top\vx=1-\phi_{\mathrm{o}}$
The solution of this problem, as it is given by Theorem~\ref{th:eucl_proj_centr_fair_simplex} is 
\begin{equation}
    \left\{
    \begin{array}{ll}
          \vx_{\mathcal{S}_k} = \max\{0, \vy_{\mathcal{S}_k}-\lambda_k^* \mathbf{e}_{\mathcal{S}_k}\} \\
          \sum\limits_{i\in\mathcal{S}_k}\max\{0, y_i-\lambda^*_k \} = \phi_k \\
    \end{array} 
    \right\} \; k =1,2.
\end{equation}
W.l.o.g. we assume that $\phi_1 - \phi_{\mathrm{o}} = \epsilon \geq 0$, i.e., $\phi_1 = \phi_{\mathrm{o}} + \epsilon$ and $\phi_2 = 1 - \phi_{\mathrm{o}} - \epsilon$.

\noindent
Hence, for $\setS_1$ we get
\begin{equation}
    \phi_1 = \phi_{\mathrm{o}} + \epsilon = \sum_{i\in\setS_1} \max\{0,\vp_{\mathrm{o}}(i)-\lambda^*_{1}\} = \sum_{i\in\setS_1}\{\vp_{\mathrm{o}}(i)-\lambda^*_{1}\} = \sum_{i\in\setS_1}\vp_{\mathrm{o}}(i) -\sum_{i\in\setS_1}\lambda^*_{1} = \phi_{\mathrm{o}} - |\setS_1|\lambda_1^* \Rightarrow \lambda_1^* = -\frac{\epsilon}{|\setS_1|}.
\end{equation}
We remove the max operator after the second equation because $\sum_{i \in \setS_1} \vp_{\mathrm{o}}(i) = \phi_{\mathrm{o}} \leq \phi_{\mathrm{o}} + \epsilon$. This implies that the sum needs to be increased. Since the same $\lambda_1^*$ applies to all $i$, each $\vp_{\mathrm{o}}(i)$ will be increased by the same amount, resulting in values that are strictly greater than zero.

\noindent
Let's now split $\setS_2$ as follows:
\begin{equation}
    \setS_2 = \setS_2^+ \cup \setS_2^-, \quad \setS_2^+ = \left\{ i\in \setS_2 : \vp_{\mathrm{o}}(i) \geq \lambda_{2}^*\right\}, \quad \setS_2^- = \left\{ i\in \setS_2 : \vp_{\mathrm{o}}(i) < \lambda_{2}^*\right\}.
\end{equation}
Hence,
\begin{align}
    \phi_2 = 1 - \phi_{\mathrm{o}} - \epsilon = \sum_{i\in\setS_2} \max\{0,\vp_{\mathrm{o}}(i)-\lambda^*_{2}\} = \sum_{i\in\setS_2^+} \{\vp_{\mathrm{o}}(i)-\lambda^*_{2}\} = \sum_{i\in\setS_2^+} \vp_{\mathrm{o}}(i) - \sum_{i\in\setS_2^+}\lambda^*_{2} = 1 - \phi_{\mathrm{o}} - c - |\setS_2^+|\lambda_{2}^* \\
    \Rightarrow 
    \lambda^*_{2} = \frac{\epsilon-c}{|\setS_2^+|} ,
\end{align}
with $\sum_{i\in\setS_2}\vp_{\mathrm{o}}(i) = 1-\phi_{\mathrm{o}}$, $\sum_{i\in\setS_2^+}\vp_{\mathrm{o}}(i) = 1-\phi_{\mathrm{o}}-c$ and $c = \sum_{i\in\setS_2^-}\vp_{\mathrm{o}}(i)$. Resulting to
\begin{equation}
    \vx^*(i) = \left\{
    \begin{array}{ll}
          \vp_{\mathrm{o}}(i) + \frac{\epsilon}{|\setS_1|}, & i\in\setS_1 \\
          \vp_{\mathrm{o}}(i) - \frac{\epsilon-c}{|\setS_2^+|}, & i\in\setS_2^+ \\
          0, & i\in\setS_2^- \\
    \end{array} 
    \right. .
\end{equation}
\end{proof}

\begin{proof}[Proof for FairRARI (Extra)]
This time we consider the general $\boldsymbol{\phi}$-sum-fairness problem, for $K>2$.
FairRARI is solving the following constrained, strongly convex QP
\begin{equation}
\min_{\vx\in\mathrm{\Delta}_{\boldsymbol{\phi}}(\setV)} f(\vx).
\end{equation}
The Lagrangian is
\begin{equation}
    \mathcal{L}(\vx,\boldsymbol{\lambda},\boldsymbol{\mu}) = f(\vx) - \boldsymbol{\lambda}^\top\vx + \boldsymbol{\mu}^\top(\mathbf{Ex}-\boldsymbol{\phi}).
\end{equation}
where $\mathbf{E}$ is the $k\times n$ matrix that has $\ve_{\setS_k}^\top$, $\forall k \in [K]$, as rows.\\
From the stationarity KKT condition we get that
\begin{align}
    \nabla \mathcal{L}(\vx^*,\boldsymbol{\lambda},\boldsymbol{\mu}) = 0 &\Leftrightarrow \nabla f(\vx^*) - \boldsymbol{\lambda} + \mathbf{E}^\top\boldsymbol{\mu} = 0 \\
    &\Leftrightarrow \vx^* = [\vI - (1-\gamma)\vP]^{-1}\boldsymbol{\Pi}(\gamma \boldsymbol{\Pi}^{-1}\vv + \boldsymbol{\lambda} - \mathbf{E}^\top \boldsymbol{\mu}) \\
    &\Leftrightarrow \vx^* = \gamma[\vI - (1-\gamma)\vP]^{-1}\vv + [\vI - (1-\gamma)\vP]^{-1}\boldsymbol{\Pi}(\boldsymbol{\lambda} - \mathbf{E}^\top \boldsymbol{\mu}) \\
    &\Leftrightarrow \vx^* = \vp_{\text{o}} + [\vI - (1-\gamma)\vP]^{-1}\boldsymbol{\Pi}( \boldsymbol{\lambda} - \mathbf{E}^\top \boldsymbol{\mu}).
\end{align}
Also, it holds that 
\begin{align}
    \ve^\top_{\setS_1}\vx^* = \phi_o+\epsilon \Leftrightarrow \ve^\top_{\setS_1}[\vI - (1-\gamma)\vP]^{-1}\boldsymbol{\Pi}( \boldsymbol{\lambda} - \mathbf{E}^\top \boldsymbol{\mu}) = \epsilon,
\end{align}
and
\begin{align}
    \ve^\top_{\setS_2}\vx^* = 1-\phi_o-\epsilon \Leftrightarrow \ve^\top_{\setS_2}[\vI - (1-\gamma)\vP]^{-1}\boldsymbol{\Pi}( \boldsymbol{\lambda} - \mathbf{E}^\top \boldsymbol{\mu}) = -\epsilon.
\end{align}

\end{proof}

\section{Efficient Fair Projections}
\subsection{$\boldsymbol{\phi}$-sum-fairness}
\label{app:proofs:efficient_projections}

\subsubsection{Proof of Theorem~\ref{th:eucl_proj_centr_fair_simplex}}
\label{app:proof:th:eucl_proj_centr_fair_simplex}

\begin{proof}
We have the following set
\begin{equation}
\begin{aligned}
    \mathrm{\Delta}_{\boldsymbol{\phi}}(\setV) := \{\mathbf{x}\geq 0 : \ \mathbf{e_{\setS_k}^\top x} = \phi_k,\ \forall k=[K
    \},
\end{aligned}
\end{equation}
Based on Theorem~\ref{th:proj_hyper_box}, and the fact that the constraints are decomposable, since $\{\setS_k\}_{k=1}^K$ is a partition of $\setV$, the Euclidean projection of a vector $\mathbf{y}$ on this set is
\begin{equation}
    \left[\mathrm{P}_{\mathrm{\Delta}_{\boldsymbol{\phi}}(\setV)}(\vy)\right]_i = \left[\mathrm{P}_{\mathrm{Box}[0, \infty]}(\vy_{\mathcal{S}_k}-\lambda^*_k \mathbf{e}_{\mathcal{S}_k})\right]_i, \ i\in\mathcal{S}_k, \forall \; k \in [K], 
\end{equation}
where the variables $\{\lambda^*_k\}_{k=1}^K$ satisfy
\begin{equation}
    \mathbf{e}^\top_{\mathcal{S}_k} \mathrm{P}_{\mathrm{Box}[0, \infty]}(\vy_{\mathcal{S}_k}-\lambda_k ^*\mathbf{e}_{\mathcal{S}_k}) = \phi_k, \forall \; k \in [K]
\end{equation}
Hence, each element of the projection is 
\begin{equation}
    x_i = \left[\mathrm{P}_{\mathrm{\Delta}_{\boldsymbol{\phi}}}(\vy)\right]_i = \left[\mathrm{P}_{\mathrm{Box}[0, \infty]}(\vy_{\mathcal{S}_k}-\lambda^*_k \mathbf{e}_{\mathcal{S}_k})\right]_i = \max\{0, y_i-\lambda_k^*\},\ i\in\mathcal{S}_k, \forall \; k \in [K]
\end{equation}
or equivalently
\begin{equation}
    \vx_{\mathcal{S}_k} = \max\{0, \vy_{\mathcal{S}_k}-\lambda_k^* \mathbf{e}_{\mathcal{S}_k}\}, \forall \; k \in [K]
\end{equation}
with 
\begin{equation}
    \mathbf{e}^\top_{\mathcal{S}_k} \mathrm{P}_{\mathrm{Box}[0, \infty]}(\vy_{\mathcal{S}_k}-\lambda^*_k \mathbf{e}_{\mathcal{S}_k}) = \phi_k \Leftrightarrow \sum_{i\in\mathcal{S}_k}\max\{0, y_i-\lambda^*_k \} = \phi_k, \forall \; k \in [K].
\end{equation}
\end{proof}

\subsubsection{Solving for the dual variables $\lambda_k^*$} \label{app:dual_variables_sum}
\noindent
For a fixed vector $\vy$, define 
\begin{equation}
    g_k(\lambda_k) := \sum_{i\in\mathcal{S}_k}\max\{0, y_i-\lambda_k \} - \phi_k, \forall \; k \in [K].
\end{equation} 
It is easy to see that each optimal dual variable $\lambda^*_k$ corresponds to the root of the non-smooth, non-linear equation $g_k(\lambda_k^*) = 0$. Since each function $g_k(\cdot)$ is continuous and monotonically non-increasing w.r.t $\lambda_k$ \cite{beck2017first}, the root can be determined via a bisection-search procedure. 
In order to initialize the procedure, we need to determine lower and upper limits $\lambda_k^{\min}$ and $\lambda_k^{\max}$ respectively such that $\lambda_k^{\min} < \lambda_k^{\max}$, $g_k(\lambda_k^{\min})>0$ and $g_k(\lambda_k^{\max})<0$. A suitable choice is
\begin{equation}   
       \lambda_k^{\min} =  \min_{i \in \setS_k} y_i - \frac{\phi_k}{n_k},\; 
       \lambda_k^{\max} = \max_{i \in \setS_k} y_i.
\end{equation}
It can be seen that $0 < g_k(\lambda_k^{\min}) < n_k-1$ and $g_k(\lambda_k^{\max}) = -\phi_k <0$.  

\noindent \textit{\textbf{Complexity:}} For a prescribed exit tolerance $\epsilon_k$, the number of bisection iterations required is $O(\log_2(\lambda_k^{\max}-\lambda_k^{\min})/\epsilon_k) = O(\log_2(1/\epsilon_k))$. 
Since each iteration entails $O(n_k)$ complexity, the overall procedure costs $O(n_k\cdot\log_2(1/\epsilon_k))$. Hence, for a fixed tolerance $\epsilon_k$, the complexity is linear in $n_k$. 
Since we consider $K$ groups of vertices, hence each Fair Projection requires a total of $K$ bisections. These bisections can be performed either sequentially or in parallel. For sequential execution, with fixed tolerance values $\{ \epsilon_k \}_{k=1}^K$, the total complexity is $\sum_{k=1}^K O(n_k) = O(n)$. In contrast, for parallel execution, the complexity is determined by the largest group, i.e., $O(n_k)$, where $n_k = \max_{i \in [K]} n_i$.

\subsection{$\boldsymbol{\alpha}$-min-fairness}

\subsubsection{Proof of Theorem~\ref{th:proj_ind_fair_simplex}}
\label{app:proof:th:proj_ind_fair_simplex}

\begin{proof}
We have the following set
\begin{equation}
\begin{aligned}
    \mathrm{\Delta}^{\boldsymbol{\alpha}}(\setA) := \{\mathbf{x}\geq0\ : \ \mathbf{e^\top x} = 1,\ \min_{i\in\setA_k} x_i \geq \alpha_k,\  \forall \; k \in [K]\},
\end{aligned}
\end{equation}
Based on Theorem~\ref{th:proj_hyper_box}, the Euclidean projection of a vector $\mathbf{y}$ on this set is
\begin{equation}
    \left[\mathrm{P}_{\mathrm{\Delta}^{\boldsymbol{\alpha}}(\setA)}(\vy)\right]_i = \left[\mathrm{P}_{\mathrm{Box}[\boldsymbol{\ell}, \infty]}(\vy-\lambda^* \mathbf{e})\right]_i, \ i \in \setV,
\end{equation}
where the variable $\lambda^*$ satisfies
\begin{equation}
    \mathbf{e}^\top \mathrm{P}_{\mathrm{Box}[\boldsymbol{\ell}, \infty]}(\vy-\lambda^* \mathbf{e}) = 1,
\end{equation}
and $\boldsymbol{\ell}$ is the following vector
\begin{equation}
    \ell_i = \left\{
    \begin{array}{ll}
          \alpha_k, & i \in \mathcal{A}_k \\
          0, & i \in \bar{\mathcal{A}_k} \\
    \end{array} 
    \right. , \ \forall \; k \in [K].
\end{equation}
Hence, each element of the projection is 
\begin{equation}
    x_i = \left[ \mathrm{P}_{\mathrm{Box}[\boldsymbol{\ell}, \infty]}(\vy-\lambda^* \mathbf{e}) \right]_i = \left\{
    \begin{array}{ll}
          \max\{\alpha_k, y_i-\lambda^* \}, & i \in \mathcal{A}_k \\
          \max\{0, y_i-\lambda^* \}, & i \in \bar{\mathcal{A}_k} \\
    \end{array} 
    \right. , \ \forall \; k \in [K].
\end{equation}
with 
\begin{equation}
    \mathbf{e}^\top \mathrm{P}_{\mathrm{Box}[\boldsymbol{\ell}, \infty]}(\vy-\lambda^* \mathbf{e}) = 1 \Leftrightarrow \sum_{k=1}^K \left\{ \sum_{i\in\mathcal{A}_k}\max\{\alpha_k, y_i-\lambda^* \} + \sum_{i\in\bar{\mathcal{A}_k}}\max\{0, y_i-\lambda^* \} \right\} = 1.
\end{equation}
\end{proof}

\subsubsection{Solving for the dual variable $\lambda^*$}
\noindent
For a fixed vector $\vy$, define 
\begin{equation}
    h(\lambda)\! :=\! \sum_{k=1}^K\!\left\{\!\sum_{i\in\mathcal{A}_k}\!\max\{\alpha_k, y_i\! -\!\lambda\} +\!\! \sum_{i\in\bar{\mathcal{A}_k}}\!\max\{0, y_i \! -\! \lambda\}\!\right\} - 1.
\end{equation}
Similarly to the $\boldsymbol{\phi}$-sum-fairness case, it is easy to see that the function $h(\cdot)$ is continuous and monotonically non-increasing w.r.t $\lambda$, with $\lambda^{\min}$ and $\lambda^{\max}$ such that $\lambda^{\min} < \lambda^{\max}$, $h(\lambda^{\min})>0$ and $h(\lambda^{\max})<0$. A suitable choice is
\begin{equation}   
       \lambda^{\min} =  \min_{i \in \setV} y_i - \frac{1}{n},\; 
       \lambda^{\max} = \max_{i \in \setV} y_i.
\end{equation}
It can be seen that $h(\lambda^{\min}) > 0$ and $h(\lambda^{\max}) <0$.

\noindent
\textit{\textbf{Complexity:}} Following the previous analysis, for a fixed tolerance $\epsilon$, the complexity of the Fair Projections is linear to the number of vertices of the graph, i.e., $O(n)$, since for the $\boldsymbol{\alpha}$-min-fairness case we need to perform only one bisection per projection.

\subsection{$\boldsymbol{\phi}$-sum + $\boldsymbol{\alpha}$-min-fairness}
\subsubsection{Proof of Theorem~\ref{th:proj_cen_ind_fair_simplex}}
\label{app:proof:th:proj_cen_ind_fair_simplex}

\begin{proof}
We have the following set
    \begin{equation}
    \begin{aligned}
        \mathrm{\Delta}_{\boldsymbol{\phi}}^{\boldsymbol{\alpha}}(\setV,\setA) := \{\vx\geq 0 : \ \mathbf{e^\top_{\mathcal{S}_k} x} = \phi_k, \ \min_{i\in\setA_k} x_i \geq \alpha_k, \ \forall \; k \in [K]\},
    \end{aligned}
    \end{equation}
Based on Theorem~\ref{th:proj_hyper_box}, and the fact that the constraints are decomposable, since $\{\setS_k\}_{k=1}^K$ is a partition of $\setV$, the Euclidean projection of a vector $\mathbf{y}$ on this set is
\begin{equation}
    \left[\mathrm{P}_{\mathrm{\Delta}_{\boldsymbol{\phi}}^{\boldsymbol{\alpha}}(\setV,\setA)}(\vy)\right]_i = \left[\mathrm{P}_{\mathrm{Box}[\boldsymbol{\ell}, \infty]}(\vy-\lambda^*_k \mathbf{e}_{\mathcal{S}_k})\right]_i, \ i\in\mathcal{S}_k, \ \forall \; k \in [K], 
\end{equation}
where the variables $\{\lambda^*_k\}_{k=1}^K$ satisfy
\begin{equation}
    \mathbf{e}^\top_{\mathcal{S}_k} \mathrm{P}_{\mathrm{Box}[\boldsymbol{\ell}, \infty]}(\vy_{\mathcal{S}_k}-\lambda_k^* \mathbf{e}_{\mathcal{S}_k}) = \phi_k, \ \forall \; k \in [K], 
\end{equation}
and $\boldsymbol{\ell}$ the following vector
\begin{equation}
    \ell_i = \left\{
    \begin{array}{ll}
          \alpha_k, & i \in \mathcal{A}_k \\
          0, & i \in \bar{\mathcal{A}_k} \\
    \end{array} 
    \right. , \ \forall \; k \in [K].
\end{equation}
Hence, each element of the projection is 
\begin{equation}
    x_i = \left[ \mathrm{P}_{\mathrm{Box}[\boldsymbol{\ell}, \infty]}(\vy_{\mathcal{S}_k}-\lambda_k^* \mathbf{e}_{\mathcal{S}_k}) \right]_i = \left\{
    \begin{array}{ll}
          \max\{\alpha_k, y_i-\lambda_k^* \}, & i \in \mathcal{A}_k \\
          \max\{0, y_i-\lambda_k^* \}, & i \in \bar{\mathcal{A}}_k \\
    \end{array} 
    \right. , \ \forall \; k \in [K]
\end{equation}
with 
\begin{equation}
    \mathbf{e}^\top_{\mathcal{S}_k} \mathrm{P}_{\mathrm{Box}[\boldsymbol{\ell}, \infty]}(\vy-\lambda^*_k \mathbf{e}_{\mathcal{S}_k}) = \phi_k \Leftrightarrow \sum_{i\in\mathcal{A}_k}\max\{\alpha_k, y_i-\lambda^*_k \} + \sum_{i\in\bar{\mathcal{A}_k}}\max\{0, y_i-\lambda^*_k \} = \phi_k, \ \forall \; k \in [K].
\end{equation}
\end{proof}

\subsubsection{Solving for the dual variables $\lambda_k^*$}
\noindent
For a fixed vector $\vy$, define 
\begin{equation}
    u_k(\lambda_k) := \sum_{i\in\mathcal{A}_k}\max\{\alpha_k, y_i-\lambda_k \} + \sum_{i\in\Bar{\mathcal{A}_k}}\max\{0, y_i-\lambda_k \} - \phi_k,
\end{equation}
for all $k\in[K]$.
Similarly to the previous cases, each function $u_k(\cdot)$ is continuous and monotonically non-increasing w.r.t $\lambda_k$, with $\lambda_k^{\min}$ and $\lambda_k^{\max}$ such that $\lambda_k^{\min} < \lambda_k^{\max}$, $v_k(\lambda_k^{\min})>0$ and $v_k(\lambda_k^{\max})<0$, implying that bisection-search can be applied to solve for the root of each equation $u_k(\lambda_k) = 0, \forall \; k \in [K]$. 
A suitable choice is
\begin{equation}   
   \lambda_k^{\min} =  \min_{i \in \setS_k} y_i - \frac{\phi_k}{n_k},\; 
   \lambda_k^{\max} = \max_{i \in \setS_k} y_i.
\end{equation}
It can be seen that $v_k(\lambda_k^{\min}) > 0$ and $v_k(\lambda_k^{\max}) <0$.

\noindent
\textit{\textbf{Complexity:}} The complexity of the Fair Projections in this case is identical to that of the $\boldsymbol{\phi}$-sum-fairness, i.e., $O(n)$ for sequential execution, and $O(n_k)$, with $n_k = \max_{i\in[K]}n_i$, for parallel.

\section{Algorithms} \label{app:algos}

\begin{algorithm}[ht!]
\begin{small}
    \caption{\textsc{Fair-Projection}}
    \label{alg:fair_proj}
\begin{algorithmic}
    \STATE {\bfseries Input:} $\vy$, $\setX$, $\setG = (\setV,\setE)$ \\
    \STATE {\bfseries Output:} Fair Projection outcome \\
    \COMMENT{The following loop can be performed either sequentially or in parallel}
    \FOR{$k \in 1,\ldots,K$} 
    \STATE $\lambda_k^{\max} \leftarrow \max_{i \in \setS_k} y_i$
    \IF{$\setX = \Delta_{\boldsymbol{\phi}}(\setV)$}
        \STATE $\lambda_k^{\min} \leftarrow \min_{i \in \setS_k} y_i - \frac{\phi_k}{n_k}$        
        \STATE $\lambda_k^* \leftarrow \textsc{Bisection} (g_k,\lambda_k^{\min},\lambda_k^{\max},\epsilon)$ \\
        \STATE $x^*_i \leftarrow \max\{0, y_i-\lambda_k^* \},\ i\in\setS_k$
    \ELSIF{$\setX = \Delta^{\boldsymbol{\alpha}}(\setA)$}
        \STATE $\lambda^{\min} \leftarrow \min_{i \in \setV} y_i - \frac{1}{n}$, $\lambda^{\max} \leftarrow \max_{i \in \setV} y_i$
        \STATE $\lambda^* \leftarrow \textsc{Bisection} (h,\lambda^{\min},\lambda^{\max},\epsilon)$ \\
        \STATE $x^*_i \leftarrow \left\{
        \begin{array}{ll}
              \max\{\alpha_k, y_i-\lambda^* \}, & i \in \mathcal{A}_k \\
              \max\{0, y_i-\lambda^* \}, & i \not\in \mathcal{A}_k \\
        \end{array} 
        \right.$
    \ELSIF{$\setX = \Delta^{\boldsymbol{\alpha}}_{\boldsymbol{\phi}}(\setV,\setA)$}
        \STATE $\lambda_k^{\min} \leftarrow \min_{i \in \setS_k} y_i - \frac{\phi_k}{n_k}$
        \STATE $\lambda_k^* \leftarrow \textsc{Bisection} (v_k,\lambda_k^{\min},\lambda_k^{\max},\epsilon)$ \\
        \STATE $x^*_i \leftarrow \left\{
        \begin{array}{ll}
              \max\{\alpha_k, y_i-\lambda^*_k \}, & i \in \mathcal{A}_k \\
              \max\{0, y_i-\lambda^*_k \}, & i \not\in \mathcal{A}_k \\
        \end{array} 
        \right.,\ i\in\setS_k$
    \ENDIF
    \ENDFOR \\
    \STATE {\bfseries return} $\vx^*$
\end{algorithmic}
\end{small}
\end{algorithm}

\begin{algorithm}[ht!]
\begin{small}
    \caption{\textsc{Bisection}}
    \label{alg:bisection}
\begin{algorithmic}
    \STATE {\bfseries Input:} $f$, $\lambda^{\min}$, $\lambda^{\max}$, $\epsilon$ \\
    \STATE {\bfseries Output:} Optimal $\lambda$ value
    \STATE {\bfseries Initialize:} $\lambda_u \leftarrow \lambda^{\max}$, $\lambda_l \leftarrow \lambda^{\min}$
    \WHILE{$f(\lambda_l) - f(\lambda_u) \geq \epsilon$}
        \STATE $\lambda_m \leftarrow (\lambda_l + \lambda_u)/2$ \\
        \IF{$f(\lambda_m)<0$}
        \STATE $\lambda_u \leftarrow \lambda_m$ \\
        \ELSE
        \STATE $\lambda_l \leftarrow \lambda_m$ \\
        \ENDIF
    \ENDWHILE \\
    \STATE {\bfseries return} $\lambda_m$
\end{algorithmic}
\end{small}
\end{algorithm}


\section{Datasets} \label{app:datasets}
A summary of the statistics of all the datasets used in our experiments can be found in Tab.~\ref{tab:datasets:full}.

\begin{table*}[htb!] 
\begin{center}
\begin{small}
\setlength{\tabcolsep}{3.5pt} 
\begin{tabular}{lccccccc}
\toprule
Dataset    &   & $n$    & $m$  & $K$  & $n_i$ & $n_i/n$ & $\boldsymbol{\phi}_\mathrm{o}$ \\
\midrule
PolBooks   & UnDirected & $92$   & $374$  & $2$  & $(43,49)$ & $(0.47,0.53)$ & $(0.47,0.53)$ \\
PolBlogs   & Directed & $1222$  & $16717$ & $2$ & $(586,636)$ & $(0.48,0.52)$ & $(0.25,0.75)$ \\
Erdos       & UnDirected & $6927$ & $8472$ & $2$ & $(57,6870)$ & $(0.01,0.99)$ & $(0.01,0.99)$ \\
TwitchPTBR-$2$   & UnDirected & $1912$ & $31299$ & $2$ & $(661,1251)$ & $(0.35,0.65)$ & $(0.34,0.66)$ \\
TwitchRU-$2$   & UnDirected & $4385$ & $37304$ & $2$ & $(1075,3310)$ & $(0.25,0.75)$ & $(0.25,0.75)$ \\
TwitchES-$2$   & UnDirected & $4648$ & $59382$ & $2$ & $(1360,3288)$ & $(0.29,0.71)$ & $(0.30,0.70)$ \\
TwitchFR-$2$   & UnDirected & $6549$ & $112666$ & $2$ & $(2414,4135)$ & $(0.37,0.63)$ & $(0.36,0.64)$ \\
TwitchENGB-$2$   & UnDirected & $7126$ & $35324$ & $2$ & $(3888,3238)$ & $(0.55,0.45)$ & $(0.56,0.44)$ \\
TwitchDE-$2$   & UnDirected & $9498$ & $153138$ & $2$ & $(5742,3756)$ & $(0.60,0.40)$ & $(0.60,0.40)$ \\
DBLP-G      & UnDirected & $16501$ & $66613$ & $2$ & $(4242,12259)$ & $(0.26,0.74)$ & $(0.25,0.75)$  \\
DBLP-P     & UnDirected & $16501$   & $66613$ & $2$  & $(1325,15176)$ & $(0.08,0.92)$ & $(0.08,0.92)$  \\ 
Twitter       & UnDirected & $18470$ & $48053$ & $2$ & $(11355,7115)$ & $(0.61,0.39)$ & $(0.61,0.39)$ \\
Deezer     & UnDirected & $28281$ & $92752$ & $2$ & $(12538,15743)$ & $(0.44,0.56)$ & $(0.44,0.56)$\\
Github     & UnDirected & $37700$ & $289003$ & $2$ & $(9739,27961)$ & $(0.26,0.74)$ & $(0.25,0.75)$ \\
Slashdot & Directed & $82168$ & $948464$ & $2$ & $(20543,61625)$ & $(0.25,0.75)$ & $(0.27,0.73)$ \\
TwitchGAMERS & UnDirected & $168114$ & $6797557$ & $2$ & $(79033,89081)$ & $(0.47,0.53)$ & $(0.47,0.53)$ \\
\midrule
TwitchPTBR   & UnDirected & $1912$ & $31299$ & $4$ & $(560,101,178,1073)$ & $(0.29,0.05,0.10,0.56)$ & $(0.29,0.05,0.11,0.55)$ \\
TwitchRU   & UnDirected & $4385$ & $37304$ & $4$ & $(956,119,355,2955)$ & $(0.22,0.03,0.08,0.67)$ & $(0.22,0.03,0.08,0.67)$ \\
TwitchES   & UnDirected & $4648$ & $59382$ & $4$ & $(1253,107,288,3000)$ & $(0.27,0.02,0.06,0.65)$ & $(0.27,0.02,0.07,0.64)$ \\
TwitchFR   & UnDirected & $6549$ & $112666$ & $4$ & $(2255,159,249,3886)$ & $(0.34,0.02,0.04,0.60)$ & $(0.34,0.02,0.04,0.60)$ \\
TwitchENGB   & UnDirected & $7126$ & $35324$ & $4$ & $(3701,187,197,3041)$ & $(0.52,0.03,0.03,0.42)$ & $(0.54,0.02,0.03,0.41)$ \\
TwitchDE   & UnDirected & $9498$ & $153138$ & $4$ & $(5380,362,235,3521)$ & $(0.57,0.04,0.02,0.37)$ & $(0.58,0.03,0.02,0.37)$ \\
\bottomrule
\end{tabular}
\end{small}
\end{center}
\caption{Full summary of dataset statistics: number of vertices ($n$), number of edges ($m$), number of groups ($K$), size of each group ($n_i$), group-wise vertex proportions in the graph ($n_i/n$), and fairness values of original PR ($\boldsymbol{\phi}_\mathrm{o}$)}
\label{tab:datasets:full}
\end{table*}


\noindent \textbf{Political Books Dataset (PolBooks)} \cite{adamic2005political}:
The vertices of the network are books on US politics included in the Amazon catalog, with an edge connecting two books if they are frequently co-purchased by the same buyers\footnote{\url{https://websites.umich.edu/~mejn/netdata/}}. Each book is categorized based on its political stance, with possible labels including liberal, neutral, and conservative. In some cases we use a $2$ group alternative of this dataset, the \textsc{PolBooks (2)}, where we focused solely on the subgraph formed by liberal and conservative books, resulting in $92$ vertices connected by a total of $374$ edges, as curated by \citet{anagnostopoulos2018fair, anagnostopoulos2020spectral}.

\noindent \textbf{Political Blogs Dataset (PolBlogs)} \citep{adamic2005political}:
The vertices of the network are weblogs on US politics, with edges representing hyperlinks. Each blog is categorized by its political stance, left or right, with the left ones composing the protected set.

\noindent \textbf{Paul Erd\H{o}s co-authorship network (Erdos)} \citep{nr-aaai15}:
This is the co-authorship graph around Paul Erd\H{o}s. The network is as of 2002, and contains people who have, directly and indirectly, written papers with Paul Erd\H{o}s. This network is used to define the ``Erd\H{o}s number'', i.e., the distance between any vertex and Paul Erd\H{o}s. The attribute associated with each author is their gender. This attribute is being created by the names of the authors using the help of ChatGPT.

\noindent \textbf{DBLP Gender co-authorship network (DBLP-G)} \citep{tsioutsiouliklis2022link}: This co-authorship network is constructed from DBLP and includes a subset of data mining and database conferences spanning the years $2011$ to $2020$. The attribute associated with each author is their gender, which is inferred using the Python package, \emph{gender-guesser}\footnote{\url{https://pypi.org/project/gender-guesser/}}, as curated by \citet{tsioutsiouliklis2022link}.

\noindent \textbf{DBLP Publication co-authorship network (DBLP-P)} \citep{tsioutsiouliklis2022link}: The same network as DBLP-G but with the attribute associated with each author being their if they are newcomers, i.e., if their first publication is in $2016$ or later, as curated by \citet{tsioutsiouliklis2022link}.

\noindent \textbf{Twitter Retweet Political Network (Twitter)} \citep{rossi2015network, conover2011political}:
The vertices are Twitter users, and the edges represent whether the users have retweeted each other. 
The attribute associated with each user is their political orientation \citep{tsioutsiouliklis2021fairness}. 



\noindent \textbf{Deezer Europe Social Network (Deezer)} \citep{rozemberczki2020characteristic}:
The vertices represent users of Deezer from European countries, and the edges are mutual follower connections among them. The attribute associated with each user is their gender.

\noindent \textbf{GitHub Developers (GitHub)} \citep{rozemberczki2019multiscale}:
The vertices represent developers in GitHub who have starred at least $10$ repositories, and the edges are mutual follower relationships between them. The attribute associated with each user is whether they are web or a machine learning developers.

\noindent \textbf{Slashdot Users Network (Slashdot)} \citep{leskovec2009community}:
A directed network representing friend/foe relationships between users on Slashdot. Since this dataset lacks group labels, the authors of \cite{wang2025fairness} apply the METIS graph partitioning algorithm~\cite{karypis1997metis} to divide the vertices into groups.

\noindent \textbf{Twitch Users Social Networks (TwitchPTBR/RU/ES/FR/ENGB/DE)} \citep{rozemberczki2019multiscale}:
The vertices represent users of Twitch that stream in a certain language (PTBR/RU/ES/FR/ENGB/DE), and the edges are mutual friendships between them. 
These datasets are being used with either $2$ or $4$ groups. For the $2$ group case the attribute associated with each user is if the user is using mature language. For the $4$ group case we split each on of the previous groups into $2$ more using the "partner" feature of the dataset.

\noindent \textbf{Twitch Gamers Social Networks (TwitchGAMERS)} \citep{rozemberczki2021twitch}:
The vertices represent users of Twitch collected from the public API in Spring $2018$, and the edges are mutual follower between them. The attribute associated with each user is again if the user is using mature language.

\section{Baselines} \label{app:baselines}
In this section we describe the details of the baselines used in our experiments. 

$\triangleright$ In the work of \citet{tsioutsiouliklis2021fairness}, they consider the case of having just $2$ groups of vertices, the group $R$ (protected) and the group $B$ (unprotected). Hence, they achieve $\boldsymbol{\phi}$-sum-fairness when the sum of the scores in group $R$ is equal to $\phi$ and in $B$ equal to $1-\phi$.

\noindent
$\bullet$ FSPR: Considering $\mathbf{p}_{\mathrm{o}}$ as the Original PageRank solution, $\mathbf{Q} = \gamma[\mathbf{I}-(1-\gamma)\vP]^{-1}$, and $\mathbf{Q}_R$ the vector that is the sum of the columns of $\mathbf{Q}$ in the set $R$, they define the following optimization problem
\begin{equation}
\begin{aligned}
\min_{\vx} \quad & \lVert \vx^\top \mathbf{Q} -\mathbf{p}_{\mathrm{o}} \rVert \\
\textrm{s.t.} \quad & \vx^\top \mathbf{Q}_R = \phi \\
  & \sum_{i=1}^n x_i = 1 \\
  & 0 \leq x_i \leq 1,\ i=1,\ldots,n \\
\end{aligned}
\end{equation}

\noindent
$\bullet$ LFPR: For the LFPR family of algorithms they modify the transition matrix and the teleportation vector in order to achieve the desired fairness. So, the modified matrices and vectors are the following:

$\circ$ \ $\mathrm{LFPR_N}$: Let $out_R(i)$ and $out_B(i)$ be the number of outgoing edges from vertex $i$ to red and blue vertices, respectively. They define $\mathbf{P}_R$ as the stochastic matrix that handles the transition to red vertices, or random jumps if such edges do not exist. 
\begin{equation}
    \vP_R[i,j] = \left\{
    \begin{array}{ll}
          \frac{1}{out_R(i)} & \text{if } (i,j)\in\setE \text{ and } j\in R \\
          \frac{1}{|R|} & \text{if } out_R(i)=0 \text{ and } j\in R \\
          0 & \mathrm{otherwise} \\
    \end{array} 
    \right. 
\end{equation}
The transition matrix $\vP_B$ for the blue vertices is defined accordingly. Then, the final transition matrix $\vP_N$ for $\mathrm{LFPR_N}$ is:
\begin{equation}
    \vP_N = \phi \vP_R + (1-\phi)\vP_B.
\end{equation}
Then, they also modify the teleportation vector $\vv_N$, with $\vv_N[i] = \frac{\phi}{|R|}$, if $i \in R$, and $\vv_N[i] = \frac{\phi}{|B|}$, if $i \in B$. Hence, the $\mathrm{LFPR_N}$ PageRank vector is
\begin{equation}
    \mathbf{p}_N^\top = (1-\gamma)\mathbf{p}_N^\top\vP_N + \gamma \vv_N^\top.
\end{equation}

$\circ$ $\mathrm{LFPR_U}$: In this case we have the following:
\begin{equation}
    \vP_L[i,j] = \left\{
    \begin{array}{ll}
          \frac{\phi}{out_R(i)} & \text{if } (i,j)\in\setE \text{ and } j\in L_R \\
          \frac{1-\phi}{out_B(i)} & \text{if } (i,j)\in\setE \text{ and } j\in L_B \\
          0 & \mathrm{otherwise} \\
    \end{array} 
    \right. ,
\end{equation}
with the $\mathrm{LFPR_U}$ PageRank vector being:
\begin{equation}
    \mathbf{p}_L^\top = (1-\gamma)\mathbf{p}_L^\top(\vP_L+\delta_R\mathbf{x}^\top+\delta_B\mathbf{y}^\top) + \gamma \vv_N^\top,
\end{equation}
with $\vx[i]=\frac{1}{|R|}$, if $i\in R$, $\vy[i]=\frac{1}{|B|}$, if $i\in B$, $\delta_R(i) = \phi - \frac{(1-\phi)\cdot out_R(i)}{out_B(i)}$, and $\delta_B(i) = (1-\phi) - \frac{\phi\cdot out_R(i)}{out_B(i)}$.

$\circ$ $\mathrm{LFPR_P}$: This case is similar to the previous one, but with $\vx[i]=\frac{\mathbf{p}_{\mathrm{o}}[i]}{\sum_{i\in R}\mathbf{p}_{\mathrm{o}}[i]}$, if $i\in R$, and $\vy[i]=\frac{\mathbf{p}_{\mathrm{o}}[i]}{\sum_{i\in B}\mathbf{p}_{\mathrm{o}}[i]}$, if $i\in B$.\\

$\triangleright$ In a separate line of work, FairWalk \cite{rahman2019fairwalk} and CrossWalk \cite{khajehnejad2022crosswalk} are fairness-aware adaptations of node2vec \cite{grover2016node2vec} that modify the random walk process to ensure better representation of minority groups in graph embeddings.

\noindent
$\bullet$ FairWalk: FairWalk can be seen as a variant of $\mathrm{LFPR_N}$. We will present it in the case of two groups. Its teleportation vector being $\vv_\mathrm{FairWalk}=0$ and the transition matrix for the red vertices is the following
\begin{equation}
    \vP_R[i,j] = \left\{
    \begin{array}{ll}
          \frac{1}{out_R(i)} & \text{if } (i,j)\in\setE \text{ and } j\in R \\
          0 & \mathrm{otherwise} \\
    \end{array} 
    \right. 
\end{equation}
Similarly, the transition matrix $\vP_B$ for the blue vertices is defined accordingly. Then, the final transition matrix $\vP_\mathrm{FairWalk}$ for FairWalk is:
\begin{equation}
    \vP_\mathrm{FairWalk} = \phi \vP_R + (1-\phi)\vP_B.
\end{equation}
Not that $\vP_R$ of FairWalk is the same as that of $\mathrm{LFPR_N}$, but without the second branch.

\noindent
$\bullet$ CrossWalk: CrossWalk retains the core ideas of FairWalk, with the addition of carefully re-weighting inner-group connections by assigning larger weights to edges closer to group peripheries.

$\triangleright$ \citet{wang2025fairness} propose an approach for incorporating group-fairness into the PageRank algorithm by reweighting the transition probabilities in the underlying transition matrix. They propose two different algorithms, FairGD and AdaptGD.

\noindent
$\bullet$ FairGD: FairGD is the algorithm proposed to solve the following problem:

Given a set of target fairness levels $\{\phi_k\}_{k=1}^K$ satisfying $\phi_k \geq 0, \sum_{k=1}^K \phi_k = 1$, the sum of the PR scores for each group $k$ equals $\phi_k$, the goal is to find a new transition matrix $\vP^*$ that minimizes the fairness-loss function \mbox{$L(\vP,\gamma,\vv) = \frac{1}{K} \sum_{k=1}^K (\ve_{\setS_k}^\top \vp_{\mathrm{o}}-\phi_k)^2$}, where $\vp_{\mathrm{o}}$ is the original PR vector given the parameters $(\vP,\gamma,\vv)$, i.e.,
\begin{equation}
    \vP^* = \argmin_{\hat{\vP}\in \mathcal{C}(\vP)} L(\hat{\vP},\gamma,\vv)
\end{equation}
where
\begin{equation}
    \mathcal{C}(\vP) = \{ \hat{\vP} \in \mathbb{R}_{\geq 0}^{n \times n} \, | \, \hat{\vP} \mathbf{1}=\mathbf{1}, \hat{\vP}_{ij}=0 \text{ if } \vP_{ij}=0 \}
\end{equation}
is the set of row-stochastic matrices that have non-negative entries, and the edge-reweighting of $\setG$ can take place only on existing edges.

\noindent
$\bullet$ AdaptGD: AdaptGD is a FairGD variant which enhances PageRank by considering random walks that (re)start at each group and by penalizing the fairness loss for each such (re)start. More specifically, let $\vv_k = \frac{1}{|\setS_k|}\ve_{\setS_k}$  be the vector indicating that the random walk restarts uniformly only from group $\setS_k$. AdaptGD is proposed to solve the following problem:

Given a set of target fairness levels $\{\phi_k\}_{k=1}^K$ satisfying $\phi_k \geq 0, \sum_{k=1}^K \phi_k = 1$, the sum of the PR scores for each group $k$ equals $\phi_k$, the goal is to find a new transition matrix $\vP^*$ that minimizes the \textit{group-adapted} fairness-loss function 
\begin{equation}
    L_g(\vP,\gamma) = \frac{1}{K} \sum_{\ell=1}^K L(\vP,\gamma,\vv_{\ell}) = \frac{1}{K^2} \sum_{\ell=1}^K \sum_{k=1}^K (\ve_{\setS_k}^\top \vp_{\ell}-\phi_k)^2,
\end{equation}
where $\vp_{\ell}$ is the PR vector given the parameters $(\vP,\gamma,\vv_{\ell})$, i.e.,
\begin{equation}
    \vP^* = \argmin_{\hat{\vP}\in \mathcal{C}(\vP)} L(\hat{\vP},\gamma).
\end{equation}

$\bullet$ Restricted FairGD and Restricted AdaptGD: In addition, they propose some restricted versions of FairGD and AdaptGD, in which $\hat{\vP}$ is constrained to $\hat{\vP}\in \mathcal{C}_R(\vP) \cap  \mathcal{C}(\vP)$, where 
\begin{equation}
    \mathcal{C}_R(\vP) = \{ \hat{\vP} \in \mathbb{R}_{\geq 0}^{n \times n} \, | \, (1-\delta)\vP[i,j]-\epsilon \leq \hat{\vP}[i,j] \leq (1+\delta)\vP[i,j]+\epsilon \},
\end{equation}
for $\delta,\epsilon \geq 0$.

\section{Additional Experiments} \label{app:add_exps}

\subsection{Experimental Setup} \label{app:setup}
All experiments were performed on a single machine, with Intel i$7$-$13700$K CPU @ $3.4$GHz and $128$GB of main memory running Python $3.12.0$ and C++ $11$. The code of the baselines was obtained from the publicly available repositories associated with the respective publications. We based our implementation of FairRARI on the NetworkX\footnote{\url{https://networkx.org/}} implementation of PageRank, in which we incorporate the Fair Projections. For the stopping criteria we set $10^4$ iterations or $\epsilon = n\cdot10^{-6}$ (the same as NetworkX).
Regarding the parameters of PageRank for all baselines, we set $\gamma=0.15$ and $\vv$ to be the uniform vector.

\subsection{Wall-Clock Time} \label{app:add_exps:time}
We evaluate the wall-clock time of FairRARI and the post-processing method, comparing them with prior methods. Each method and dataset is tested over 100 runs, reporting mean and standard deviation for each $\phi$.
Since our implementation is in Python while the prior methods use C++, we do not directly compare the wall-clock times. Instead, we compare the percentage of increase in runtime  of each respective Fair PR (FPR) implementation, relative to the original PR (OPR), i.e.,
$100 \cdot (t(\text{FPR})-t(\mathrm{oPR}))/t(\mathrm{oPR})$.
%
Fig.~\ref{fig:time} shows
that FairRARI does not significantly impact the time complexity of the OPR algorithm. In contrast, prior methods demonstrate considerable instability in terms of wall-clock time, with deviations in some experiments exceeding $5000$.
This can be a matter of implementation, but the take home message here is that FairRARI maintains the 
same complexity as the OPR. This is evident also in the Figs.~\ref{fig:time:min} and~\ref{fig:time:sum+min}, in which FairRARI can converge faster than the OPR. Note that for FairRARI we use the sequential and not the parallel implementation.

\begin{figure}[ht!]
\centering
\begin{subfigure}[b]{0.9\textwidth}
  \centering
  \includegraphics[width=1\linewidth]{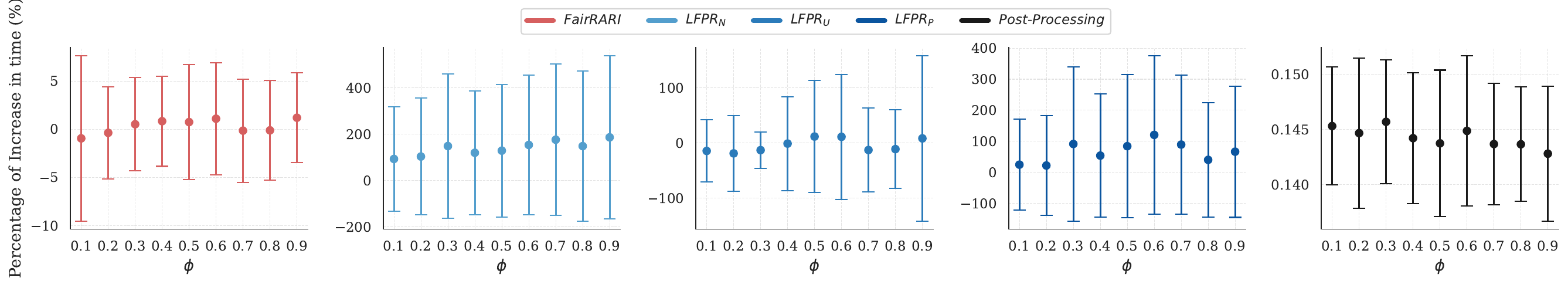}
  \caption{TwitchGAMERS}
\end{subfigure}
\begin{subfigure}[b]{0.9\textwidth}
  \centering
  \includegraphics[trim={0 0 0 1cm},clip,width=1\linewidth]{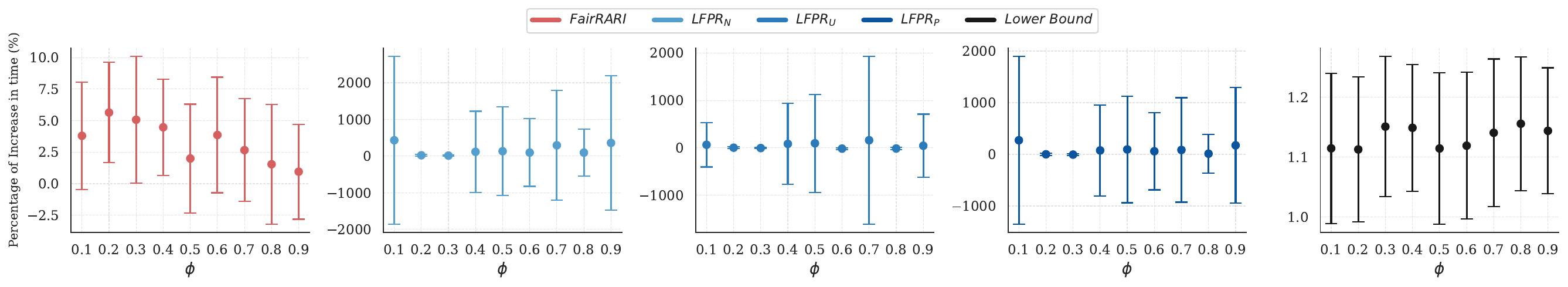}
  \caption{Github}
\end{subfigure}
\begin{subfigure}[b]{0.9\textwidth}
  \centering
  \includegraphics[trim={0 0 0 1cm},clip,width=1\linewidth]{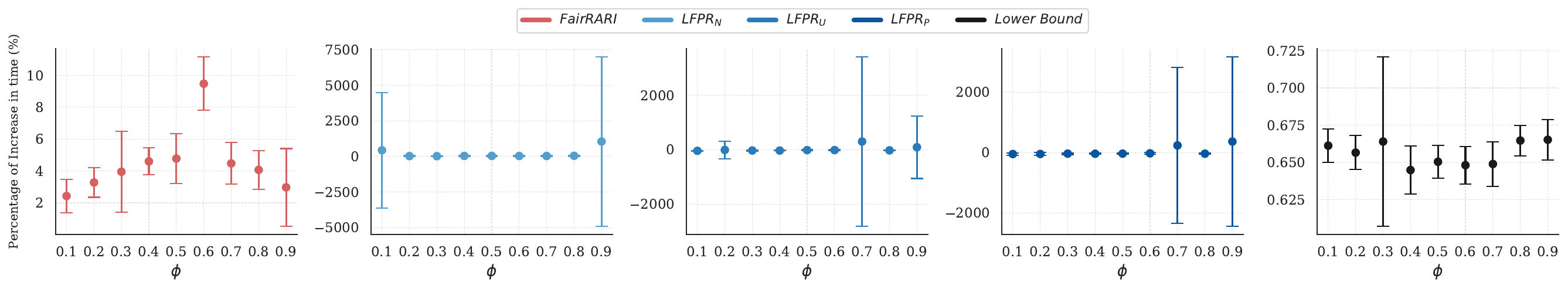}
  \caption{TwitchDE-2}
\end{subfigure}
\caption{Percentage of increase in time over original PR for each one of the implementations, for the $\boldsymbol{\phi}$-sum-fairness problem.}
\label{fig:time}
\end{figure}

\begin{figure}[ht!]
\centering
\begin{subfigure}[b]{0.25\linewidth}
  \centering
  \includegraphics[width=1\linewidth]{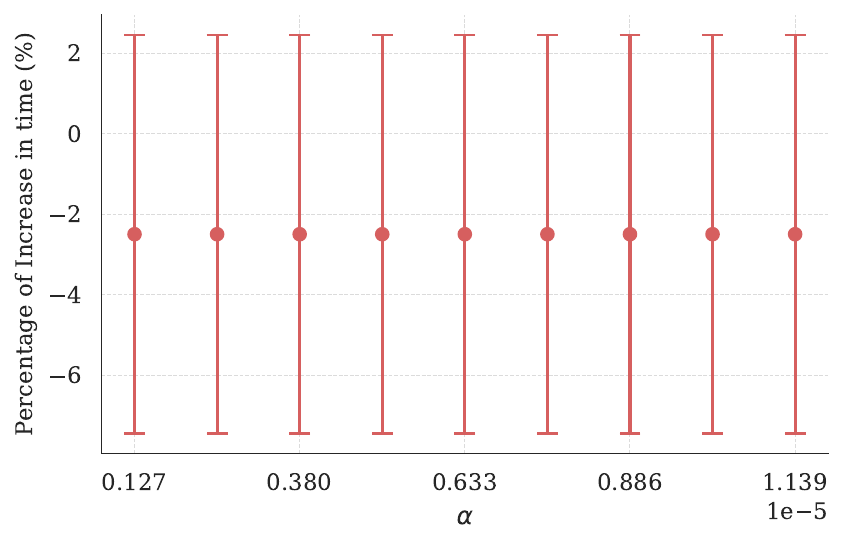}
  \caption{TwitchGAMERS}
\end{subfigure}
\quad \quad
\begin{subfigure}[b]{0.25\linewidth}
  \centering
  \includegraphics[width=1\linewidth]{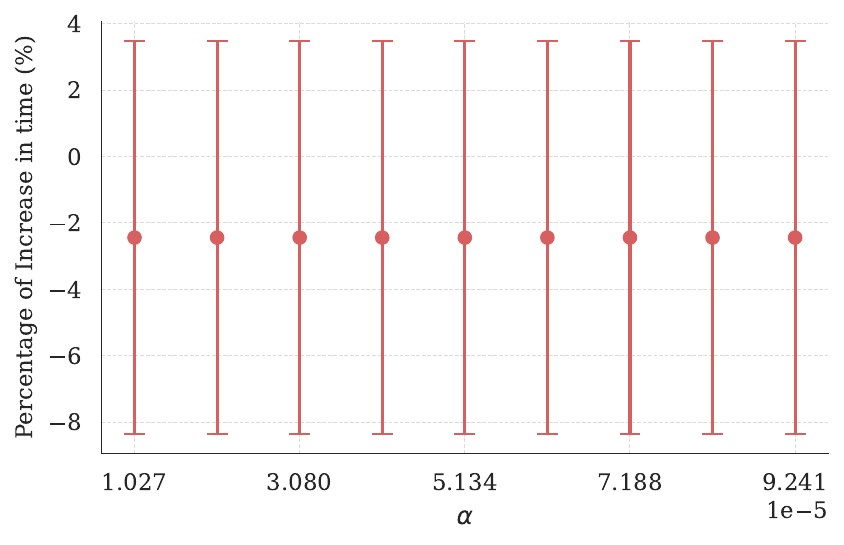}
  \caption{Github}
\end{subfigure}
\quad \quad
\begin{subfigure}[b]{0.25\linewidth}
  \centering
  \includegraphics[width=1\linewidth]{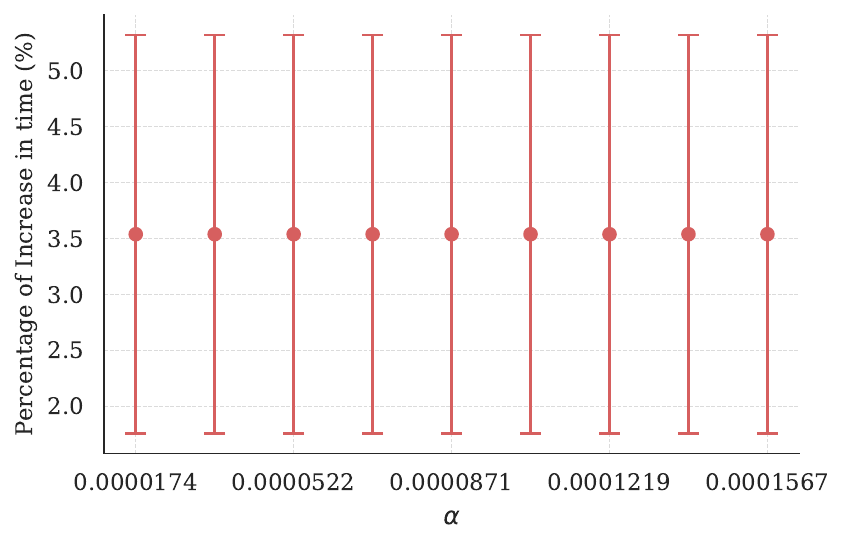}
  \caption{TwitchDE-$2$}
\end{subfigure}
\caption{Percentage of increase in time over original PR for FairRARI, for the $\boldsymbol{\alpha}$-min-fairness problem.}
\label{fig:time:min}
\end{figure}

\begin{figure}[ht!]
\centering
\begin{subfigure}[b]{0.25\linewidth}
  \centering
  \includegraphics[width=1\linewidth]{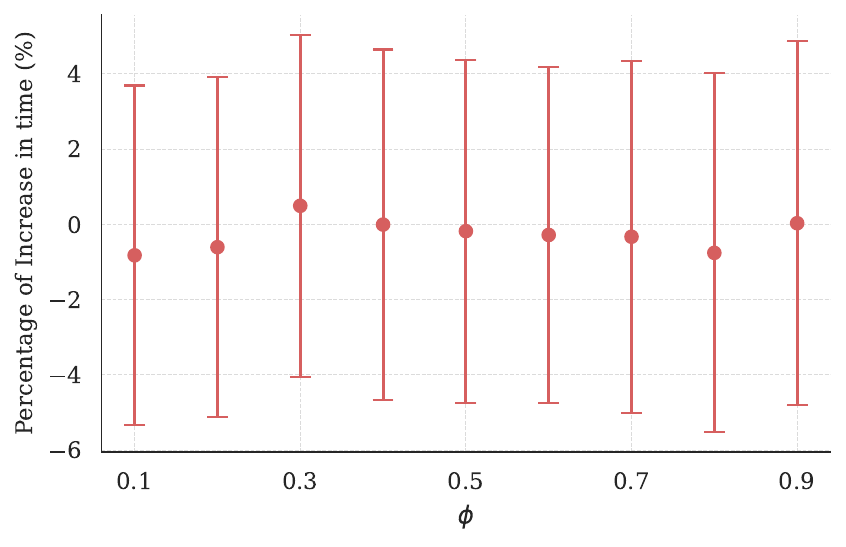}
  \caption{TwitchGAMERS}
\end{subfigure}
\quad \quad
\begin{subfigure}[b]{0.25\linewidth}
  \centering
  \includegraphics[width=1\linewidth]{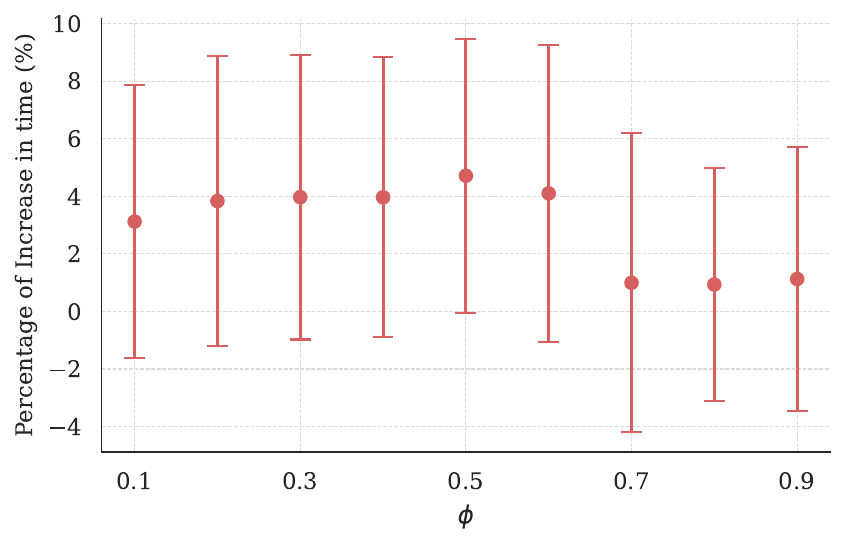}
  \caption{Github}
\end{subfigure}
\quad \quad
\begin{subfigure}[b]{0.25\linewidth}
  \centering
  \includegraphics[width=1\linewidth]{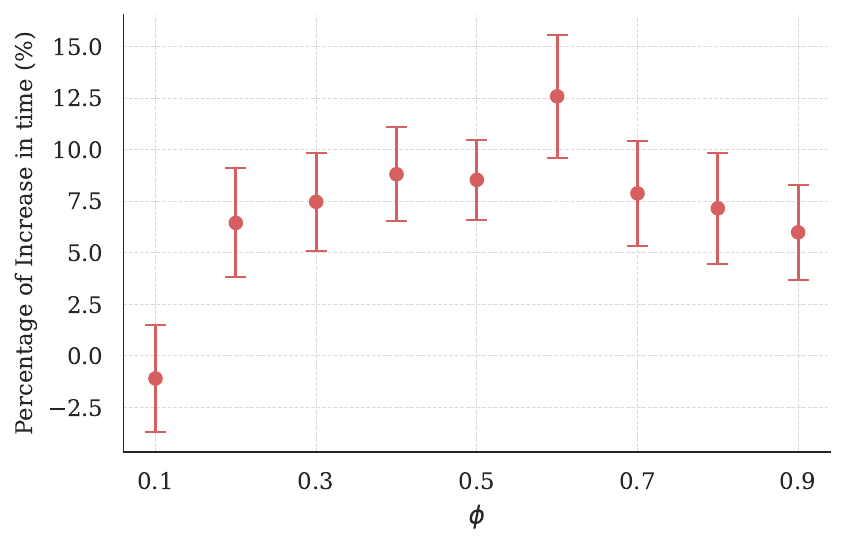}
  \caption{TwitchDE-$2$}
\end{subfigure}
\caption{Percentage of increase in time over original PR for FairRARI, for the $\boldsymbol{\phi}$-sum + $\boldsymbol{\alpha}$-min-fairness problem.}
\label{fig:time:sum+min}
\end{figure}


\clearpage
\subsection{$\boldsymbol{\phi}$-sum-fairness}
\subsubsection{Total Variation}

\begin{figure}[ht!]
\centering
\begin{subfigure}[b]{1\linewidth}
  \centering
  \includegraphics[width=.5\linewidth]{figures/ICML26/OURSvsPRIOR/legend_.pdf}
\end{subfigure}
\begin{subfigure}[b]{.16\textwidth}
  \centering
  \includegraphics[width=1\linewidth]{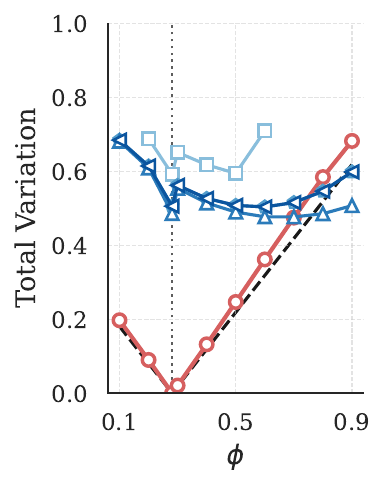}
  \caption{PolBlogs}
\end{subfigure}
\begin{subfigure}[b]{.16\textwidth}
  \centering
  \includegraphics[width=1\linewidth]{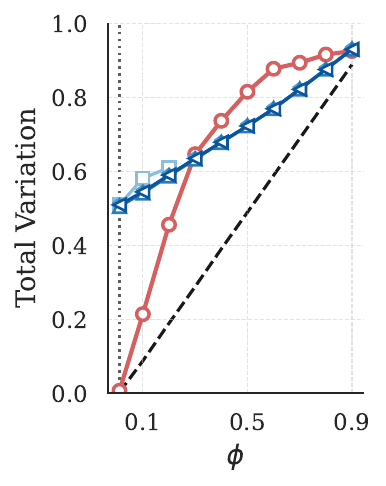}
  \caption{Erdos}
\end{subfigure}
\begin{subfigure}[b]{.16\textwidth}
  \centering
  \includegraphics[width=1\linewidth]{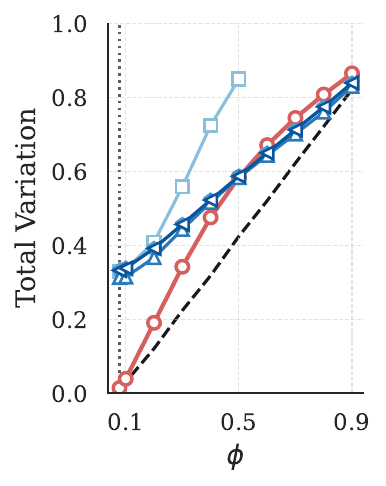}
  \caption{DBLP-P}
\end{subfigure}
\begin{subfigure}[b]{.16\textwidth}
  \centering
  \includegraphics[width=1\linewidth]{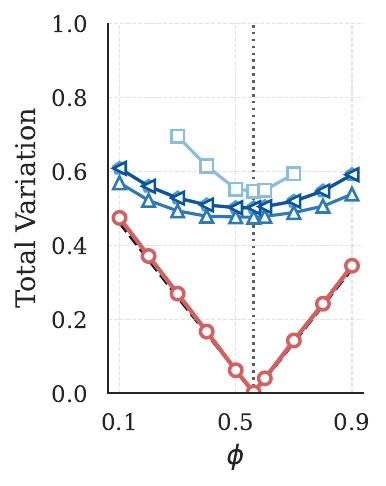}
  \caption{TwitchENGB-$2$}
\end{subfigure}
\begin{subfigure}[b]{.16\textwidth}
  \centering
  \includegraphics[width=1\linewidth]{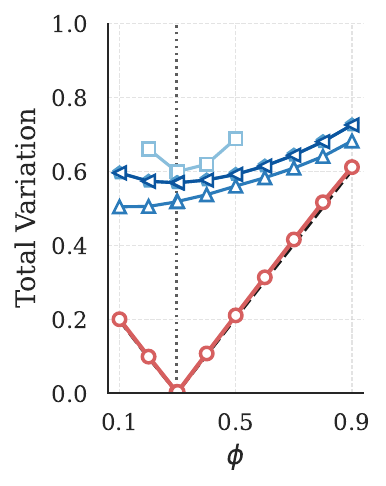}
  \caption{TwitchES-$2$}
\end{subfigure}
\\
\begin{subfigure}[b]{.16\textwidth}
  \centering
  \includegraphics[width=1\linewidth]{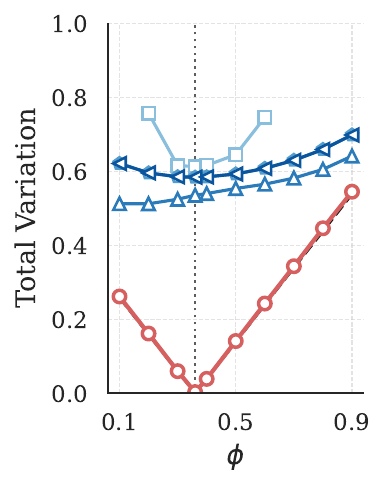}
  \caption{TwitchFR-$2$}
\end{subfigure}
\begin{subfigure}[b]{.16\textwidth}
  \centering
  \includegraphics[width=1\linewidth]{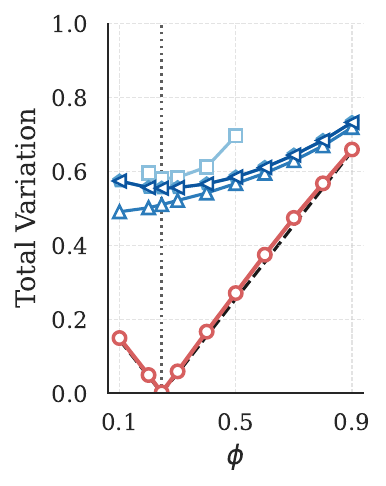}
  \caption{TwitchRU-$2$}
\end{subfigure}
\begin{subfigure}[b]{.16\linewidth}
  \centering
  \includegraphics[width=1\linewidth]{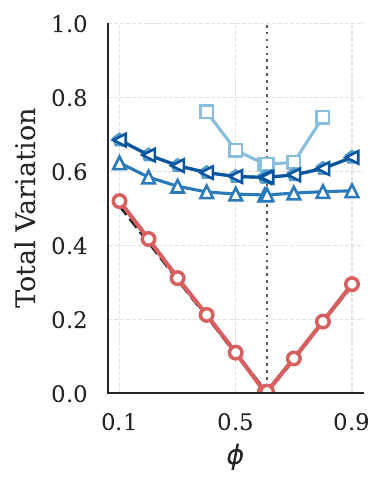}
  \caption{TwitchDE-$2$}
\end{subfigure}
\begin{subfigure}[b]{.16\linewidth}
  \centering
  \includegraphics[width=1\linewidth]{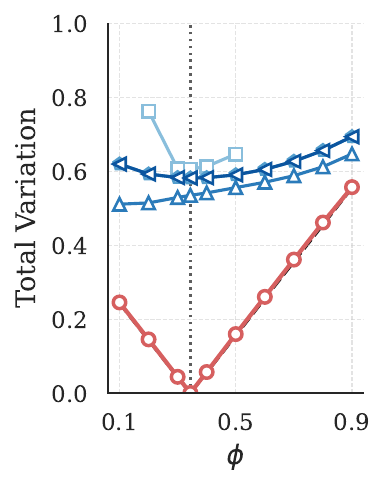}
  \caption{TwitchPTBR-$2$}
\end{subfigure}
\begin{subfigure}[b]{.16\textwidth}
  \centering
  \includegraphics[width=1\linewidth]{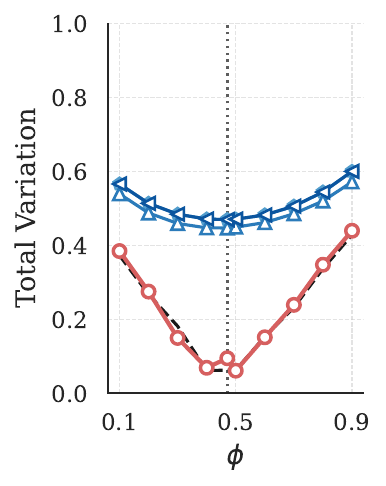}
  \caption{TwitchGAMERS}
\end{subfigure}
\begin{subfigure}[b]{.16\textwidth}
  \centering
  \includegraphics[width=1\linewidth]{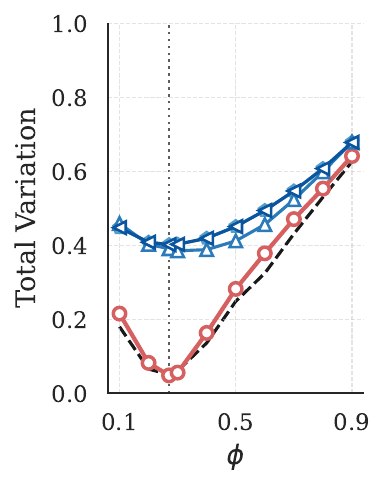}
  \caption{Slashdot}
\end{subfigure}
\caption{Extra experiments similar to Fig.~\ref{fig:PoF:2}. Comparing Fair PageRank solutions of prior methods and FairRARI (Ours), on the rest of datasets with $2$ groups. 
It is evident that our algorithm outperforms the existing approaches. For various values of $\phi$, FairRARI consistently yields a significantly lower TV, very close the lower bound obtained via the post-processing method, indicating that the allocation of fair PR scores is much closer to that of the original PR.
}
\label{fig:PoF:2:app}
\end{figure}

\clearpage
\subsubsection{Kendall Tau Rank Correlation Coefficient}

\begin{figure}[ht!]
\centering
\begin{subfigure}[b]{1\linewidth}
  \centering
  \includegraphics[width=.5\linewidth]{figures/ICML26/OURSvsPRIOR/legend_.pdf}
\end{subfigure}
\begin{subfigure}[b]{.16\textwidth}
  \centering
  \includegraphics[width=1\linewidth]{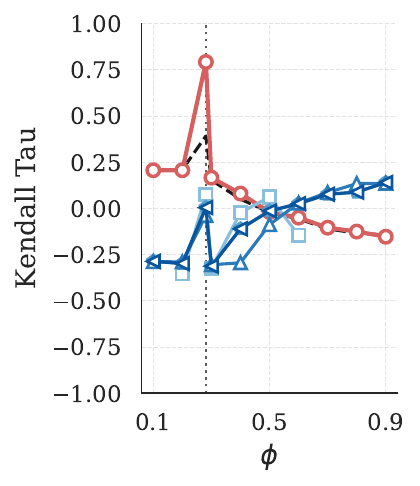}
  \caption{PolBlogs}
\end{subfigure}
\begin{subfigure}[b]{.16\textwidth}
  \centering
  \includegraphics[width=1\linewidth]{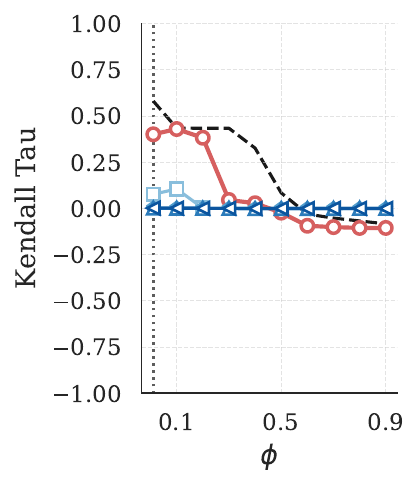}
  \caption{Erdos}
\end{subfigure}
\begin{subfigure}[b]{.16\textwidth}
  \centering
  \includegraphics[width=1\linewidth]{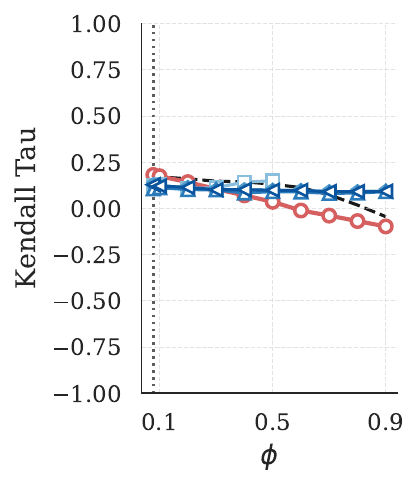}
  \caption{DBLP-P}
\end{subfigure}
\begin{subfigure}[b]{.16\textwidth}
  \centering
  \includegraphics[width=1\linewidth]{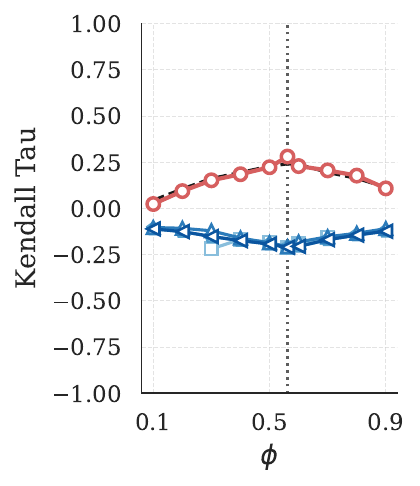}
  \caption{TwitchENGB-$2$}
\end{subfigure}
\begin{subfigure}[b]{.16\textwidth}
  \centering
  \includegraphics[width=1\linewidth]{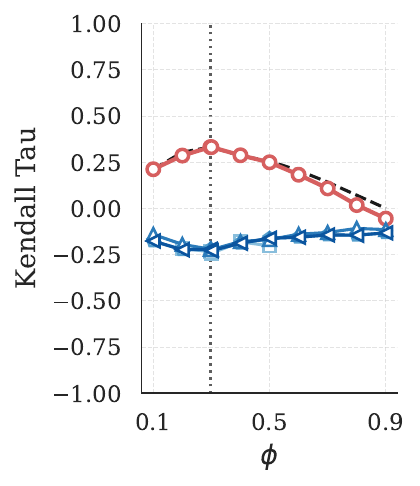}
  \caption{TwitchES-$2$}
\end{subfigure}
\\
\begin{subfigure}[b]{.16\textwidth}
  \centering
  \includegraphics[width=1\linewidth]{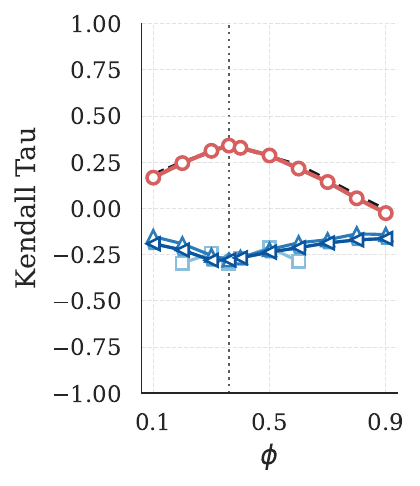}
  \caption{TwitchFR-$2$}
\end{subfigure}
\begin{subfigure}[b]{.16\textwidth}
  \centering
  \includegraphics[width=1\linewidth]{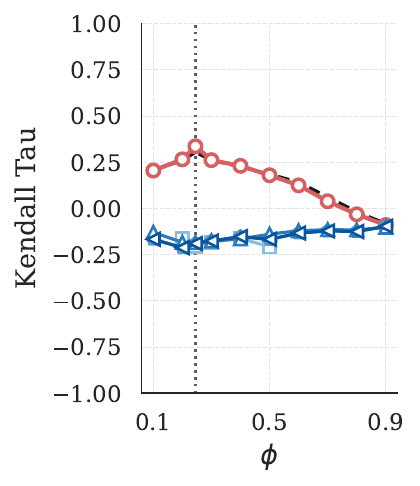}
  \caption{TwitchRU-$2$}
\end{subfigure}
\begin{subfigure}[b]{.16\linewidth}
  \centering
  \includegraphics[width=1\linewidth]{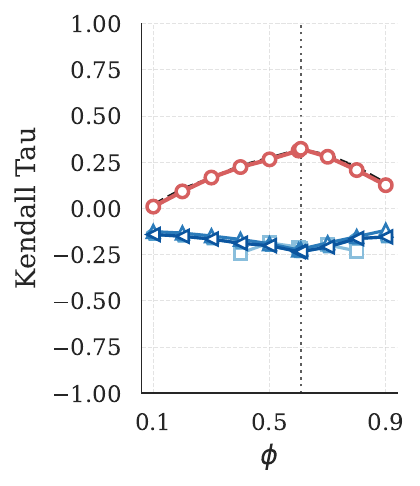}
  \caption{TwitchDE-$2$}
\end{subfigure}
\begin{subfigure}[b]{.16\linewidth}
  \centering
  \includegraphics[width=1\linewidth]{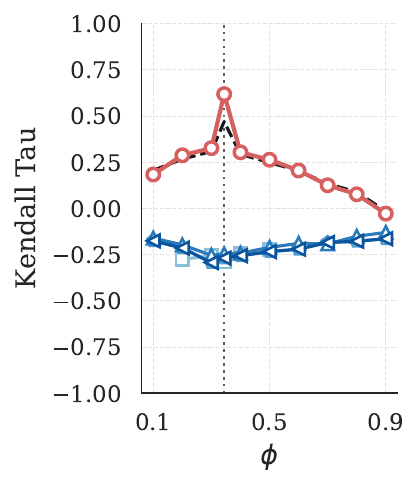}
  \caption{TwitchPTBR-$2$}
\end{subfigure}
\begin{subfigure}[b]{.16\textwidth}
  \centering
  \includegraphics[width=1\linewidth]{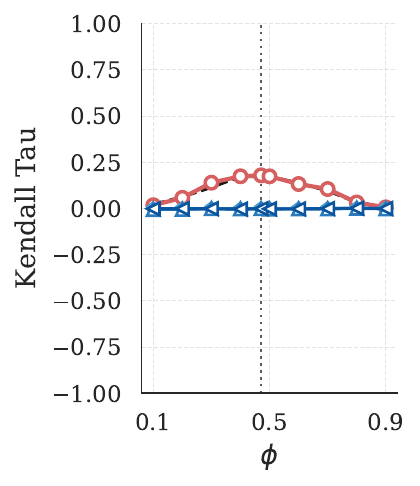}
  \caption{TwitchGAMERS}
\end{subfigure}
\begin{subfigure}[b]{.16\textwidth}
  \centering
  \includegraphics[width=1\linewidth]{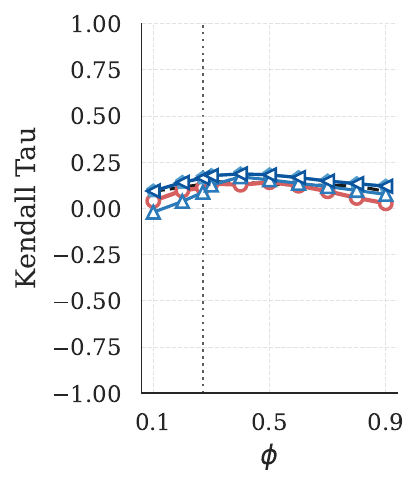}
  \caption{Slashdot}
\end{subfigure}
\caption{Extra experiments similar to Fig.~\ref{fig:kendall}. Comparing Fair PageRank solutions of prior methods and FairRARI (Ours), on the rest of datasets with $2$ groups. 
It is evident that our algorithm outperforms the existing approaches. For various values of $\phi$, FairRARI consistently yields a significantly higher Kendall Tau values, indicating that the resulting ranking is much closer to that of the original PR.
}
\label{fig:kendall:app}
\end{figure}

\clearpage
\subsubsection{Utility Loss} \label{app:sec:utility_loss}

\begin{figure}[ht!]
\centering
\begin{subfigure}[b]{1\textwidth}
  \centering
  \includegraphics[width=.5\linewidth]{figures/ICML26/OURSvsPRIOR/legend_.pdf}
\end{subfigure}
\begin{subfigure}[b]{.18\textwidth}
  \centering
  \includegraphics[width=1\linewidth]{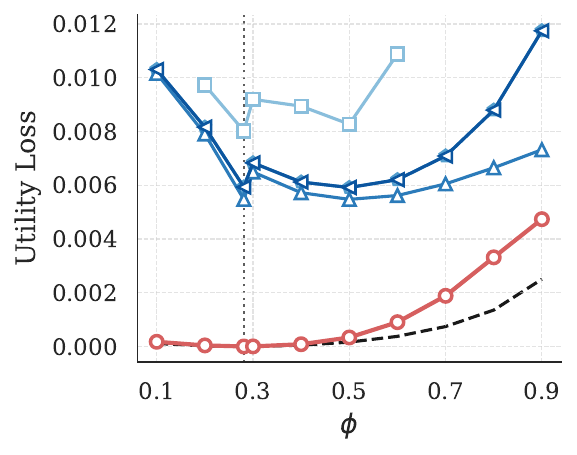}
  \caption{PolBlogs}
\end{subfigure}
\begin{subfigure}[b]{.18\textwidth}
  \centering
  \includegraphics[width=1\linewidth]{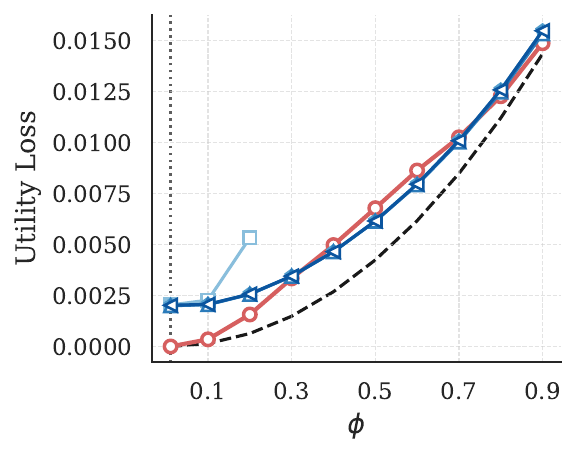}
  \caption{Erdos}
\end{subfigure}
\begin{subfigure}[b]{.18\textwidth}
  \centering
  \includegraphics[width=0.96\linewidth]{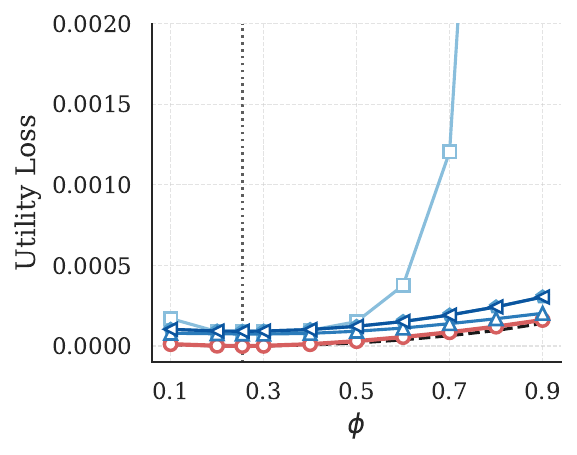}
  \caption{DBLP-G}
\end{subfigure}
\begin{subfigure}[b]{.18\textwidth}
  \centering
  \includegraphics[width=1\linewidth]{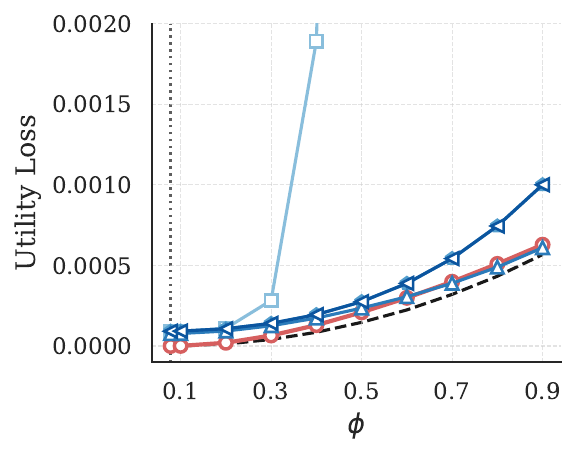}
  \caption{DBLP-P}
\end{subfigure}
\begin{subfigure}[b]{.18\textwidth}
  \centering
  \includegraphics[width=1\linewidth]{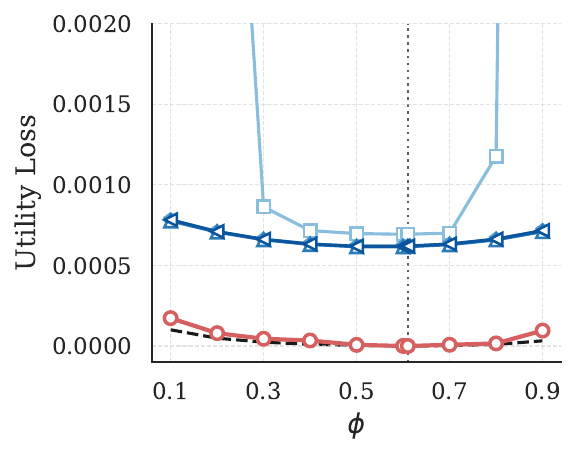}
  \caption{Twitter}
\end{subfigure}
\begin{subfigure}[b]{.18\textwidth}
  \centering
  \includegraphics[width=1\linewidth]{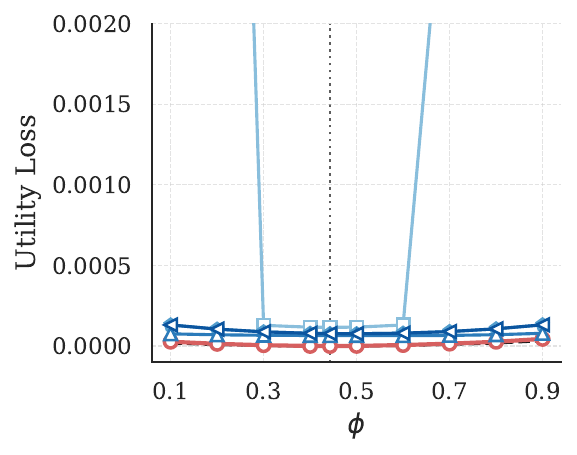}
  \caption{Deezer}
\end{subfigure}
\begin{subfigure}[b]{.18\textwidth}
  \centering
  \includegraphics[width=1\linewidth]{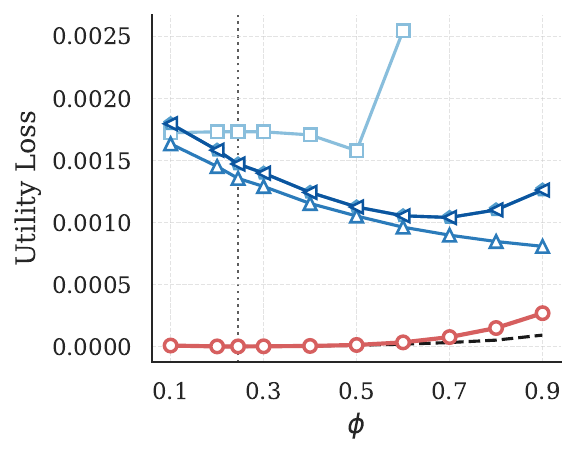}
  \caption{Github}
\end{subfigure}
\begin{subfigure}[b]{.18\textwidth}
  \centering
  \includegraphics[width=0.9\linewidth]{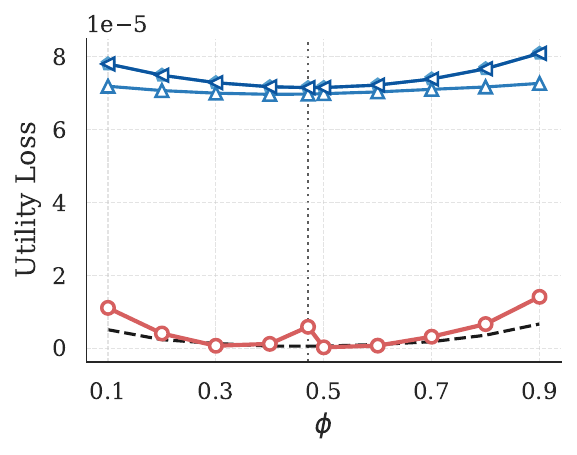}
  \caption{TwitchGAMERS}
\end{subfigure}
\begin{subfigure}[b]{.18\textwidth}
  \centering
  \includegraphics[width=1\linewidth]{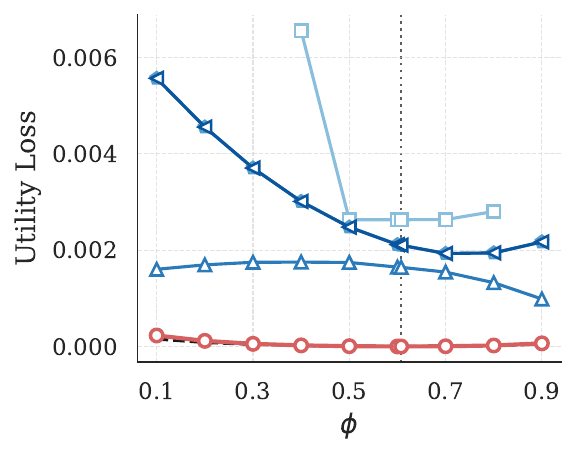}
  \caption{TwitchDE-$2$}
\end{subfigure}
\begin{subfigure}[b]{.18\textwidth}
  \centering
  \includegraphics[width=1\linewidth]{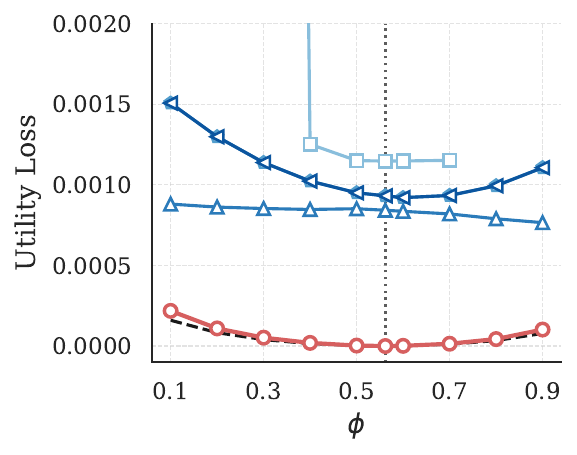}
  \caption{TwitchENGB-$2$}
\end{subfigure}
\begin{subfigure}[b]{.18\textwidth}
  \centering
  \includegraphics[width=1\linewidth]{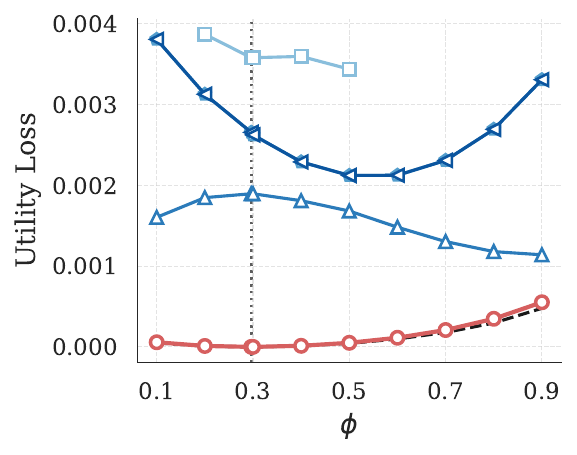}
  \caption{TwitchES-$2$}
\end{subfigure}
\begin{subfigure}[b]{.18\textwidth}
  \centering
  \includegraphics[width=1\linewidth]{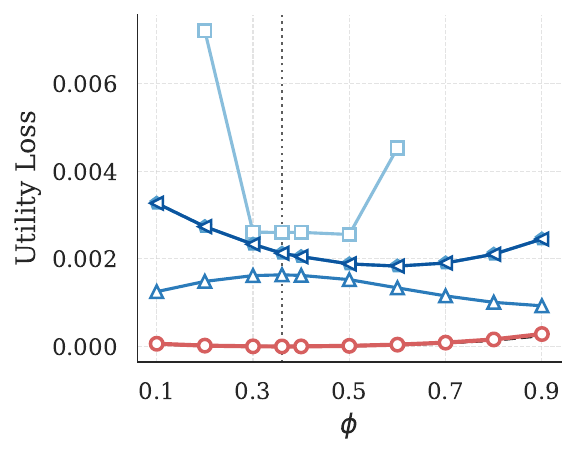}
  \caption{TwitchFR-$2$}
\end{subfigure}
\begin{subfigure}[b]{.18\textwidth}
  \centering
  \includegraphics[width=1\linewidth]{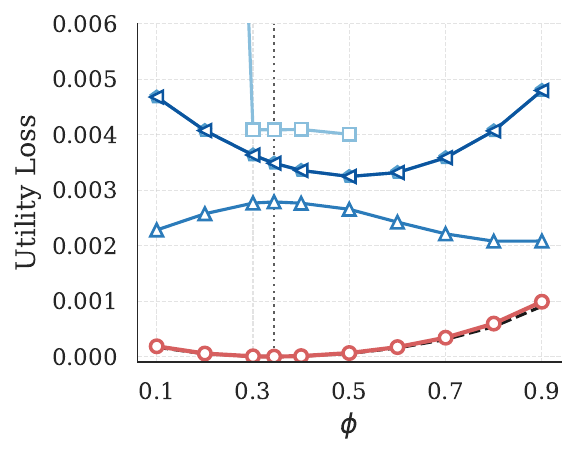}
  \caption{TwitchPTBR-$2$}
\end{subfigure}
\begin{subfigure}[b]{.18\textwidth}
  \centering
  \includegraphics[width=1\linewidth]{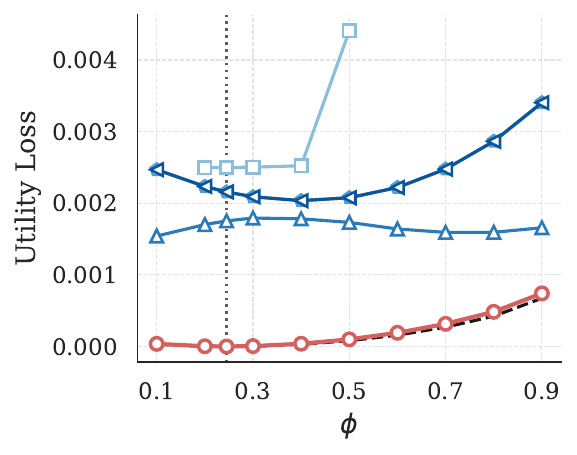}
  \caption{TwitchRU-$2$}
\end{subfigure}
\begin{subfigure}[b]{.18\textwidth}
  \centering
  \includegraphics[width=1\linewidth]{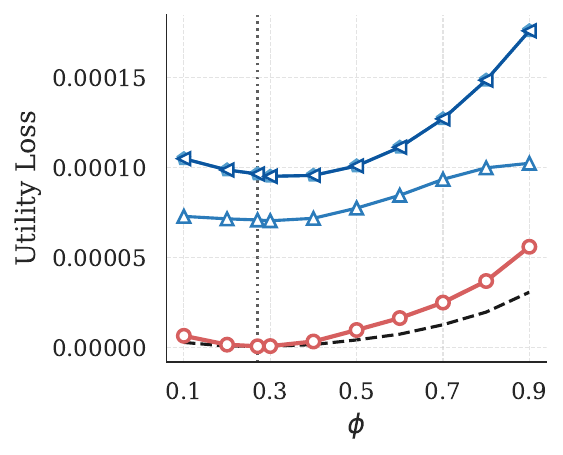}
  \caption{Slashdot}
\end{subfigure}
\caption{Utility Loss in terms of $\lVert \mathbf{p}_\mathrm{o} - \mathbf{p}_\mathrm{F} \rVert^2_2$, as defined in prior methods. The vertical dotted line represents the $\phi_\mathrm{o}$ value of the original PR.
It is evident that our algorithm outperforms the existing approaches in terms of this utility loss as well. For various values of $\phi$, FairRARI consistently yields a significantly lower loss, very close the lower bound obtained via the post-processing method, indicating that the allocation of fair PR scores is much closer to that of the original PR.
}
\end{figure}

\clearpage
\subsubsection{$\phi$-Precision} \label{app:precision}
To evaluate $\vp_\mathrm{F}$ based on how close the top-$x\%$ interval of the ranked vertices is to the desired $\phi$ value, we define the following measure, 
\mbox{$
    \phi\mathrm{-Precision@}x (\vp_\mathrm{F}) = (\ve_{\mathcal{T}_x\cap\setS_k}^\top \vp_\mathrm{F})/(\ve_{\mathcal{T}_x}^\top \vp_\mathrm{F}),
$}
where $\mathcal{T}_x$ represents the set of $x\%$ vertices with the highest PR scores. Hence, $\phi\mathrm{-Precision @}x$ calculates the ratio of the scores of the vertices of a given group $\setS_k$ in each top-$x\%$ interval. 
It can be interpreted as a weighted version of the Skew measure from \cite{geyik2019fairness}, where the weights are vertex scores. We adopt this measure because FairRARI and the baselines explicitly optimize the PR scores to provide group-fairness instead of only the ranking order.

\begin{figure}[ht!]
\centering
\begin{subfigure}[b]{1\linewidth}
  \centering
  \includegraphics[width=0.35\linewidth]{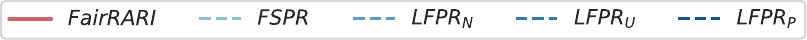}
\end{subfigure}
\begin{subfigure}[b]{0.49\linewidth}
  \centering
  \includegraphics[width=0.32\linewidth]{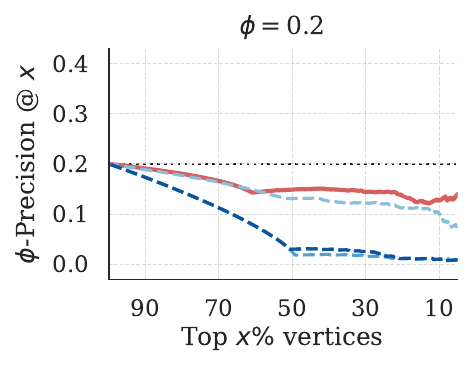}
  \includegraphics[width=0.32\linewidth]{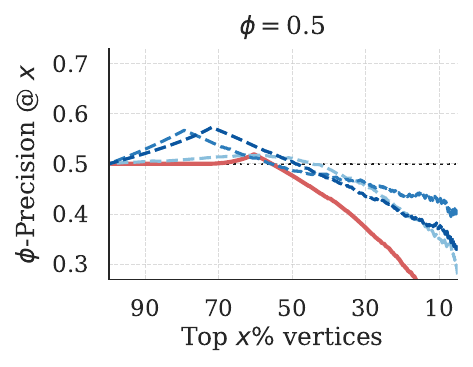}
  \includegraphics[width=0.32\linewidth]{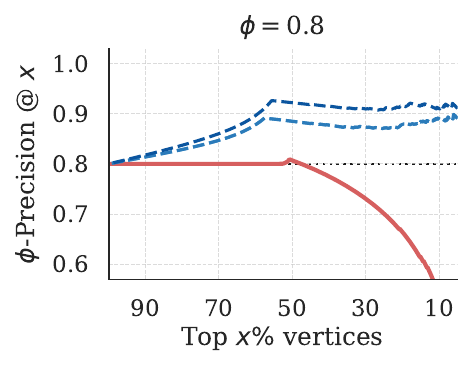}
\caption{PolBlogs}
\end{subfigure}
\begin{subfigure}[b]{0.49\linewidth}
  \centering
  \includegraphics[width=0.32\linewidth]{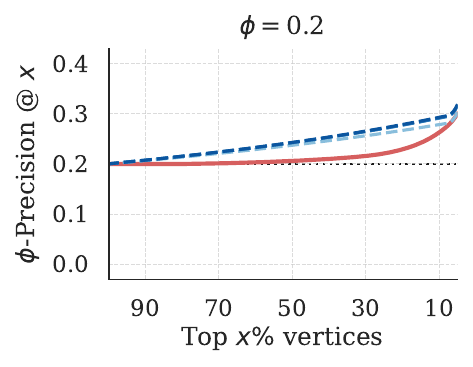}
  \includegraphics[width=0.32\linewidth]{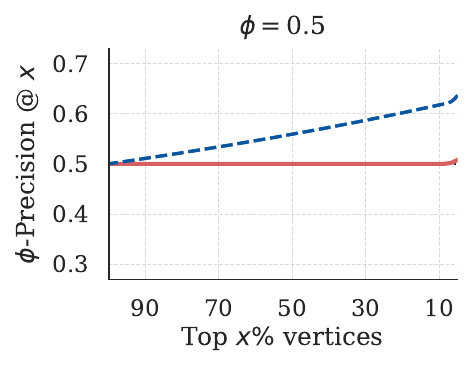}
  \includegraphics[width=0.32\linewidth]{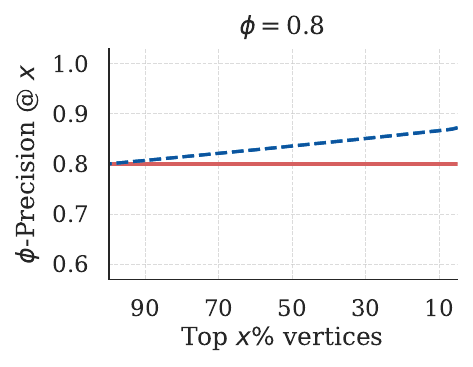}
\caption{Erdos}
\end{subfigure}
\begin{subfigure}[b]{0.49\linewidth}
  \centering
  \includegraphics[width=0.32\linewidth]{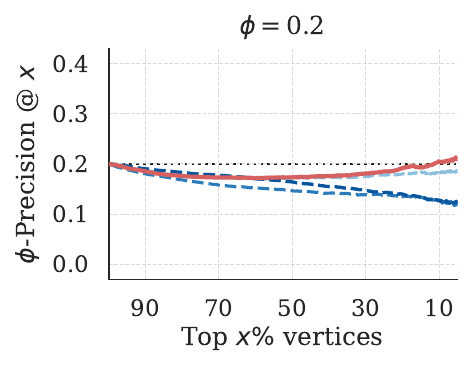}
  \includegraphics[width=0.32\linewidth]{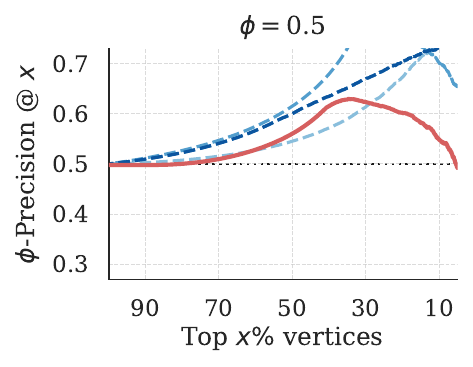}
  \includegraphics[width=0.32\linewidth]{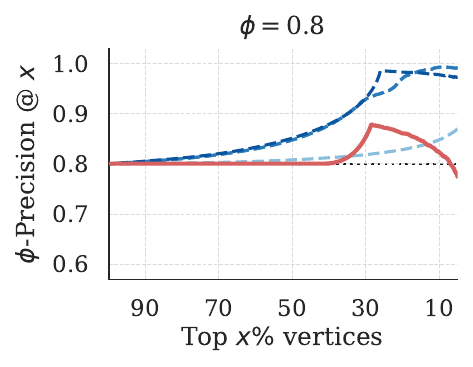}
\caption{DBLP-G}
\end{subfigure}
\begin{subfigure}[b]{0.49\linewidth}
  \centering
  \includegraphics[width=0.32\linewidth]{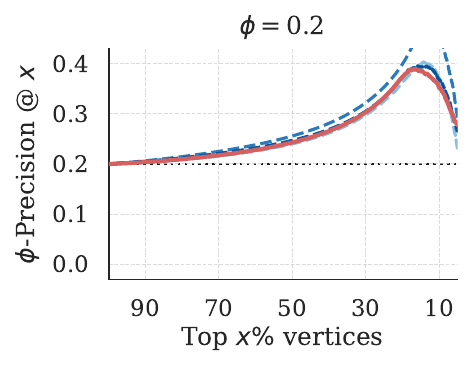}
  \includegraphics[width=0.32\linewidth]{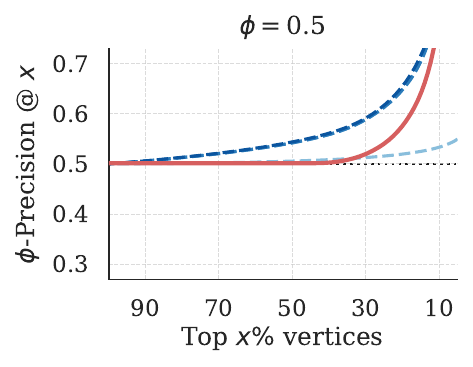}
  \includegraphics[width=0.32\linewidth]{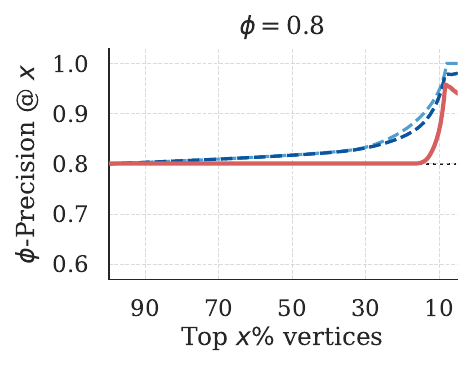}
\caption{DBLP-P}
\end{subfigure}
\begin{subfigure}[b]{0.49\linewidth}
  \centering
  \includegraphics[width=0.32\linewidth]{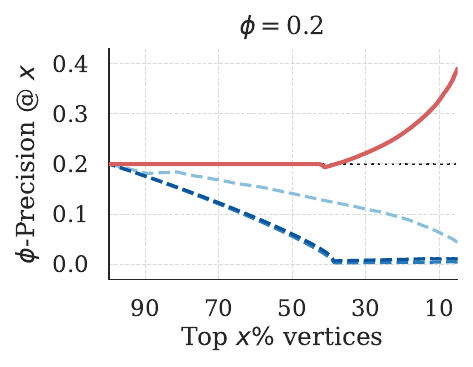}
  \includegraphics[width=0.32\linewidth]{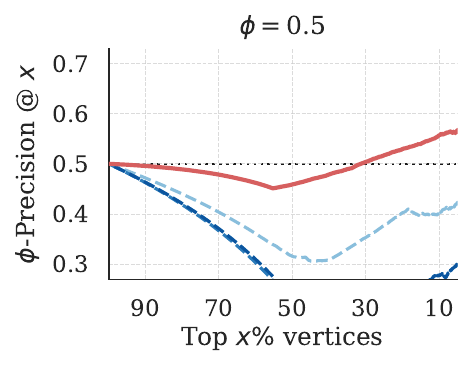}
  \includegraphics[width=0.32\linewidth]{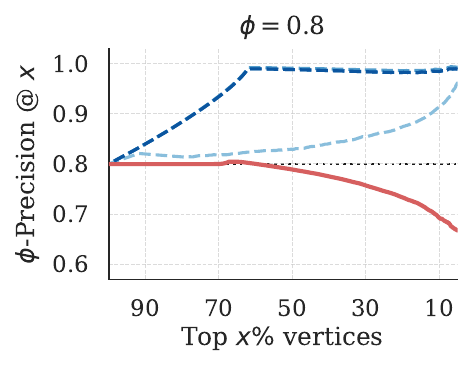}
\caption{Twitter}
\end{subfigure}
\begin{subfigure}[b]{0.49\linewidth}
  \centering
  \includegraphics[width=0.32\linewidth]{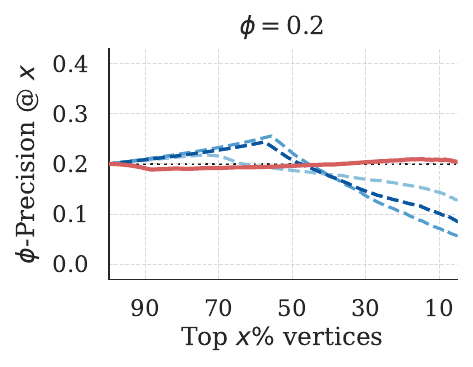}
  \includegraphics[width=0.32\linewidth]{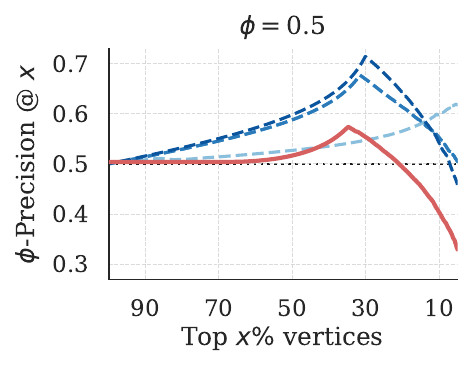}
  \includegraphics[width=0.32\linewidth]{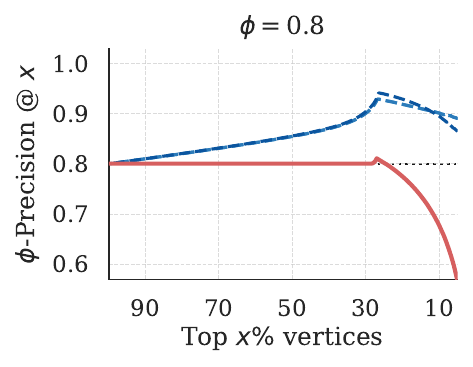}
\caption{GitHub}
\end{subfigure}
\begin{subfigure}[b]{0.49\linewidth}
  \centering
  \includegraphics[width=0.32\linewidth]{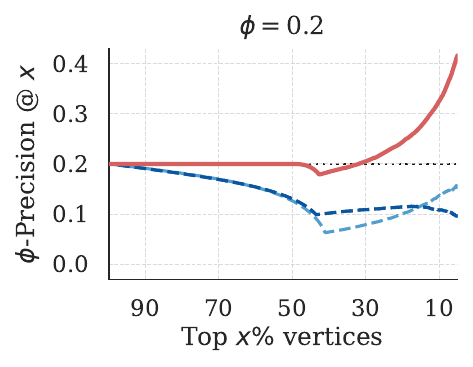}
  \includegraphics[width=0.32\linewidth]{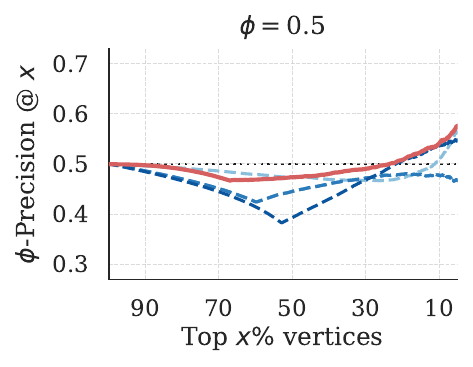}
  \includegraphics[width=0.32\linewidth]{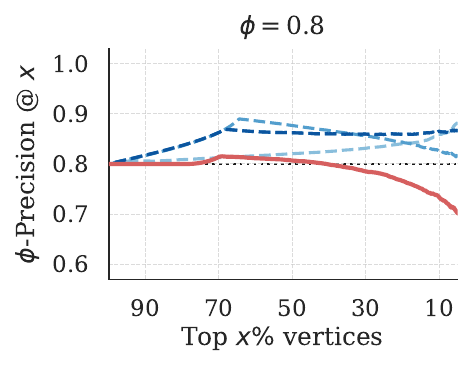}
\caption{TwitchDE-$2$}
\end{subfigure}
\begin{subfigure}[b]{0.49\linewidth}
  \centering
  \includegraphics[width=0.32\linewidth]{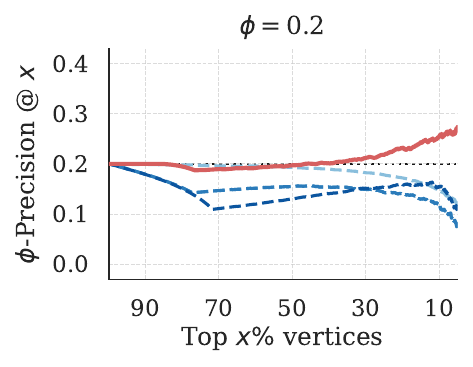}
  \includegraphics[width=0.32\linewidth]{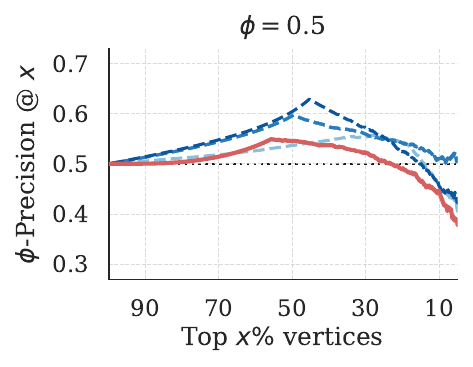}
  \includegraphics[width=0.32\linewidth]{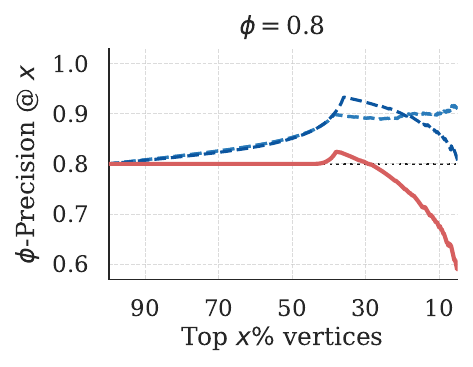}
\caption{TwitchPTBR-$2$}
\end{subfigure}
\begin{subfigure}[b]{0.49\linewidth}
  \centering
  \includegraphics[width=0.32\linewidth]{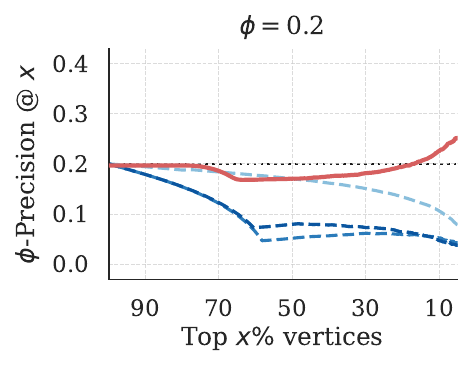}
  \includegraphics[width=0.32\linewidth]{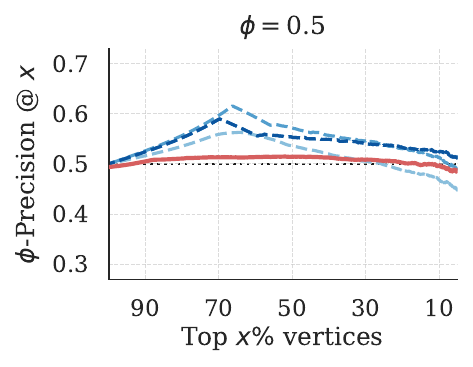}
  \includegraphics[width=0.32\linewidth]{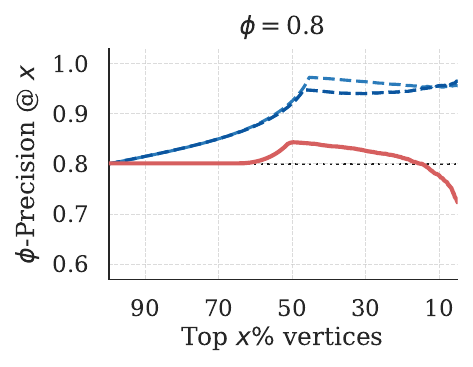}
\caption{Deezer}
\end{subfigure}
\caption{Showcasing $\phi$-Precision of prior methods and FairRARI across top-$x\%$ intervals. The horizontal dotted lines represent the target fairness value $\phi$.
As $x\%$ increases, all methods attain $\phi$. However, the $\phi$-Precision of FairRARI is generally close to $\phi$ across varying $x\%$, demonstrating stable performance. In contrast, the prior methods often deviate sharply from the desired $\phi$ value, either overshooting or undershooting the target before attaining $\phi$, at a much larger $x\%$.
}
\label{fig:phi_precision:app}
\end{figure}

\begin{figure}[ht!]
\centering
\begin{subfigure}[b]{1\linewidth}
  \centering
  \includegraphics[width=0.35\linewidth]{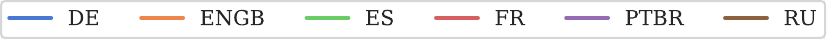}
\end{subfigure}
\begin{subfigure}[b]{0.2\linewidth}
  \centering
  \includegraphics[width=0.8\linewidth]{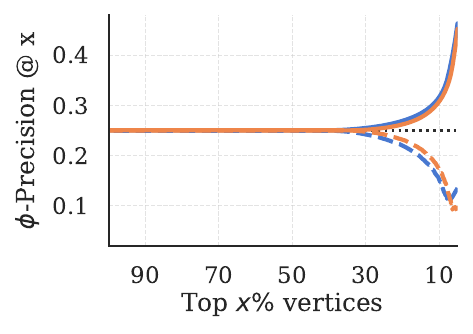}
\end{subfigure}
\quad
\begin{subfigure}[b]{0.2\linewidth}
  \centering
  \includegraphics[width=0.8\linewidth]{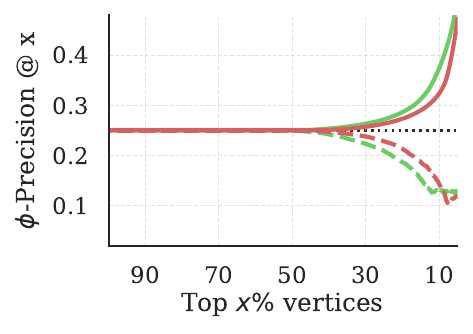}
\end{subfigure}
\quad
\begin{subfigure}[b]{0.2\linewidth}
  \centering
  \includegraphics[width=0.8\linewidth]{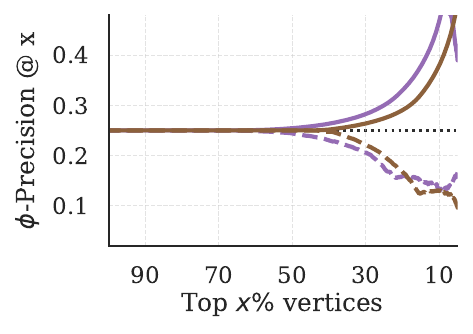}
\end{subfigure}
\caption{For the Twitch datasets with $4$ groups, showcasing the min (dash-dot lines) and the max (dashed lines) value (across the different groups) of $\phi$-Precision for FairRARI across top-$x\%$ intervals. The horizontal dotted lines represent the target fairness value.}
\label{fig:phi_precision:4c:app}
\end{figure}

\clearpage
\subsubsection{Fairness Violation} \label{app:fairness_violation}
In order to evaluate if a resulting solution vector $\mathbf{p}_\mathrm{F}$ of a fair PR algorithm satisfies the $\boldsymbol{\phi}$-sum-fairness constraints, we define the \emph{fairness violation}
$
    \mathrm{FV}(\boldsymbol{\phi}_\mathrm{F}, \boldsymbol{\phi}) := \frac{1}{K}\sum_{k=1}^K (\boldsymbol{\phi}_\mathrm{F}(k) -\boldsymbol{\phi}(k))^2,
$
with $\boldsymbol{\phi}_\mathrm{F}(k) = \mathbf{e}_{\setS_k}^\top \mathbf{p}_\mathrm{F}$ denoting the sum of the scores of the solution of fair PR, for each group. Furthermore, we denote as $\boldsymbol{\phi}_\mathrm{o}$ the vector that represents the sum of the scores of each group in the original PR solution, i.e., $\boldsymbol{\phi}_\mathrm{o}(k) = \mathbf{e}_{\setS_k}^\top \mathbf{p}_\mathrm{o}$, $\forall \; k\in[K]$.

\begin{figure}[ht!]
\centering
\begin{subfigure}[b]{1\textwidth}
  \centering
  \includegraphics[width=.68\linewidth]{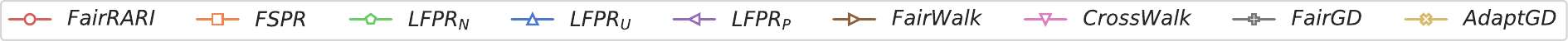}
\end{subfigure}
\begin{subfigure}[b]{.136\textwidth}
  \centering
  \includegraphics[width=1\linewidth]{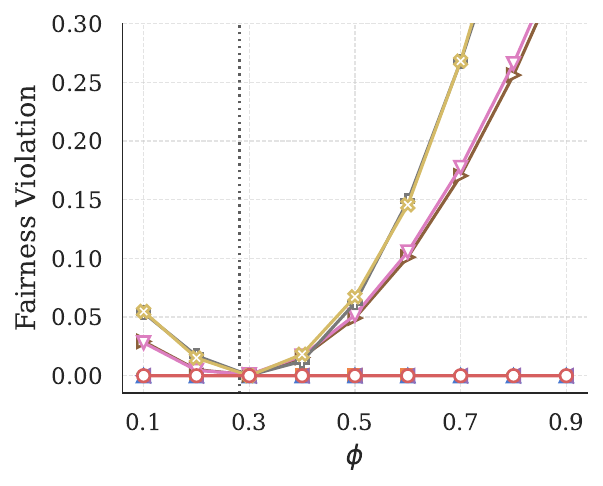}
  \caption{Blogs}
\end{subfigure}
\begin{subfigure}[b]{.136\textwidth}
  \centering
  \includegraphics[width=1\linewidth]{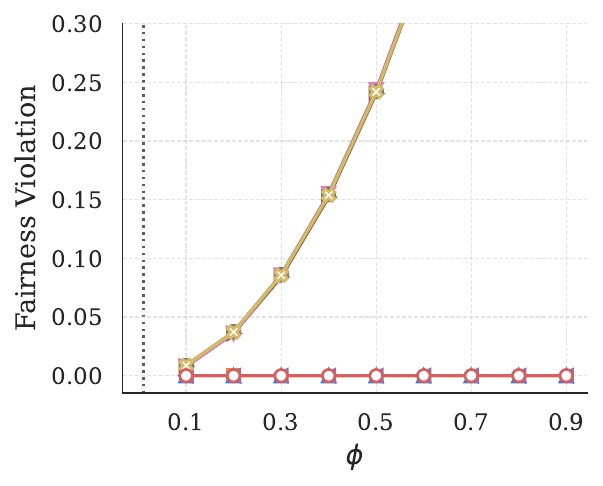}
  \caption{Erdos}
\end{subfigure}
\begin{subfigure}[b]{.136\textwidth}
  \centering
  \includegraphics[width=1\linewidth]{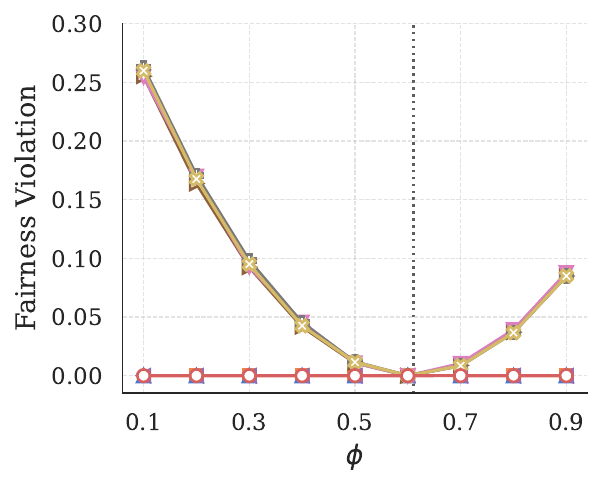}
  \caption{Twitter}
\end{subfigure}
\begin{subfigure}[b]{.136\textwidth}
  \centering
  \includegraphics[width=1\linewidth]{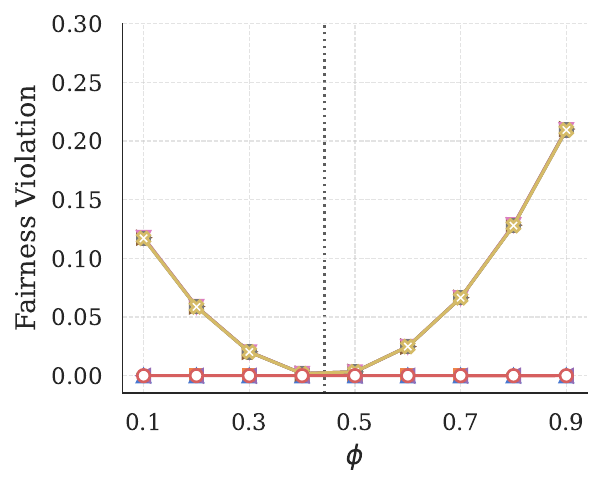}
  \caption{Deezer}
\end{subfigure}
\begin{subfigure}[b]{.136\textwidth}
  \centering
  \includegraphics[width=1\linewidth]{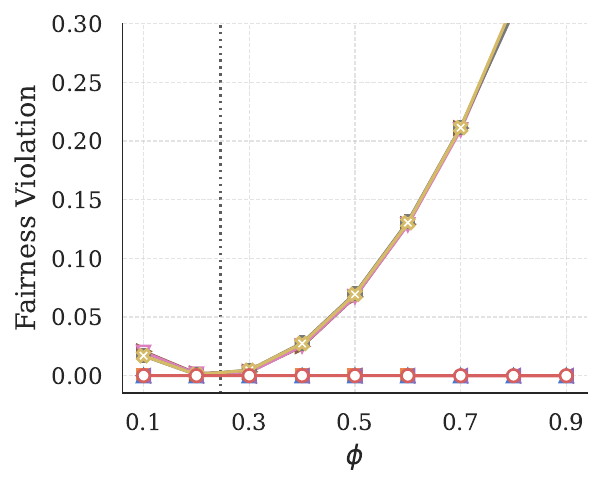}
  \caption{Github}
\end{subfigure}
\begin{subfigure}[b]{.136\textwidth}
  \centering
  \includegraphics[width=1\linewidth]{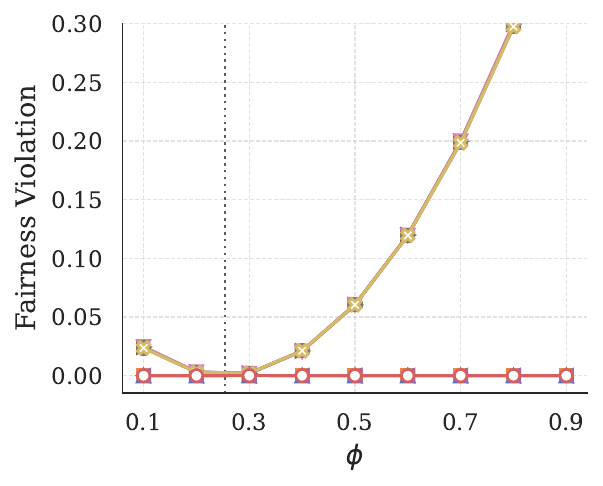}
  \caption{DBLP-G}
\end{subfigure}
\begin{subfigure}[b]{.136\textwidth}
  \centering
  \includegraphics[width=1\linewidth]{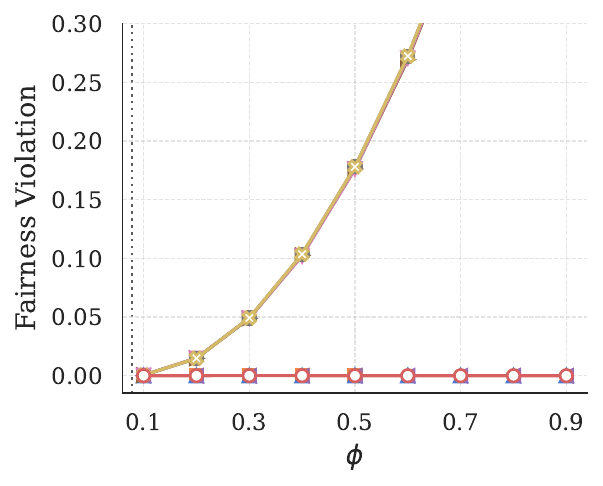}
  \caption{DBLP-P}
\end{subfigure}
\caption{Comparing the Fairness Violation in the $2$ groups case - making clear the unsuitability of FairWalk, CrossWalk, FairGD and AdaptGD in our setting. Except for the Blogs dataset, FairWalk, CrossWalk, FairGD, and AdaptGD produce the same fairness violation values on all remaining datasets.}
\label{fig:fairness}
\end{figure}

\begin{figure}[ht!]
\centering
\begin{subfigure}[b]{1\textwidth}
  \centering
  \includegraphics[width=.25\linewidth]{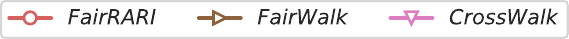}
\end{subfigure}
\begin{subfigure}[b]{.16\textwidth}
  \centering
  \includegraphics[width=1\linewidth]{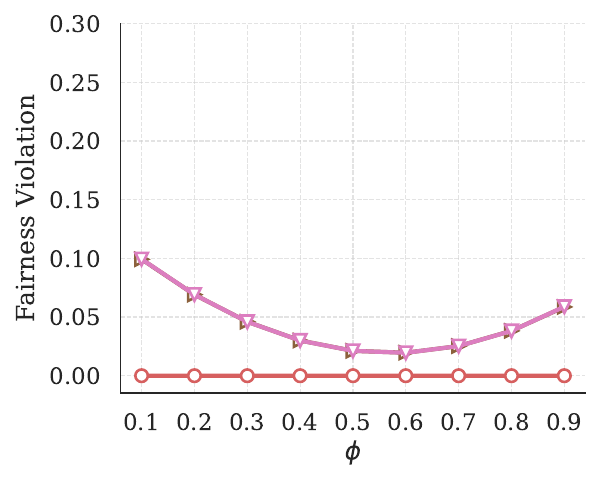}
  \caption{TwitchDE}
\end{subfigure}
\begin{subfigure}[b]{.16\textwidth}
  \centering
  \includegraphics[width=1\linewidth]{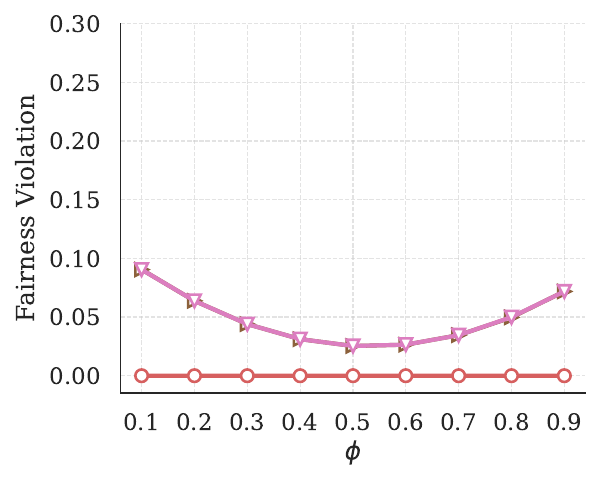}
  \caption{TwitchENGB}
\end{subfigure}
\begin{subfigure}[b]{.16\textwidth}
  \centering
  \includegraphics[width=1\linewidth]{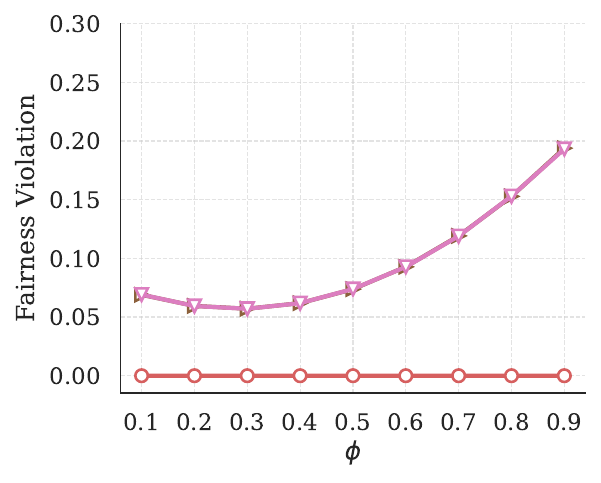}
  \caption{TwitchES}
\end{subfigure}
\begin{subfigure}[b]{.16\textwidth}
  \centering
  \includegraphics[width=1\linewidth]{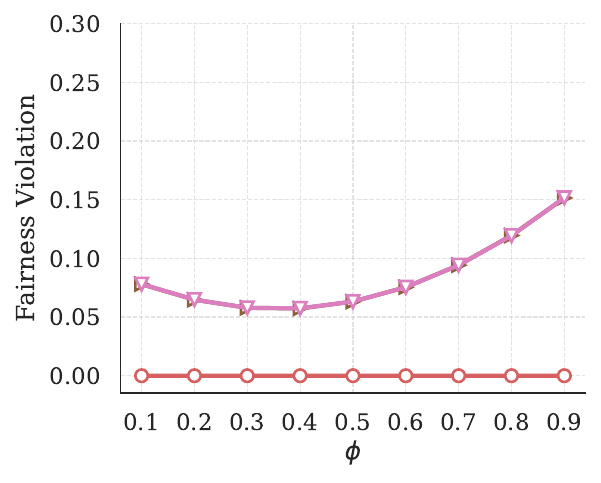}
  \caption{TwitchFR}
\end{subfigure}
\begin{subfigure}[b]{.16\textwidth}
  \centering
  \includegraphics[width=1\linewidth]{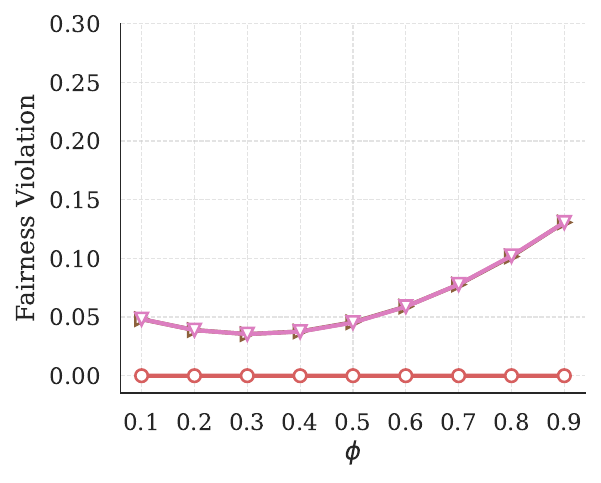}
  \caption{TwitchPTBR}
\end{subfigure}
\begin{subfigure}[b]{.16\textwidth}
  \centering
  \includegraphics[width=1\linewidth]{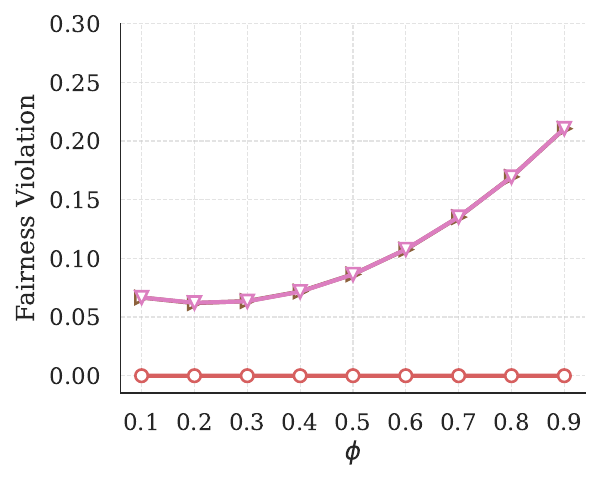}
  \caption{TwitchRU}
\end{subfigure}
\caption{Comparing the Fairness Violation in the $4$ groups case - making clear the unsuitability of FairWalk and CrossWalk, in the multiple groups case, as well.}
\label{fig:app:4_groups_fairness}
\end{figure}

\subsubsection{Multiple Groups} 

\begin{figure}[ht!]
\centering
\includegraphics[width=.6\linewidth]{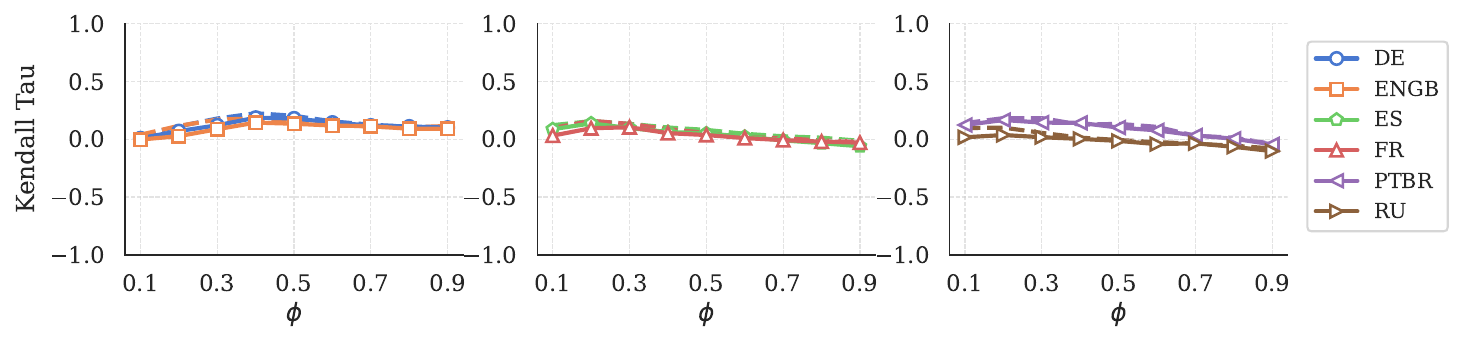}
\caption{Showcasing the values of the Kendall Tau coefficient the FairRARI solutions for different fairness levels $\phi$, on Twitch datasets with $4$ groups. The dashed lines represent the post-processing solution for each dataset.}
\label{fig:app:more_than_2:kendall}
\end{figure}

\clearpage
\subsection{$\boldsymbol{\alpha}$-min-fairness}

For the following experiments we used only datasets with two groups. From the first group $\setS_1$ we create $\setA \equiv \setS_1$ with $\alpha_1=\alpha$, and for the second one $\alpha_2=0$. In order to have different target min-fairness levels, we create a vector of $\alpha$ values of the following form
{$\boldsymbol{\alpha}\! =\! [0.005, 0.01, \ldots, 
    0.05, 0.075, 0.1, 0.2, \ldots,
    0.9]/n_1$},
creating the x-axes of Fig.~\ref{fig:PoF_min_vs_alpha:app}.

In Fig.~\ref{fig:PoF_min_vs_alpha:app} we present the TV attained for different $\alpha$ values. In this case, the vertical dotted line represents the minimal value, $\alpha_\mathrm{o}$, that the original PR assigns in group $\mathcal{A}$. As expected, the TV below this $\alpha_\mathrm{o}$ value is always $0$, as the solution of FairRARI is the same as that of original PR, which already satisfies the min-fairness constraints for $\alpha \leq \alpha_\mathrm{o}$. Moreover, we present the minimum score given to the vertices in group $\setA$, for the different $\alpha$ values, showcasing that FairRARI method always satisfies the $\boldsymbol{\alpha}$-min-fairness constraints.

\begin{figure}[ht!]
\centering
\begin{subfigure}[b]{.15\textwidth}
  \centering
  \includegraphics[width=1\linewidth]{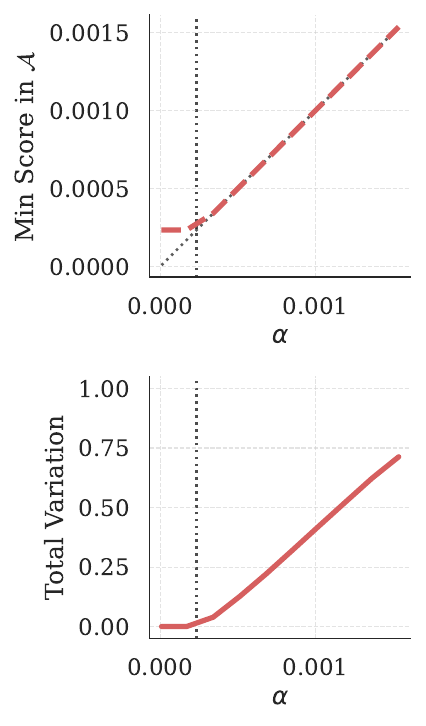}
  \caption{PolBlogs}
\end{subfigure}
\begin{subfigure}[b]{.15\textwidth}
  \centering
  \includegraphics[width=1\linewidth]{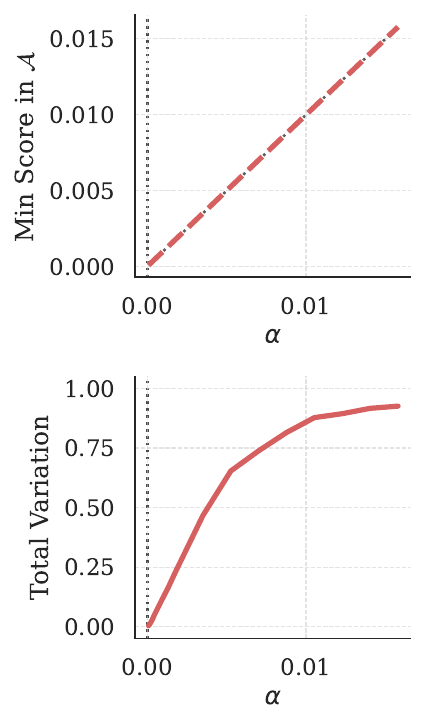}
  \caption{Erdos}
\end{subfigure}
\begin{subfigure}[b]{.15\textwidth}
  \centering
  \includegraphics[width=1\linewidth]{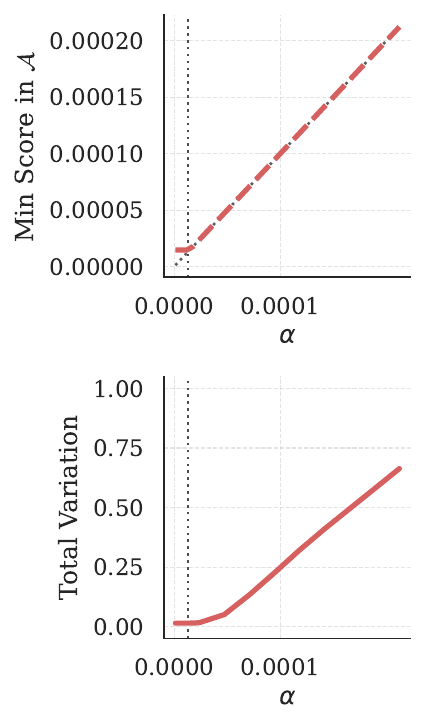}
  \caption{DBLP-G}
\end{subfigure}
\begin{subfigure}[b]{.15\textwidth}
  \centering
  \includegraphics[width=1\linewidth]{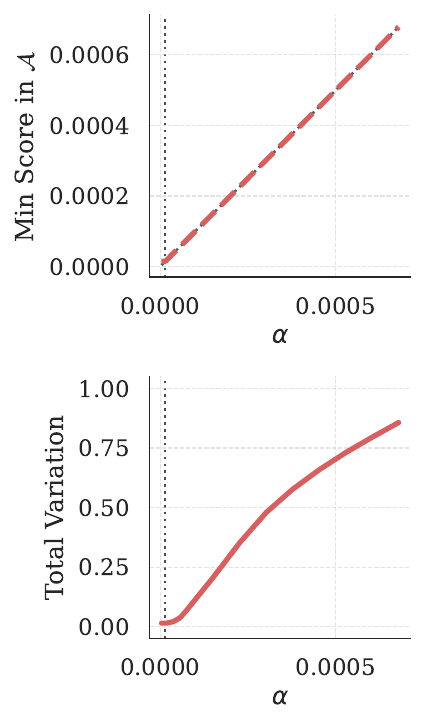}
  \caption{DBLP-P}
\end{subfigure}
\begin{subfigure}[b]{.15\textwidth}
  \centering
  \includegraphics[width=1\linewidth]{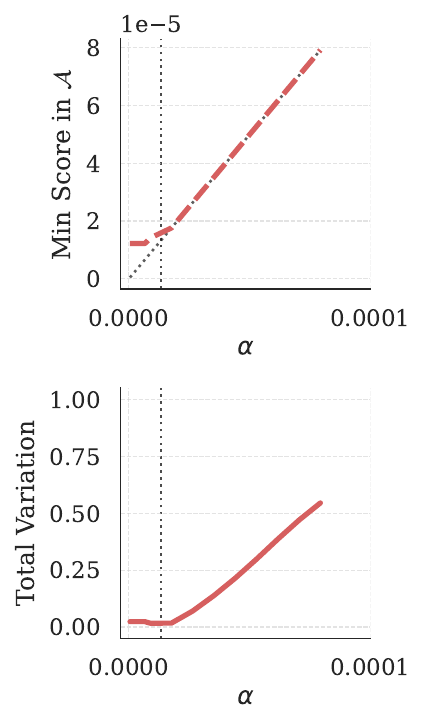}
  \caption{Twitter}
\end{subfigure}
\begin{subfigure}[b]{.15\textwidth}
  \centering
  \includegraphics[width=1\linewidth]{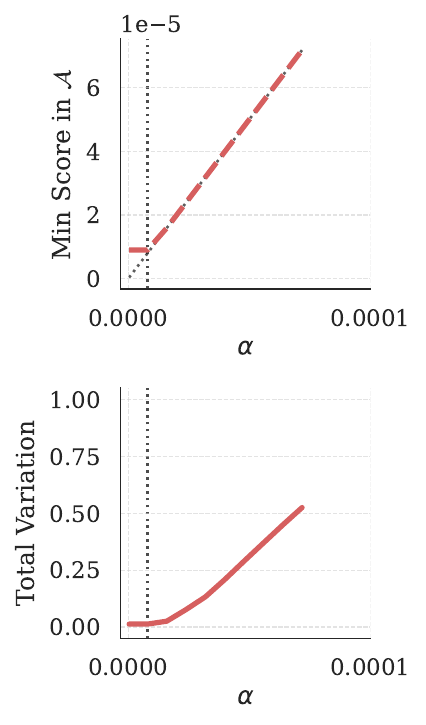}
  \caption{Deezer}
\end{subfigure}
\begin{subfigure}[b]{.15\textwidth}
  \centering
  \includegraphics[width=1\linewidth]{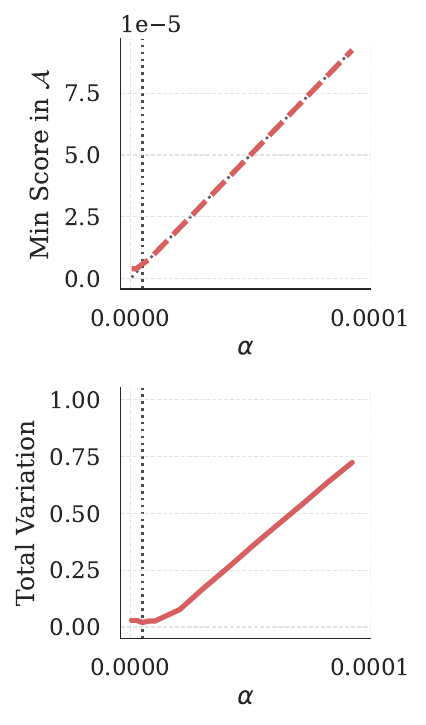}
  \caption{Github}
\end{subfigure}
\begin{subfigure}[b]{.15\textwidth}
  \centering
  \includegraphics[width=1\linewidth]{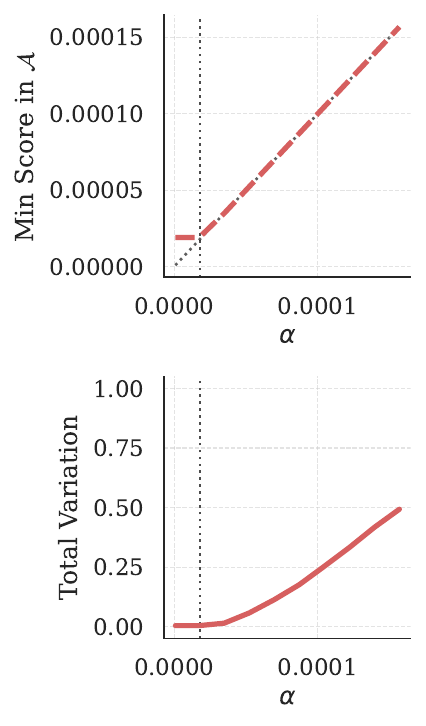}
  \caption{TwitchDE-$2$}
\end{subfigure}
\begin{subfigure}[b]{.15\textwidth}
  \centering
  \includegraphics[width=1\linewidth]{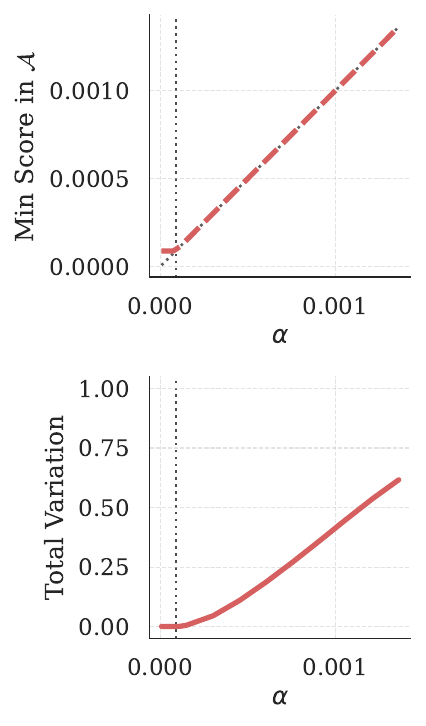}
  \caption{TwitchPTBR-$2$}
\end{subfigure}
\caption{Showcasing the performance of the FairRARI solutions on the $\boldsymbol{\alpha}$-min-fair problem, on different datasets with $2$ groups. Top: The minimum score assigned to the vertices in group $\setA$, for different $\alpha$ values. Bottom: \textsc{PoF} paid for different values $\alpha$. The diagonal dotted line represents the identity line and the vertical one the $\alpha_\mathrm{o}$.}
\label{fig:PoF_min_vs_alpha:app}
\end{figure}

\clearpage
\subsection{$\boldsymbol{\phi}$-sum + $\boldsymbol{\alpha}$-min-fairness} \label{app:exps:sum_min}

\begin{figure}[ht!]
\centering
\raisebox{4.4\height}{%
\begin{subfigure}[c]{.09\linewidth}
  \centering
  \includegraphics[width=1\linewidth]{figures/ICML26/OURS_cen_min/phi_vs_TV/legend_.pdf}
\end{subfigure}
}
\begin{subfigure}[b]{.14\textwidth}
  \centering
  \includegraphics[width=1\linewidth]{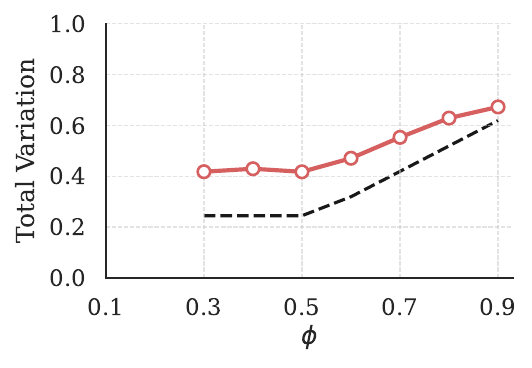}
  \caption{PolBlogs}
\end{subfigure}
\begin{subfigure}[b]{.12\textwidth}
  \centering
  \includegraphics[width=1\linewidth]{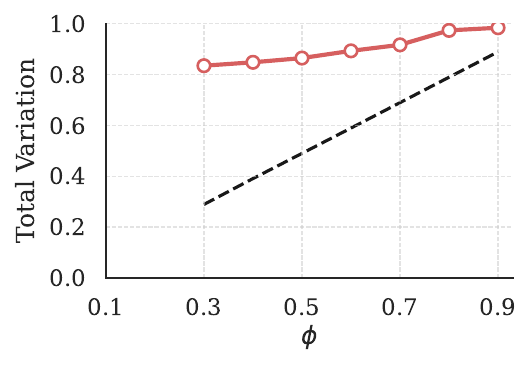}
  \caption{Erdos}
\end{subfigure}
\begin{subfigure}[b]{.14\textwidth}
  \centering
  \includegraphics[width=1\linewidth]{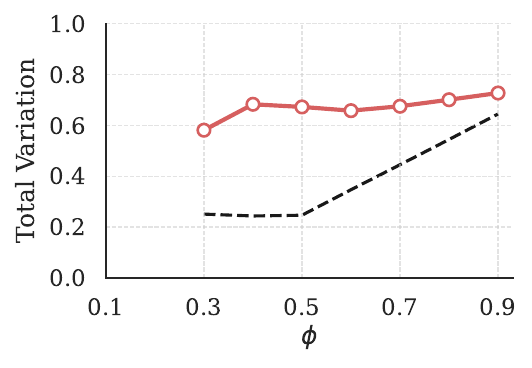}
  \caption{DBLP-G}
\end{subfigure}
\begin{subfigure}[b]{.14\textwidth}
  \centering
  \includegraphics[width=1\linewidth]{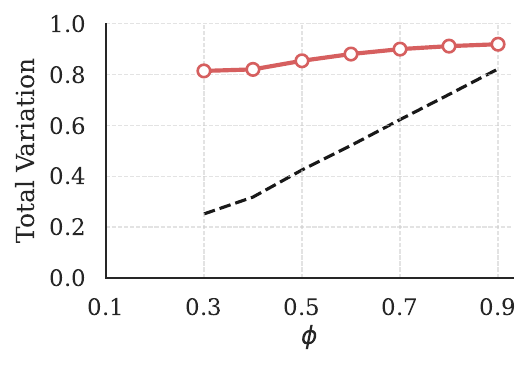}
  \caption{DBLP-P}
\end{subfigure}
\begin{subfigure}[b]{.14\textwidth}
  \centering
  \includegraphics[width=1\linewidth]{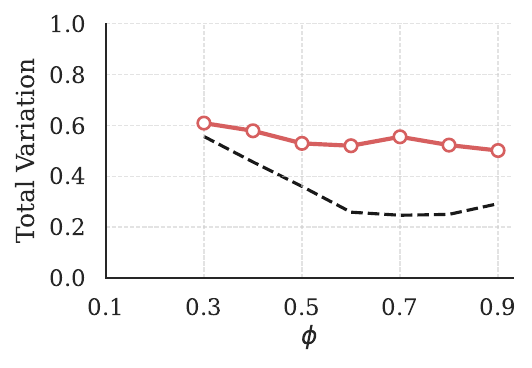}
  \caption{Twitter}
\end{subfigure}
\begin{subfigure}[b]{.15\textwidth}
  \centering
  \includegraphics[width=1\linewidth]{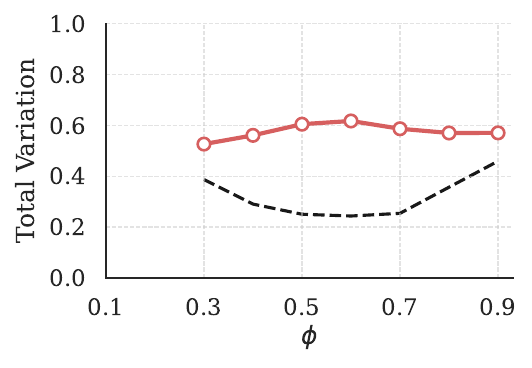}
  \caption{Deezer}
\end{subfigure}
\begin{subfigure}[b]{.15\textwidth}
  \centering
  \includegraphics[width=1\linewidth]{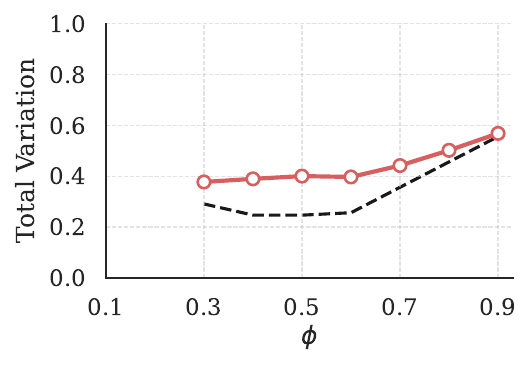}
  \caption{TwitchPTBR-$2$}
\end{subfigure}
\begin{subfigure}[b]{.15\textwidth}
  \centering
  \includegraphics[width=1\linewidth]{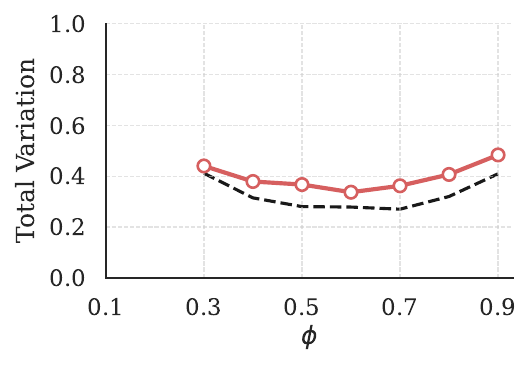}
  \caption{TwitchGAMERS}
\end{subfigure}
\caption{Showcasing the performance, in terms of TV, of the FairRARI and Post-Processing solutions on the $\boldsymbol{\phi}$-sum + $\boldsymbol{\alpha}$-min-fair problem, on different datasets with $2$ groups.}
\label{fig:fairness_min_vs_phi:app}
\end{figure}

\begin{figure}[ht!]
\centering
\raisebox{4.4\height}{%
\begin{subfigure}[c]{.09\linewidth}
  \centering
  \includegraphics[width=1\linewidth]{figures/ICML26/OURS_cen_min/phi_vs_TV/legend_.pdf}
\end{subfigure}
}
\begin{subfigure}[b]{.14\textwidth}
  \centering
  \includegraphics[width=1\linewidth]{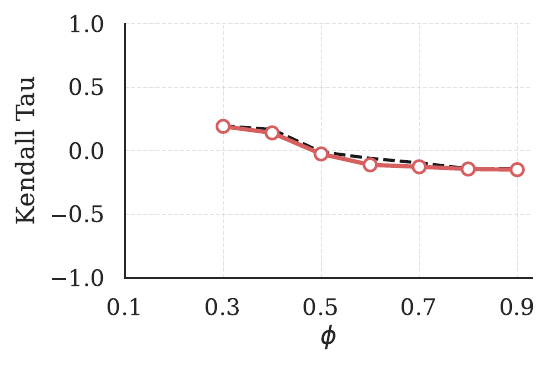}
  \caption{PolBlogs}
\end{subfigure}
\begin{subfigure}[b]{.14\textwidth}
  \centering
  \includegraphics[width=1\linewidth]{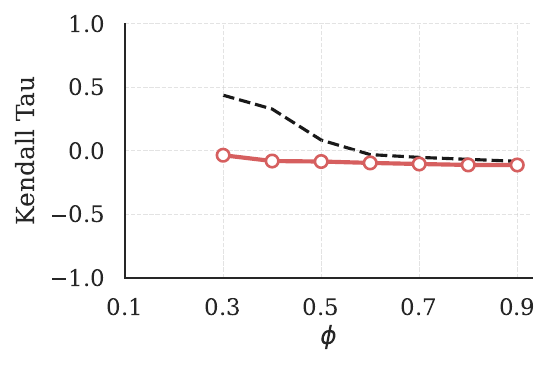}
  \caption{Erdos}
\end{subfigure}
\begin{subfigure}[b]{.14\textwidth}
  \centering
  \includegraphics[width=1\linewidth]{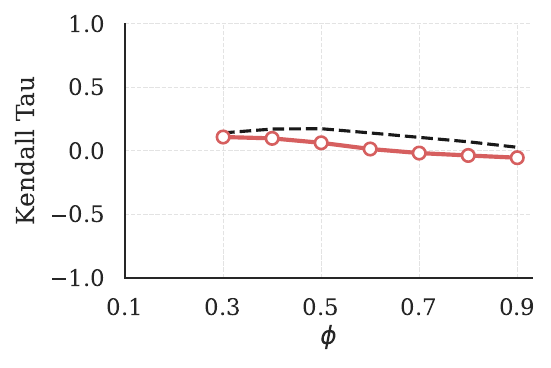}
  \caption{DBLP-G}
\end{subfigure}
\begin{subfigure}[b]{.14\textwidth}
  \centering
  \includegraphics[width=1\linewidth]{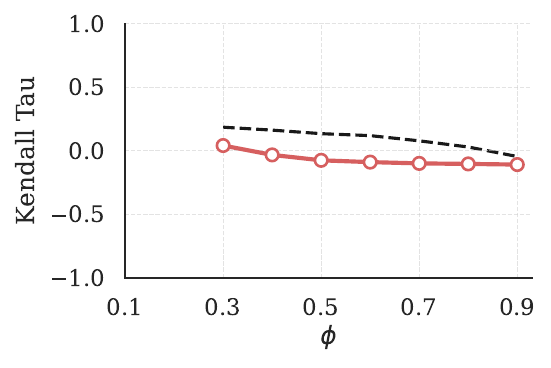}
  \caption{DBLP-P}
\end{subfigure}
\begin{subfigure}[b]{.14\textwidth}
  \centering
  \includegraphics[width=1\linewidth]{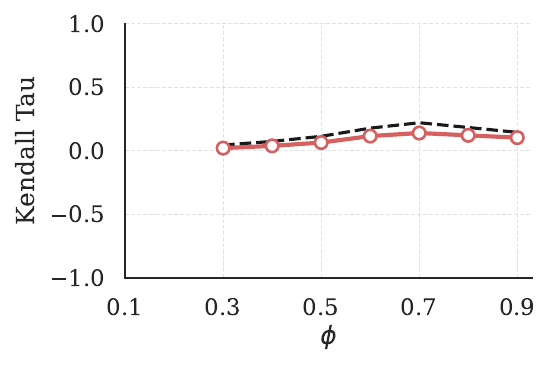}
  \caption{Twitter}
\end{subfigure}
\begin{subfigure}[b]{.14\textwidth}
  \centering
  \includegraphics[width=1\linewidth]{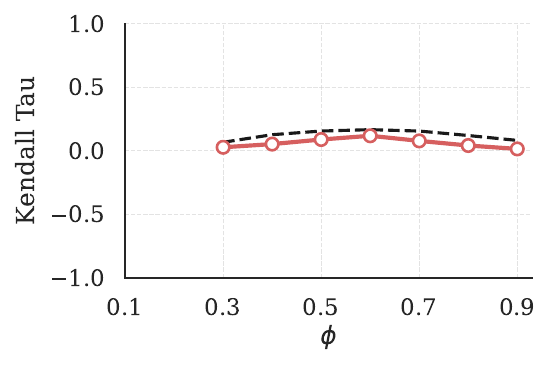}
  \caption{Deezer}
\end{subfigure}
\begin{subfigure}[b]{.15\textwidth}
  \centering
  \includegraphics[width=1\linewidth]{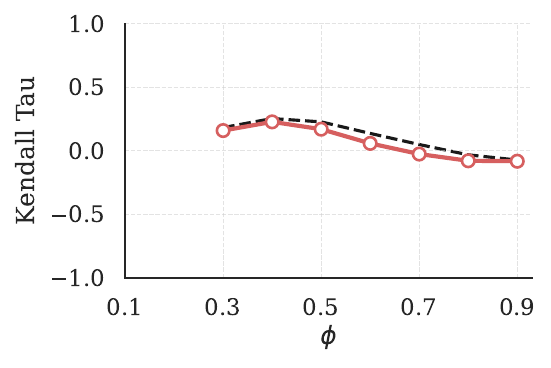}
  \caption{Github}
\end{subfigure}
\begin{subfigure}[b]{.15\textwidth}
  \centering
  \includegraphics[width=1\linewidth]{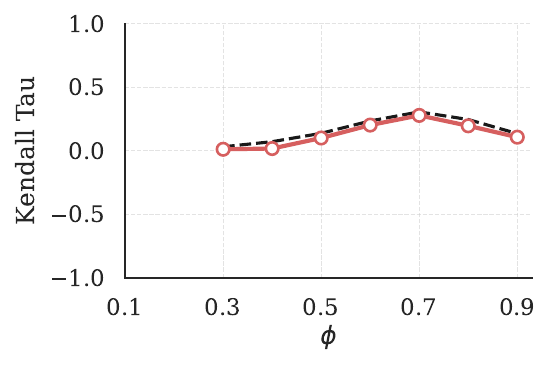}
  \caption{TwitchDE-$2$}
\end{subfigure}
\begin{subfigure}[b]{.15\textwidth}
  \centering
  \includegraphics[width=1\linewidth]{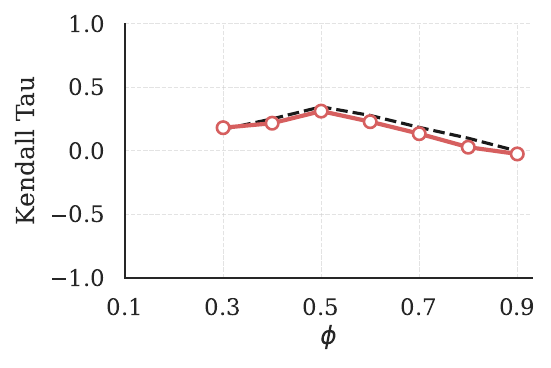}
  \caption{TwitchPTBR-$2$}
\end{subfigure}
\begin{subfigure}[b]{.15\textwidth}
  \centering
  \includegraphics[width=1\linewidth]{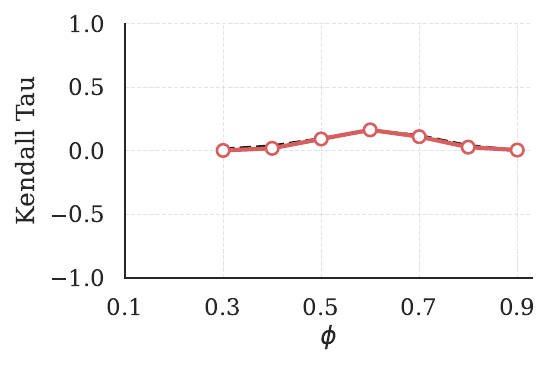}
  \caption{TwitchGAMERS}
\end{subfigure}
\caption{Showcasing the performance, in terms of Kendall Tau coefficient, of the FairRARI and Post-Processing solutions on the $\boldsymbol{\phi}$-sum + $\boldsymbol{\alpha}$-min-fair problem, on different datasets with $2$ groups.}
\label{fig:fairness_min_vs_phi:app:kendall}
\end{figure}


\end{document}